\crefname{subsection}{subsection}{subsections}
\tikzset{cross/.style={cross out, very thick, draw=black, minimum size=2*(#1-\pgflinewidth), inner sep=0pt, outer sep=0pt},
cross/.default={5pt}}
\setlist[enumerate]{leftmargin=.5in}
\setlist[itemize]{leftmargin=.5in}
\newtheorem{theorem}{Theorem}
\newtheorem{proposition}[theorem]{Proposition}
\newtheorem{lemma}[theorem]{Lemma}
\newtheorem{corollary}[theorem]{Corollary}
\newtheorem{definition}[theorem]{Definition}
\newtheorem{example}[theorem]{Example}
\newtheorem{dataassumption}[theorem]{Data Assumption}
\newtheorem{definitiono}[theorem]{Optimization Problem}
\numberwithin{theorem}{section}
\newenvironment{remark}
  {\pushQED{\qed}\remarkx}
  {\popQED\endremarkx}
\newcommand{\R}{\mathbb{R}}
\newcommand{\Loss}{\textsf{L}}
\newcommand{\Joss}{\textsf{J}}
\newcommand{\LCLIPN}{\Loss^{N}_{\mathsf{clip}}}
\newcommand{\LCONDN}{\Loss^{N}_{\mathsf{cond}}}
\newcommand{\LCOND}{\Loss_{\mathsf{cond}}}
\newcommand{\LCONDM}{\Loss_{\mathsf{cond,mmd}}}
\newcommand{\LJOINT}{\Loss_{\mathsf{joint}}}
\newcommand{\LJOINTM}{\Loss_{\mathsf{joint,mmd}}}
\newcommand{\JCOND}{\Joss_{\mathsf{cond}}}
\newcommand{\JCONDM}{\Joss_{\mathsf{cond,mmd}}}
\newcommand{\JCONDD}{\Joss_{\mathsf{cond,D}}}
\newcommand{\LCONDD}{\Loss_{\mathsf{cond,D}}}
\newcommand{\JJOINT}{\Joss_{\mathsf{joint}}}
\newcommand{\JJOINTD}{\Joss_{\mathsf{joint,D}}}
\newcommand{\tcn}{\theta^{N}_{\mathsf{clip}}}
\newcommand{\tcondn}{\theta^{N}_{\mathsf{cond}}}
\newcommand{\tcond}{\theta_{\mathsf{cond}}}
\newcommand{\tjoint}{\theta_{\mathsf{joint}}}
\newcommand{\tjointm}{\theta_{\mathsf{joint,mmd}}}
\newcommand{\JFINE}{\Joss_{\mathsf{fine}}}
\newcommand{\LFINE}{\Loss_{\mathsf{fine}}}
\newcommand{\tfine}{\vartheta_{\mathsf{fine}}}
\newcommand{\LMNIST}{\Loss_{\mathsf{mnist}}}
\newcommand{\Nt}{N_{\mathsf{test}}}
\newcommand{\munew}{\widehat{\mu}}
\newcommand{\phinew}{\widehat{\phi}}
\newcommand{\munef}{\widetilde{\mu}}
\newcommand{\phinef}{\widetilde{\phi}}
\newcommand{\mufinetuning}{\widehat{\mu}}
\newcommand{\xhat}{\widehat{\x}}
\newcommand{\yhat}{\widehat{\y}}
\newcommand{\nunew}{\widehat{\pmeas}}
\newcommand{\nunef}{\widetilde{\pmeas}}
\newcommand{\dkl}{\mathsf{D}_{\mathsf{kl}}}
\newcommand{\Div}{\mathsf{D}}
\newcommand{\Tr}{\textrm{Tr}}
\newcommand{\x}{u}
\newcommand{\y}{v}
\newcommand{\rr}{w}
\newcommand{\z}{z}
\newcommand{\g}{g}
\newcommand{\h}{h}
\newcommand{\normg}{\bar{\g}}
\newcommand{\X}{U}
\newcommand{\Y}{V}
\newcommand{\pmodel}{\rho}
\newcommand{\pmeas}{\nu}
\newcommand{\mean}{m}
\newcommand{\cov}{\mathcal{C}}
\newcommand{\cR}{\mathcal{W}}
\newcommand{\cU}{\mathcal{\X}}
\newcommand{\cV}{\mathcal{\Y}}
\newcommand{\cX}{\mathcal{\X}}
\newcommand{\cY}{\mathcal{\Y}}
\newcommand{\bT}{\mathbb{T}}
\newcommand{\bR}{\mathbb{R}}
\newcommand{\bS}{\mathbb{S}}
\newcommand{\dx}{n_\x}
\newcommand{\dy}{n_\y}
\newcommand{\de}{n_e}
\DeclareMathOperator*{\argmax}{arg\,max}
\DeclareMathOperator*{\argmin}{arg\,min}
\begin{document}

\begin{frontmatter}

\title{A Mathematical Perspective On Contrastive Learning}

\runtitle{Contrastive Learning}

\begin{aug}
\author[a1]{\fnms{Ricardo} \snm{Baptista}%
\ead[label=e1]{ricarsb@amazon.com}},
\author[a1,a2]{\fnms{Andrew} \snm{Stuart}%
\ead[label=e2]{andrxstu@amazon.com}},
\and
\author[a1]{\fnms{Son} \snm{Tran}%
\ead[label=e3]{sontran@amazon.com}}
\runauthor{Baptista, Stuart, Tran}
\address[a1]{Stores Foundational AI, Amazon, Palo Alto CA 94301 and Pasadena CA 91125\\
\printead*{e1}, \printead*{e2}, \printead*{e3}}
\address[a2]{Computing and Mathematical Sciences, California Institute of Technology, Pasadena CA 91125\\}
\end{aug}

\begin{abstract}
Multimodal contrastive learning is a methodology for linking different data modalities; the canonical example is linking image and text data. The methodology is typically framed as the identification of a set of encoders, one for each modality, that align representations within a common latent space. In this work, we focus on the bimodal setting and interpret contrastive learning as the optimization of (parameterized) encoders that define conditional probability distributions, for each modality conditioned on the other, consistent with the available data. This provides a framework for multimodal algorithms such as crossmodal retrieval, which identifies the mode of one of these conditional distributions, and crossmodal classification, which is similar to retrieval but includes a fine-tuning step to make it task specific.

The framework we adopt also gives rise to crossmodal generative models. This probabilistic perspective suggests two natural generalizations of contrastive learning: the introduction of novel probabilistic loss functions, and the use of alternative metrics for measuring alignment in the common latent space. We study these generalizations of the classical approach in the multivariate Gaussian setting. In this 
context we view the latent space identification as a low-rank matrix approximation problem. This allows us to characterize the capabilities of loss functions and alignment metrics to approximate natural statistics, such as conditional means and covariances; doing so yields novel variants on contrastive learning algorithms for specific mode-seeking and for generative tasks. The framework we introduce is also studied through numerical experiments on multivariate Gaussians, the labeled MNIST dataset, %
and on a data assimilation application arising in oceanography.

\end{abstract}

\begin{keyword}
\kwd{Multimodal data analysis}
\kwd{Contrastive learning}
\kwd{Conditional distributions}
\kwd{Latent space}
\kwd{Low-rank approximations}
\end{keyword}

\end{frontmatter}

\section{Introduction} \label{sec:introduction} 

Contrastive learning is a computational methodology for identifying mechanisms by which different data modalities, derived from a common underlying reality, can communicate with one another; the canonical example is linking image and text data. The goal of this paper is to formulate the problem of contrastive learning in the language of probability measures and to use this framing both to shed light on existing algorithms and to suggest novel variants of them. The subject is illustrated with explicit theory for Gaussians;
by numerical experiments for Gaussians; by demonstrating how %
image classification fits within our general framework; and by a novel application of the
methodology to a data assimilation problem arising in the ocean sciences. 
In Subsection \ref{ssec:Context} we set our work in context, giving two illustrative examples in Subsection \ref{ssec:IE}.
Subsection \ref{ssec:CPO} describes our contributions and provides an overview of the paper. A literature review may be found in Subsection \ref{ssec:LR}.

\subsection{Context} \label{ssec:Context}
The recent advances in artificial intelligence have been driven to a large extent by rapidly increasing acquisition of data, by innovations in the design of novel function classes for approximation (architectures), by novel training algorithms (optimizers) to identify suitable candidates from these function classes, and by advances in computer hardware and software. A fundamental challenge in the use of data is to combine information from multiple sources, often known as modalities, such as language, audio, image and video,  that are not necessarily represented in the same spaces. Multimodal learning addresses this fundamental problem. A successful and widely adopted methodology in this arena is \emph{contrastive learning.} In the bimodal context, contrastive learning is a way of aligning two different data modalities. This paper focuses on the study of contrastive learning in the bimodal setting.

Consider the two different modalities as elements $u$ and $v$, from spaces  $\cU, \cV$.
Contrastive learning is based on observing data pairs from $\cU \times \cV$ and aims to find a common latent space in
which the pairs can be aligned. In contrast to supervised learning, which aims to find a map from $\cU$ to $\cV$ or vice versa, linking the pairs, contrastive learning focuses on learning about the conditional distributions on $u|v$ and on $v|u.$ A useful way of conceptualizing the problem is to consider
$u$ and $v$ to be distinct noisy and indirect measurements of an element $\rr$ in a third space $\cR$:
\begin{subequations}
    \begin{align}
        u & =f_u(\rr,\eta_u), \label{eq:I_ucond}\\
        v & =f_v(\rr,\eta_v). \label{eq:I_vcond}
    \end{align}
\end{subequations}
Here $f_u(\cdot,\eta_u)$ (respectively $f_v(\cdot,\eta_v)$) captures the measurement process leading to an element in $\cU$ (resp. $\cV$) and
$\eta_u$ (resp. $\eta_v$) represents the measurement noise that may enter this process. 
Assuming that $\rr$ is a random variable in $\cR$, this set-up implies a joint distribution $\mu(\x,\y)$ on $\mathcal{\X} \times \mathcal{\Y}.$
In the bimodal setting, contrastive learning is based on observing data pairs from this joint distribution and aims to find a common latent space in
which the conditional distributions for $u|v$ and $v|u$ can be readily computed. The space $\cR$ and the probability measure on it will play 
no explicit role in our developments of this subject, but it remains a useful conceptual underpinning of contrastive learning 
as a source for the implied nontrivial conditional structure existing between the two modes with joint distribution $\mu.$

\subsection{Illustrative Examples}
\label{ssec:IE}

\paragraph{Images and Text} Arguably, the canonical example of bimodal information is that of image-text pairs \cite{radford2021learning}. Allowing these two modalities to communicate is at the heart of problems such as document retrieval \cite{miech2020end, jia2021scaling}, text-guided image generation \cite{rombach2022high, nichol2021glide} and multimodal representation learning~\cite{chen2020simple, grill2020bootstrap, chen2021exploring}. To place this problem in our general framework, we consider the setting where $\x$ is a pixelated colour (RGB) image,  represented as a flattened vector in $\cU:=\R^{\dx}$ where $\dx=3p^2$, with $p$ being the number of pixels in each of the two image dimensions and the multiplier $3$ accounting for the three RGB channels.
On the other hand text may be represented as a sequence over a finite alphabet $\mathfrak{A}$. Choice of the alphabet is critical to empirical success
and byte-pair encoding \cite{gage1994new} is commonly adopted. In practice a finite length $N$ is imposed on the sequence and so $\cV=\{f: [[1,N]] \to \mathfrak{A}\}.$\footnote{We employ the notation $[[{1,N]]}:=\{1, \cdots, N\}.$}
If we view pixelated image $u \in \cU$ and text $v \in \cV$ as, respectively, photographic and textual descriptions of a common real scene $\rr \in \cR$,
then we expect an implied joint distribution on image-text pairs in $\cU \times \cV$, with non-trivial conditional dependencies.

This set-up can be generalized in numerous directions. Perhaps most important is to highlight that multimodal variants, with more than two modalities, arise naturally and can be formulated, analyzed and studied similarly to the bimodal setting in this paper. In particular we can consider, for example, video and audio data, derived from the same common real scene. Or we have may have access to %
textual data not present in the underlying real scene, but that is implied by or complementary to it.

\paragraph{Eulerian and Lagrangian Visualization of Fluid Flow}

A common problem in oceanography is to simultaneously use direct observation of ocean currents (the \emph{Eulerian} picture \cite{bennett1992inverse}) and indirect observations through the transport of objects moving in those currents (the \emph{Lagrangian} picture \cite{ide2002lagrangian}); see \cite{cotter2009bayesian}. It is therefore of interest to ask how to align these two different modalities. The problem of recovering Eulerian velocity fields from Lagrangian information is often referred to as \emph{Lagranian data assimilation} \cite{kuznetsov2003method}.

To formulate the problem as one in the framework of contrastive learning we consider an idealized setting of flow in a periodic geometry.
Let $\bT^d$ denote the $d-$dimensional torus, and consider a velocity field 
$\rr \in \cR:=C^1(\bT^d,\bR^d).$
Eulerian observations of an element in $\cR$ are defined by $f_\x(\cdot, \eta_\x): \cR \to \bR^{d \times J_\x}$ where
$$f_\x(\rr,\eta_u)=\{\rr(x_j)+\eta_u^{j}\}_{j=1}^{J_\x},$$ 
defined through a specified set of observation points $x_j \in \bT^d,$ $j=1,\cdots, J_u,$ and subject to i.i.d.\thinspace additive noise $\eta_\x^{j}.$
Lagrangian observations are defined by considering trajectories $z \in C^1([0,T]; \bR^d)$ governed by the ordinary differential equation
$$\dot{z}=\rr(z), \; z(0)=z_0,$$
and defining $f_v(\cdot, \eta_u): \cR \to \bR^{d \times J_\y}$ by
$$f_\y(\rr,\eta_v)=\{z(t_j)+\eta_v^{j}\}_{j=1}^{J_\y};$$
thus, map $f_\y$ is defined through a specified set of observation times $t_j \in [0,T],$ $j=1,\cdots, J_\y$ and i.i.d.\thinspace additive noise $\eta_\y^{j}.$
The probability measure on $\cR$ can be defined, for example, by a Gaussian random field and the additive noises $\eta_u, \eta_v$ can
also be chosen as Gaussian. In this Gaussian setting $\x \in \cU = \R^{d \times J_\x}$ given by (\ref{eq:I_ucond}) is also Gaussian; and it is correlated to non-Gaussian variable $\y \in \cV = \R^{d \times J_\x}$ given by
(\ref{eq:I_vcond}). %
This creates a joint distribution $\mu(\x,\y)$ on the space $\cU \times \cV$ with non-trivial conditional structure.

This set-up can be generalized in a number of ways, including observing multiple Lagrangian trajectories,
making the noise structure more complex, for example by adding Brownian noise to the equations for trajectory $z$,
and by considering time-dependent velocity fields. Indeed, in Subsection \ref{ssec:fluid_flow}, we will consider an example
in which the velocity field is time-dependent and $f_\x$ encodes linear functionals, different from pointwise evaluations, of the velocity fields.

\subsection{Contributions and Paper Overview}
\label{ssec:CPO}

We focus on the bimodal setting in this paper, but the reader will readily generalize to multimodal settings with three or more modalities.
We make the following contributions to our overarching goal, namely the formulation of contrastive learning in a mathematical framework, shedding light on existing algorithms, enabling analysis of the algorithm in the Gaussian setting, and suggesting novel variants of those algorithms:
\begin{enumerate}[label=(C\arabic*)]    
    \item We formulate contrastive learning as determination of an underlying joint distribution on the two data modalities, defined through a change of measure, a \emph{tilting}, from the product of the marginals of each modality.
    \item We introduce new \emph{probabilistic loss functions}, including matching the joint, either conditional, or the sum of two conditionals; standard contrastive learning corresponds to the last case.
     \item We introduce new tiltings which subsume the standard contrastive learning case, based on cosine similarity, as a special case.
    \item We analyze the resulting classes of new contrastive learning methodologies in the Gaussian setting, formulating latent space identification in terms of low-rank approximation; we shed light on the capabilities of different approaches in terms of their ability to define point estimators and as generative models.
    \item We demonstrate that the basic contrastive learning methodology can be applied to problems arising in science and engineering, such as
    Lagrangian data assimilation; and we show that the generalized contrastive learning methodology applies to data science applications including retrieval and MNIST digit classification. 
\end{enumerate}

In \Cref{sec:CL} we define contrastive learning in the bimodal setting, and consider the limit of infinite data, addressing contribution (C1). \Cref{sec:PP} adopts a probabilistic interpretation of the bimodal contrastive learning problem; we introduce generalizations
of the standard problem, both through the form of tilting and the form of loss (i.e., objective) function  defining the learning problem, 
addressing contributions (C2) and (C3). \Cref{sec:RAC} formulates retrieval and classification using the framework developed
in the two preceding sections, laying the groundwork for later numerical experiments, illustrating parts of contribution (C5). In \Cref{sec:GS} we analyze this generalized class of contrastive learning problems in the Gaussian setting, shedding light
on the capabilities of different approaches---contribution (C4).
The supplementary meterial in Section \ref{sec:experiments} is devoted to contribution (C5): we provide 
numerical experiments which: (i) validate the Gaussian theory developed in this paper; (ii) which showcase, in the context of image-text crossmodal problems arising from the MNIST dataset, the generalized methodology developed in this paper;  (iii) show that the Lagrangian data assimilation problem can be solved
in a purely data-driven fashion using the cross-modal approaches studied in this paper.

\subsection{Literature Review}
\label{ssec:LR}

Contrastive learning is often referred to as \emph{self-supervised learning}, distinguishing it from supervised learning~\cite{bengio2017deep} and from unsupervised learning~\cite{von2007tutorial}. 
Contrastive self-supervised learning approaches have enabled the comparison of data modalities without explicitly labeled data, but using only pairs of related samples. In the context of language-image understanding,~\cite{jia2021scaling, radford2021learning, ramesh2022hierarchical} showed that contrastive learning approaches can learn representations for visual and text data obtained from the internet. Moreover, the pretrained representations are immediately useful for downstream tasks, such as image retrieval and classification, without additional model training. Recently, these pretrained models have been applied directly, without task-specific training data, for various applications, including embedding text prompts for image generation in diffusion models~\cite{rombach2022high} and to embed text and/or image inputs in multimodal language models~\cite{liu2024visual}. Beyond their use in language, contrastive learning is applied in computer vision tasks based on other modalities including the alignment of text and video~\cite{fang2021clip2video, luo2022clip4clip}, text and audio~\cite{elizalde2023clap}, and 3D scene generation~\cite{wang2022clip}. It has also proven useful for image classification, object detection~\cite{joseph2021open}, or semantic segmentation~\cite{kirillov2023segment}. Scaling studies have investigated the generalization behavior of contrastive models with increasing numbers of parameters~\cite{Cherti_2023_CVPR}. While contrastive learning aims to learn low-dimensional encodings, these can also be used to learn a mapping from the embedding space back to the original data. One such approach was considered in~\cite{yu2022coca} to do zero-shot image-captioning using CLIP.

Several objectives for contrastive representation learning have been proposed to match related samples, often referred to as positive samples, and to bring unrelated samples, often referred to as negative samples, farther apart in the latent space. These include objectives based on the Euclidean norm between the mapped embeddings~\cite{chopra2005learning}, or a metric between the sample covariance of sample pairs~\cite{zbontar2021barlow}. 
Several modifications have been proposed to the objective function in CLIP to improve computational efficiency: one such is SigClip~\cite{zhai2023sigmoid}, which replaces the softmax with a sigmoid layer and doesn't need two passes over the data to evaluate the loss function; another is  ClipLITE that only takes one pass over the training data to compute negative alignments. When supervised datasets are not available, data augmentation approaches generate multiple representations of the same image. For example Barlow twins~\cite{zbontar2021barlow}, `Bootstrap Your Own Latent'~\cite{grill2020bootstrap}, and SimCLR uses distorted representations of the same image. Rather than starting from a joint distribution of data from two modalities, self-supervised learning starts an unlabeled dataset from one distribution and does data augmentation to generate new samples. Recently, other generalizations of the loss for contrastive learning include learning the tilting from the recovery of a cost matrix using inverse optimal transport~\cite{shi2024ot}. Historically, InfoNCE~\cite{wu2020mutual} also showed how to identify embeddings by maximizing mutual information (MI), which upper bounds the contrastive loss underlying CLIP. However,~\cite{tschannen2019mutual} showed that maximizing MI directly is not correlated with downstream predictive accuracy when the learned embeddings are used for classification. The paper \cite{yue2023hyperbolic} also used a new alignment metric motivated by hyperbolic geometry that allows encoders to map into a normalized sphere, rather than only on the surface. 

From a theoretical perspective, previous analysis has studied the optimizers for  contrastive self-supervised learning problems. In particular,~\cite{ziyinshapes} studied loss landscapes and showed in linear-Gaussian settings that the learned representations may collapse to only capture low-dimensional subspaces. Later,~\cite{chen2020simple} showed that it is necessary to use large batch sizes to learn useful representations and prevent collapse. Recent work has shown that the choice of contrastive loss is crucial to optimize both alignment and uniformity of the resulting distribution of normalized features on the hypersphere. In terms of generalization,~\cite{haochen2021provable} proved that contrastive and self-supervised spectral losses can lead to generalization  guarantees. Concerning statistical guarantees, \cite{chen2022we} considers the bias in self-supervised learning from small mini-batches and proposes an augmented variable technique that is unbiased.

In this work our analysis is focused on the solution of contrastive learning in the Gaussian setting. Here, the standard contrastive learning methodology %
corresponds to matching conditionals of a joint multivariate Gaussian distribution. In the context of Bayesian inverse problems, these Gaussian conditional approximations have been studied in~\cite{spantini2015, hangcheng2024}. 

\subsection{Notation} \label{ssec:notation}

The formulation, analysis and numerical experiments in this paper are grounded in the metric space of probability measures 
on product space $\cU \otimes \cV.$ We will denote by $\mu$
the measure on this space from which paired data points, the basis of training, are derived; and we will denote by $\nu(\cdot; \theta)$
the family of probability models, on the same space, from which a learned approximation of $\mu$ is to be found. We will denote the $\x$ marginal and $\x|\y$ conditional measures of $\mu$ as $\mu_{\x}$ and $\mu_{\x|\y}$, respectively, with analogous notation when the roles of $\x$ and $\y$ are swapped, and when applied to measure $\nu$ rather than $\mu.$ We will also consider various objective functions, constructed to measure closeness of $\nu$, parameterized by $\theta$, to $\mu$, namely
$\JCOND$,
$\JCONDD$,
$\JJOINT$ and
$\JJOINTD$. Here $\Div$ refers to a chosen divergence; when $\Div$ is not specified, it is considered to be the Kullback-Leibler (KL) divergence. Subtracting constants independent of $\theta$ from these objectives, and in two cases making the specific choice of divergence $\Div$ to be the maximum mean discrepancy rather than the KL divergence, leads to loss functions
$\LCOND$,
$\LCONDM$,
$\LJOINT$ and
$\LJOINTM$.
The preceding objectives and loss functions are all defined in the population limit (infinite data). Our starting point, however, is the
standard contrastive loss function $\LCLIPN$, where $N$ denotes the size of the empirical data set. This is related, by a $\theta-$independent, but $N-$dependent, constant to $\LCONDN$. And $\LCONDN$ has a well-defined continuum limit, leading to $\LCOND$ and thence to $\JCOND$ and the zoo of population level objectives and losses listed above. Finally we will discuss fine-tuning of classification methods, leading to
$\LFINE$ and $\LFINE^K.$

We let  $|\cdot|$ and $\langle \cdot, \cdot \rangle$ denote the standard Euclidean norm and inner-product on $\R^{\de}$.
Given positive definite matrix $A$, we define weighted version of the standard Euclidean inner-product
by $\langle w, w' \rangle_{A}=\langle w, A^{-1} w' \rangle$; the induced weighted Euclidean norm is
denoted by $|\cdot|_{A}.$ We will denote by $A_r$ the optimal rank-r approximation of the matrix $A$ with respect to the Frobenius-norm---the low-rank truncation of the singular value decomposition of $A$. Lastly, we highlight the previously deployed notation
$[[{1,N]]}:=\{1, \cdots, N\}.$

\section{Contrastive Learning}
\label{sec:CL}

Contrastive methodologies learn to associate two random variables $\x \in \mathcal{\X},\y \in \mathcal{\Y}$, living in typically different spaces $\mathcal{\X},\mathcal{\Y}$. We assume that both spaces are equipped with a sigma-algebra, allowing probability measures to be
defined on them and on their product space. The methodology proceeds by learning two encoders $\g_\x \colon \mathcal{\X} \rightarrow \R^{\de}$ and $\g_\y \colon \mathcal{\Y} \rightarrow \R^{\de}$ that map to a common latent space of user-specified dimension $\de$; typically $\de$ is much smaller than the size of $\mathcal{\X}$ and $\mathcal{\Y}$. The training of the encoders is governed by minimizing a contrastive learning objective that rewards data pairs defined by the joint distribution of the two random variables and penalizes unrelated data pairs.

In standard implementations of contrastive learning, the outputs of the encoders are normalized by the $L^2$ norm of their output so that only the alignment of the outputs is compared, rather than their magnitude. Thus, the encoders used to align the two modalities are
\begin{equation}
\label{eq:normalized}
\normg_\x(\x) = \frac{\g_\x(\x)}{\left|\g_\x(\x)\right|}, \qquad \normg_\y(\y) = \frac{\g_\y(\y)}{\left|\g_\y(\y)\right|},
\end{equation}
where $|\cdot|$ denotes the Euclidean norm on $\R^{\de}$ so that $\normg_\x\colon \mathcal{\X} \to \bS^{\de-1}$ and $\normg_\y\colon \mathcal{\Y} \to \bS^{\de-1}$.
For example, the encoders may be constructed using a text or vision transformer for text and image data, respectively. Moreover, they may involve pretrained models that are composed with projections (e.g., linear maps), which are then learned so as to embed their outputs into a space of the same common dimension where they are aligned. 
We make the following standard assumption about the data used to train encoders that reflect the desired alignment.

\begin{dataassumption} \label{as:paired_data} The available data comprises pairs $\{(\x^i,\y^i)\}_{i=1}^N$ drawn i.i.d.\thinspace from a joint distribution $\mu(\x,\y)$ on $\mathcal{\X} \times \mathcal{\Y}.$
\end{dataassumption}

The aim of contrastive learning is to maximize the alignment between related data pairs, in the common latent space, and to minimize 
the alignment between unrelated pairs. The related pairs are draws from the joint distribution $\mu(\x,\y)$; unrelated pairs may be 
considered to be drawn independently from the two marginal distributions for $\x$ and $\y$, which we denote by $\mu_\x(\x)$ 
and $\mu_\y(\y)$, respectively. 
Given $N$ samples from the joint distribution, as in \Cref{as:paired_data}, we have the following empirical measures for the joint and  marginal  distributions
\begin{equation}
\label{eq:empirical}   
\mu^N \coloneqq \frac{1}{N}\sum_{\ell=1}^N \delta_{(\x^\ell,\y^\ell)}, \quad
\mu_\x^N \coloneqq \frac{1}{N}\sum_{\ell=1}^N \delta_{\x^\ell}, \quad \mu_\y^N \coloneqq \frac{1}{N}\sum_{\ell=1}^N \delta_{\y^\ell}. 
\end{equation}
We also define
\begin{equation}
\label{eq:empirical2}   
\mathcal{\X}^N:=\{\x^i\}_{i=1}^N \subset \mathcal {\X}, \quad \mathcal{\Y}^N:=\{\y^i\}_{i=1}^N \subset \mathcal {\Y}.
\end{equation}

In Subsection \ref{sec:learning} we describe the standard contrastive learning objective defined by the bimodal data given in~\Cref{as:paired_data}.
Subsection \ref{ssec:pop} describes a population learning objective, obtained in the limit $N \to \infty.$

\subsection{Discrete Learning Problem} \label{sec:learning}

Having chosen an embedding dimension $\de$, we introduce parameter $\theta = (\theta_\x,\theta_\y) \in \R^p$ defining the
parameterized encoder models $\normg_\x(\cdot;\theta_\x)$ and $\normg_\y(\cdot;\theta_\y)$, respectively. 
We define the conditional probability of data $\x \in \mathcal{\X}^N$, given $\y^i \in \mathcal{\Y}^N$, by
\begin{subequations} \label{eq:softmax_probability0}
\begin{align}
p(\x|\y^i;\theta)  \coloneqq  & \frac{\exp\bigl(\langle \normg_\x(\x;\theta_\x), \normg_\y(\y^i;\theta_\y) \rangle /\tau)}{\sum_{j=1}^N \exp\bigl(\langle \normg_\x(\x^j;\theta_\x), \normg_\y(\y^i;\theta_\y) \rangle /\tau\bigr)},%
\end{align}
\end{subequations}
where $\tau > 0$ is a hyperparameter, often referred to as a \emph{temperature}.
By symmetry, we may also define the conditional probability of data $\y \in \mathcal{\Y}^N$, given $\x^k \in \mathcal{\X}^N$, by
\begin{subequations} \label{eq:softmax_probability2}
\begin{align}
p(\y|\x^k;\theta) \coloneqq  & \frac{\exp\bigl(\langle \normg_\x(\x^k;\theta_\x), \normg_\y(\y;\theta_\y) \rangle /\tau)}{\sum_{j=1}^N \exp\bigl(\langle \normg_\x(\x^k;\theta_\x), \normg_\y(\y^j;\theta_\y) \rangle /\tau\bigr)},%
\end{align}
\end{subequations}

\begin{remark}
\label{rem:softmax}    
The specific construction of probabilities using exponentiation and normalization is often referred to as the $\texttt{softmax}$ operation.
\end{remark}

\begin{remark}
\label{r:N}
We note that $Np(\cdot|\y^i;\theta)$ may also be viewed as a density with respect to the discrete measure $\mu_\x^N$ on $ \mathcal{\X}^N$: 
reweighting $\mu_\x^N$ by $Np(\cdot|\y^i;\theta)$ gives another probability measure on $\mathcal{\X}^N.$ Likewise $Np(\cdot|\x^k;\theta)$ may be viewed as a density with respect to the discrete measure $\mu_\y^N$ on $ \mathcal{\Y}^N$.
\end{remark}

The idea behind contrastive learning is to choose parameter $\theta$ so that, when summed over the data set, both $p(\x^i|\y^i;\theta)$ and $p(\y^i|\x^i;\theta)$ are maximized.  The log of the conditional probabilities, also known as \emph{logits}, are commonly used to define an objective function for contrastive learning that maximizes the likelihood of the paired data $\{(\x^i,\y^i)\}_{i=1}^N$, defined by \Cref{as:paired_data}, under the models for the conditional probabilities of $\x|\y$ and $\y|\x$ defined by \eqref{eq:softmax_probability0} and \eqref{eq:softmax_probability2}. This line of reasoning leads 
to the \emph{cross-entropy loss}
\begin{equation}
\label{eq:LN}
\LCLIPN(\theta) \coloneqq -\frac{1}{2}\left[\frac{1}{N} \sum_{i=1}^N \log p(\x^i|\y^i;\theta) + \log p(\y^i|\x^i;\theta) \right].
\end{equation}
The optimal parameter $\tcn$ minimizes this loss. Rewriting in terms of expectations we have the following optimization problem:
\begin{definitiono}
\label{do:LN2}
\begin{subequations}
\label{eq:LN2}
\begin{align}
\LCLIPN(\theta) & = -\frac{1}{2}\mathbb{E}_{(\x,\y) \sim \mu^N} \Bigl[\log p(\x|\y;\theta) + \log p(\y|\x;\theta) \Bigr],\\
\tcn & \in \argmin_{\theta \in \R^p} \, \LCLIPN(\theta).
\end{align}
\end{subequations}
\end{definitiono}

We now make several remarks which help in the interpretation of the  contrastive learning optimization problem \eqref{eq:LN2} for $\theta.$
We first note that the definition of the conditional probabilities uses \emph{cosine similarity}:

\begin{remark} For normalized encoders, i.e., $|\normg_\x(\x)| = |\normg_\y(\y)| = 1,$ the inner product satisfies
$$\langle \normg_\x(\x), \normg_\y(\y) \rangle = |\normg_\x(\x)||\normg_\y(\y)|\cos(\alpha_{\x\y}) = \cos(\alpha_{\x\y}) \in [-1,1],$$ where $\alpha_{\x\y}$ is the angle between the embedding vectors. Hence, the inner product is known as a cosine similarity.
 The cosine similarity is invariant to rescaling of $g_\x$ and $g_\y$ since this does not affect  $\bigl(\normg_\x(\x), \normg_\y(\y)\bigr)$. In addition, the cosine similarity is invariant to joint rotations of the two encoders: for any orthonormal matrix $U \in \R^{\de \times \de}$ it follows that
 $$\langle U \normg_\x(\x), U \normg_\y(\y) \rangle = \langle \normg_\x(\x), U^\top U \normg_\y(\y) \rangle = \langle \normg_\x(\x), \normg_\y(\y) \rangle.$$ 
\end{remark}

We have mentioned previously that contrastive learning has the interpretation of choosing encoders that reward similarity for paired data points
from the two modalities and penalizes unpaired data. The following remark makes this explicit.

\begin{remark} \label{r:wgwan}
Recall that, from \Cref{as:paired_data}, $\{(\x^i,\y^i)\}_{i=1}^N$ denote i.i.d. samples from $\mu.$
Now consider the unpaired data set $\{(\x^i,\y^j)\}_{i=1, j=1, i \ne j}^N$; this may be viewed as constituting i.i.d. samples
from $\mu_\x \otimes \mu_\y$. Now observe that, using (\ref{eq:softmax_probability0}b) and  (\ref{eq:softmax_probability2}b),
\begin{align*}
-\LCLIPN(\theta) & =  \frac{1}{2}\left[\frac{1}{N} \sum_{i=1}^N \log p(\x^i|\y^i;\theta) + \log p(\y^i|\x^i;\theta) \right]\\
&= \frac{1}{N} \sum_{i=1}^N  \langle \normg_\x(\x^i;\theta_\x), \normg_y(\y^i;\theta_y) \rangle/\tau  - \log(N)\\
&\qquad - \frac{1}{2N}\sum_{i=1}^N\log\left(\frac{1}{N}\sum_{j=1}^N \exp\Bigl(\langle \normg_\x(\x^j;\theta_\x),  \normg_\y(\y^i;\theta_\y)\rangle/\tau\Bigr) \right) \\
&\qquad \qquad - \frac{1}{2N}\sum_{i=1}^N\log\left(\frac{1}{N}\sum_{j=1}^N \exp\Bigl(\langle \normg_\x(\x^i;\theta_\x), \normg_\y(\y^j;\theta_\y)\rangle/\tau\Bigr) \right).
\end{align*}
The Optimization Problem \ref{do:LN2} is solved by maximizing $-\LCLIPN(\cdot).$ This is achieved by maximizing the cosine similarities of paired
data in the first (single) sum over $i$ alone; and by simultaneously minimizing the cosine similarities of unpaired
data in the  second and third  (double) sums, over $i$ and $j$ together. This is the process we have been refering to, and will refer to, as alignment of
the two modalities.
\end{remark}

In the next section we discuss a population level version of the loss function. The following observation will help interpret this development.
To this end it is helpful to recall~\Cref{r:N}.

\subsection{Population Level Learning Problem} 
\label{ssec:pop}

The appropriate analogs of \eqref{eq:softmax_probability0} and \eqref{eq:softmax_probability2} in the population limit are given by
\begin{subequations}
\label{eq:continuum_probability}
\begin{align}
\pmodel(\x|\y;\theta) &= \frac{\exp\bigl(\langle \normg_\x(\x;\theta_\x), \normg_\y(\y;\theta_\y) \rangle /\tau\bigr)}{\mathbb{E}_{\x' \sim \mu_\x}\exp\bigl(\langle \normg_\x(\x';\theta_\x),  \normg_\y(\y;\theta_y) \rangle / \tau \bigr)},\\
\pmodel(\y|\x;\theta) &= \frac{\exp\bigl(\langle \normg_\x(\x;\theta_\x), \normg_\y(\y;\theta_\y) \rangle /\tau\bigr)}{\mathbb{E}_{\y' \sim \mu_\y}\exp\bigl(\langle \normg_\x(\x;\theta_\x),  \normg_\y(\y';\theta_y) \rangle / \tau \bigr)}.
\end{align}
\end{subequations}
Here (\ref{eq:continuum_probability}a) defines a density with respect to measure  $\mu_\x$ on $\mathcal{\X}$
and (\ref{eq:continuum_probability}b) defines a density with respect to measure  $\mu_\y$ on $\mathcal{\Y}.$ 
Thus, $\pmodel(\x|\y;\theta)\mu_\x(d\x)$ defines a probability measure on $\mathcal{\X}$ and
$\pmodel(\y|\x;\theta)\mu_\x(d\y)$ defines a probability measure on $\mathcal{\Y}.$
The appropriate analog of \Cref{do:LN2} is then given by: 
\begin{definitiono}
\label{do:L}
\begin{subequations}
\begin{align*}
\LCOND(\theta) & =  -\frac{1}{2}\mathbb{E}_{(\x,\y) \sim \mu} \Bigl[\log \pmodel(\x|\y;\theta) + \log \pmodel(\y|\x;\theta) \Bigr],\\
\tcond & \in \argmin_{\theta \in \R^p} \, \LCOND(\theta).
\end{align*}
\end{subequations}
\end{definitiono}

Following the line of reasoning in \Cref{r:wgwan}, the objective function $\LCOND(\theta)$ may be written as
\begin{align} 
    -\LCOND(\theta)
    &= \mathbb{E}_{(\x,\y) \sim \mu}\Bigl[\langle \normg_\x(\x;\theta_\x), \normg_\y(\y;\theta_\y) \rangle / \tau \Bigr] \notag \\
     &\qquad -\frac{1}{2} \mathbb{E}_{\y \sim \mu_\y}\log \mathbb{E}_{\x' \sim \mu_\x}\Bigl[\exp(\langle \normg_\x(\x';\theta_\x), \normg_\y(\y;\theta_\y)\rangle / \tau) \Bigr] \label{eq:consistent_loss}\\
    &\qquad\qquad -\frac{1}{2} \mathbb{E}_{\x \sim \mu_\x}\log \mathbb{E}_{\y' \sim \mu_\y}\Bigl[\exp(\langle \normg_\x(\x;\theta_\x),  \normg_\y(\y';\theta_\y)\rangle / \tau) \Bigr] \notag 
\end{align}
Replacing the expectations under $\mu, \mu_\x$ and $\mu_\y$  in $\LCOND$ by their empirical versions \eqref{eq:empirical} results in the following empirical risk function
\begin{align}
-\LCONDN(\theta) & = \frac{1}{N} \sum_{i=1}^N \langle \normg_\x(\x^i;\theta_\x), \normg_\y(\y^i;\theta_\y) \rangle / \tau \nonumber \\
&\qquad -\frac{1}{2N}\sum_{i=1}^N\log\left(\frac{1}{N}\sum_{j=1}^N \exp(\langle \normg_\x(\x^j;\theta_\x),  \normg_\y(\y^i;\theta_\y)\rangle / \tau) \right) \nonumber \\ 
&\qquad \qquad  -\frac{1}{2N}\sum_{i=1}^N\log\left(\frac{1}{N}\sum_{j=1}^N \exp(\langle \normg_\x(\x^i;\theta_\x), \normg_\y(\y^j;\theta_\y)\rangle / \tau) \right). \label{eq:lcondn}
\end{align}
The following is a direct consequence of \Cref{r:wgwan}:
\begin{theorem}
\label{t:lcondn}
The objective functions $\LCLIPN(\cdot)$ and $\LCONDN(\cdot)$ differ by $\log(N):$
$$\LCLIPN(\theta) = \LCONDN(\theta) +\log(N).$$
If $\tcondn$ minimizes $\LCONDN(\theta)$ and $\tcn$ solves Optimization Problem~\ref{do:LN2}, then $\tcondn=\tcn$.
\end{theorem}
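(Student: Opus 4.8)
The plan is to prove the identity by a short direct calculation exploiting the structure of the \texttt{softmax} normalization, and then to read off the equality of minimizers as an immediate consequence. The only fact being used is that the denominator in \eqref{eq:softmax_probability0} and \eqref{eq:softmax_probability2} is an \emph{unnormalized} sum of $N$ exponentials, whereas the empirical risk \eqref{eq:lcondn} is written with the normalized averages $\tfrac1N\sum_{j=1}^N\exp(\cdots)$; the discrepancy between a sum and an average of $N$ terms contributes exactly $\log N$ inside each logarithm.

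Concretely, I would write $a_{ij}\coloneqq\langle\normg_\x(\x^i;\theta_\x),\normg_\y(\y^j;\theta_\y)\rangle$ and expand, from \eqref{eq:softmax_probability0}--\eqref{eq:softmax_probability2},
$\log p(\x^i|\y^i;\theta)=a_{ii}/\tau-\log\sum_{j=1}^N e^{a_{ji}/\tau}=a_{ii}/\tau-\log N-\log\bigl(\tfrac1N\sum_{j=1}^N e^{a_{ji}/\tau}\bigr)$,
and symmetrically $\log p(\y^i|\x^i;\theta)=a_{ii}/\tau-\log N-\log\bigl(\tfrac1N\sum_{j=1}^N e^{a_{ij}/\tau}\bigr)$. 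Substituting both into the definition \eqref{eq:LN} of $\LCLIPN$, the two copies of $a_{ii}/\tau$ combine with the $-\tfrac12$ and $\tfrac1N\sum_i$ prefactors to give the term $\tfrac1N\sum_i a_{ii}/\tau$, each log-partition term acquires weight $\tfrac1{2N}$, and the two constants $-\log N$ sum to $-\log N$; comparing the result with the right-hand side of \eqref{eq:lcondn} shows that the three $\theta$-dependent pieces coincide verbatim, so $\LCLIPN(\theta)=\LCONDN(\theta)+\log N$ for all $\theta\in\R^p$. (Equivalently one may simply quote the rewriting of $-\LCLIPN$ already displayed in \Cref{r:wgwan} and match it term by term against \eqref{eq:lcondn}.)

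The second assertion is then immediate: $\LCLIPN$ and $\LCONDN$ differ by the constant $\log N$, which is independent of $\theta$, so they have the same set of global minimizers over $\R^p$; in particular any $\tcondn$ minimizing $\LCONDN$ solves Optimization Problem~\ref{do:LN2}, and conversely. I do not anticipate any real obstacle here, as the statement is essentially a normalization identity; the only things to be careful about are the bookkeeping of the $1/N$ and $1/2$ prefactors and the fact that $a_{ii}/\tau$ is shared by $\log p(\x^i|\y^i;\theta)$ and $\log p(\y^i|\x^i;\theta)$, and, in the conclusion, to phrase the minimizer claim as an equality of $\argmin$ sets rather than of individual points.
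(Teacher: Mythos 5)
Your proposal is correct and follows essentially the paper's own route: the paper proves the theorem by pointing to the expansion of $-\LCLIPN$ in Remark~\ref{r:wgwan}, which is exactly the computation you carry out (extracting the $\log N$ from converting each sum of $N$ exponentials into an average and matching term by term with \eqref{eq:lcondn}), and the equality of minimizers then follows because the two losses differ by a $\theta$-independent constant. The bookkeeping of the $1/N$, $1/2$, and shared $a_{ii}/\tau$ terms in your calculation is accurate, so nothing further is needed.
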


In what follows we work with $\LCOND(\cdot)$, and hence its empirical counterpart $\LCONDN(\cdot)$. The latter is well-defined
in the population loss limit; working in the population loss limit clarifies understanding and is adopted throughout the remainder of
the paper. The presence of the constant $\log(N)$ in Theorem \ref{t:lcondn}, 
which leads to a divergence in the population loss limit of $\LCLIPN(\cdot)$,
relates to the fact that the loss $\LCLIPN(\cdot)$ is not defined via densities with respect to a reference probability measure, but via probabilities. Working with densities with respect to a reference probability measure enables seamless passage between population and empirical representations of the problem.

\section{Probabilistic Perspective}
\label{sec:PP}

In Subsection \ref{ssec:J} we formulate contrastive learning in terms of a minimization problem over probability measures on the joint
space $\mathcal{\X} \otimes \mathcal{\Y}$. This suggests several natural generalizations of contrastive learning; the first
class of such generalizations follow from using different probabilistic loss functions for the joint distribution and the second class from considering different measures of alignment of the two data modalities in the latent space. These two classes of generalizations are
considered in Subsections~\ref{ssec:G} and \ref{ssec:T}, respectively. Although we work at the population level, it is important that we are constrained by formulations which can be deployed in the empirical setting
by replacing the measures $\mu, \mu_\x$ and $\mu_\y$ with their empirical counterparts given in~\eqref{eq:empirical}.  We will discuss empiricalization explicitly in several cases to highlight this issue.

\subsection{Contrastive Learning in Terms of the Joint Distribution}
\label{ssec:J}

In this subsection, we relate the solution identified by contrastive learning to a joint distribution that is learned over the product space $\mathcal{\X} \times \mathcal{\Y}$. Let $\pmeas(d\x,d\y;\theta)$ be a probability measure for the joint random variable $(\x, \y) \in \mathcal{\X} \times \mathcal{\Y}$ defined by
\begin{equation} \label{eq:joint_probability_model}
\pmeas(d\x,d\y;\theta) =  \pmodel(\x,\y;\theta)  \mu_\x(d\x)\mu_\y(d\y),
\end{equation}
where
\begin{subequations}
\label{eq:rz}
    \begin{align}
    \pmodel(\x,\y;\theta) &=  \frac{1}{Z} \exp\Bigl(\langle \normg_\x(\x;\theta_\x), \normg_\y(\y;\theta_\y) \rangle / \tau \Bigr),\\
        Z &= \int_{\mathcal{\X} \times \mathcal{\Y}} \exp(\langle \normg_\x(\x;\theta_\x),  \normg_\y(\y;\theta_\y) \rangle / \tau) \mu_\x(d\x)\mu_\y(d\y).
    \end{align}
\end{subequations}
We refer to the change of measure $\rho$ as a \emph{tilting}; this tilting links the product of marginals of the 
data-generating distribution $\mu_\x, \mu_\y$
to a ($\theta-$parameterized) joint distribution $\nu=\nu(\cdot\,;\theta).$ The joint distribution defines a coupling of $\x$ and $\y$ to encode dependence 
between the two random variables. We show that the contrastive learning objective from the previous section can be reformulated in terms of a loss function which aligns $\pmeas$ with $\mu.$  We refer to the specific choice of $\rho$ here as an  \textit{exponential tilting}.

In the following, we let $\pmeas_{\x|\y}$ (resp. $\pmeas_{\y|\x}$) denote the conditional measure for $\x|\y$ (resp. $\y|\x$) under the joint measure $\pmeas$ in~\eqref{eq:pmeas1}. We then have
\begin{equation}
\label{eq:pmeas1}
\pmeas_{\x|\y}(d\x|\y;\theta)=\pmodel(\x|\y;\theta)\mu_\x(d\x),
\end{equation}
where $\pmodel(\cdot|\y;\theta)$ is defined in equation 
(\ref{eq:continuum_probability}a); likewise 
\begin{equation}
\label{eq:pmeas2}
\pmeas_{\y|\x}(d\y|\x;\theta)=\pmodel(\y|\x;\theta)\mu_\y(d\y),
\end{equation}
where $\pmodel(\cdot|\x;\theta)$ 
is defined in equation (\ref{eq:continuum_probability}b). We let $\mu_{\x|\y}$ (resp. $\mu_{\y|\x}$) denote the conditional measure for $\x|\y$ (resp. $\y|\x$) under the data generating measure $\mu$.

With this notation we may now define an optimization problem which
is equivalent to \Cref{do:L}. It casts the problem as  minimization of a sum of distances between probability measures;
in so doing it suggests paths for the generalization of contrastive learning.

\begin{definitiono}
\label{do:J}
\begin{subequations}
\begin{align}
\JCOND(\theta) & = \frac{1}{2}\mathbb{E}_{\y \sim \mu_\y}\left[\dkl(\mu_{\x|\y}(\cdot|\y)||\pmeas_{\x|\y}(\cdot|\y;\theta))\right] + \frac{1}{2}\mathbb{E}_{\x \sim \mu_\x}\left[\dkl(\mu_{\y|\x}(\cdot|\x)||\pmeas_{\y|\x}(\cdot|\x;\theta))\right], \nonumber \\
\theta^* & \in \argmin_{\theta \in \R^p} \, \JCOND(\theta). \nonumber
\end{align}
\end{subequations}
\end{definitiono}

\begin{theorem} \label{thm:clip_minimization} Let the conditionals of $\mu(d\x,d\y)$ satisfy the finite relative entropy conditions:
\begin{align*}
\mathbb{E}_{\y \sim \mu_\y} \dkl(\mu_{\x|\y}(\cdot|\y)||\mu_\x) &< \infty \\ 
\mathbb{E}_{\x \sim \mu_\x} \dkl(\mu_{\y|\x}(\cdot|\x)||\mu_\y) &< \infty.
\end{align*}
Then, $\JCOND(\theta) = \LCOND(\theta)+C$ where $C$ depends only on the relative entropy of the $\mu$ conditionals, and not on $\theta.$ Consequently, the Optimization Problems \ref{do:L} and \ref{do:J} coincide: the minimizer of $\JCOND$ and the minimizer of $\LCOND$ satisfy $\theta^*=\tcond$.
\end{theorem}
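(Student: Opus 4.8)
The plan is to expand each conditional Kullback--Leibler term appearing in $\JCOND(\theta)$ using the fact that, by \eqref{eq:pmeas1}, the density of $\pmeas_{\x|\y}(\cdot|\y;\theta)$ with respect to $\mu_\x$ is precisely $\pmodel(\cdot|\y;\theta)$, and symmetrically that $d\pmeas_{\y|\x}(\cdot|\x;\theta)/d\mu_\y=\pmodel(\cdot|\x;\theta)$ by \eqref{eq:pmeas2}. First I would record the elementary boundedness fact that underpins everything: since the encoders are normalized, $\langle\normg_\x(\x;\theta_\x),\normg_\y(\y;\theta_\y)\rangle\in[-1,1]$, so the integrand in the normalizer of (\ref{eq:continuum_probability}a)--(\ref{eq:continuum_probability}b) lies in $[e^{-1/\tau},e^{1/\tau}]$; hence $\pmodel(\x|\y;\theta)$ and $\pmodel(\y|\x;\theta)$ are bounded above and below by positive constants uniformly in $(\x,\y)$, and in particular $\log\pmodel(\x|\y;\theta)$ and $\log\pmodel(\y|\x;\theta)$ are bounded. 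This guarantees that $\mathbb{E}_{(\x,\y)\sim\mu}[\log\pmodel(\x|\y;\theta)]$ and its analogue are finite, so no ill-defined $\infty-\infty$ will arise in the manipulations below.

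Next I would invoke the hypotheses. Finiteness of $\mathbb{E}_{\y\sim\mu_\y}\dkl(\mu_{\x|\y}(\cdot|\y)\|\mu_\x)$ forces $\mu_{\x|\y}(\cdot|\y)\ll\mu_\x$ for $\mu_\y$-a.e.\ $\y$; combined with the two-sided bound on $\pmodel$ this yields $\mu_{\x|\y}(\cdot|\y)\ll\pmeas_{\x|\y}(\cdot|\y;\theta)$, and the Radon--Nikodym chain rule gives
\[
\frac{d\mu_{\x|\y}(\cdot|\y)}{d\pmeas_{\x|\y}(\cdot|\y;\theta)}(\x)=\frac{1}{\pmodel(\x|\y;\theta)}\,\frac{d\mu_{\x|\y}(\cdot|\y)}{d\mu_\x}(\x).
\]
Substituting into the definition of the KL divergence and integrating termwise (each term being integrable by the boundedness and finite-entropy facts above),
\[
\dkl\bigl(\mu_{\x|\y}(\cdot|\y)\,\big\|\,\pmeas_{\x|\y}(\cdot|\y;\theta)\bigr)=\dkl\bigl(\mu_{\x|\y}(\cdot|\y)\,\big\|\,\mu_\x\bigr)-\mathbb{E}_{\x\sim\mu_{\x|\y}(\cdot|\y)}\bigl[\log\pmodel(\x|\y;\theta)\bigr].
\]
Taking $\mathbb{E}_{\y\sim\mu_\y}$, using the disintegration $\mu(d\x,d\y)=\mu_{\x|\y}(d\x|\y)\,\mu_\y(d\y)$ to collapse the iterated expectation into one under $\mu$, and repeating the identical argument with the roles of $\x$ and $\y$ interchanged, I arrive at
\[
\JCOND(\theta)=\underbrace{\tfrac12\,\mathbb{E}_{\y\sim\mu_\y}\dkl(\mu_{\x|\y}\|\mu_\x)+\tfrac12\,\mathbb{E}_{\x\sim\mu_\x}\dkl(\mu_{\y|\x}\|\mu_\y)}_{=:C}\;-\;\tfrac12\,\mathbb{E}_{(\x,\y)\sim\mu}\bigl[\log\pmodel(\x|\y;\theta)+\log\pmodel(\y|\x;\theta)\bigr].
\]
By Optimization Problem~\ref{do:L} the second term is exactly $\LCOND(\theta)$, while $C$ is finite by hypothesis and manifestly independent of $\theta$; this establishes $\JCOND(\theta)=\LCOND(\theta)+C$. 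Since the two objectives differ by a constant, their sets of minimizers coincide, and in particular $\theta^*=\tcond$.

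The only real obstacle is the measure-theoretic bookkeeping: verifying that $\mu_{\x|\y}\ll\pmeas_{\x|\y}$ ($\mu_\y$-a.e.), that the Radon--Nikodym chain rule is applicable in the disintegrated setting, and---most importantly---that splitting $\dkl(\mu_{\x|\y}\|\pmeas_{\x|\y})$ into $\dkl(\mu_{\x|\y}\|\mu_\x)$ minus $\mathbb{E}\log\pmodel$ is a legitimate identity rather than an indeterminate $\infty-\infty$. The uniform two-sided bound on the tilting density, together with the finite-relative-entropy hypotheses, is precisely what renders all three of these issues harmless, after which the remaining computation is routine.
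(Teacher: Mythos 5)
Your argument is correct and follows essentially the same route as the paper's proof: both decompose each conditional KL term, via the change of reference measure to $\mu_\x$ (resp.\ $\mu_\y$), into the $\theta$-independent relative entropy $\dkl(\mu_{\x|\y}\|\mu_\x)$ (resp.\ $\dkl(\mu_{\y|\x}\|\mu_\y)$) minus $\mathbb{E}[\log\pmodel]$, then collapse the iterated expectations under the disintegration of $\mu$ to recognize $\LCOND(\theta)$ plus the finite constant $C$. The only difference is that you spell out the measure-theoretic justifications (two-sided boundedness of the tilting density for normalized encoders, absolute continuity, avoidance of $\infty-\infty$) that the paper leaves implicit; this is a welcome refinement rather than a different proof.
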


Proof of the theorem may be found in Appendix~\ref{app:A}.

\begin{remark}
\label{rem:J2L} The preceding theorem does not use any properties of the specific change of measure (tilting) defined by \eqref{eq:rz}. Thus,
the theorem generalizes to the other tiltings of the product measure $\mu_\x(d\x)\mu_\y(d\y)$ that we discuss in Subsection~\ref{ssec:T}.
\end{remark}

\subsection{Generalized Probabilistic Loss Functions}
\label{ssec:G}

In the previous subsection we have shown that the population level formulation of the standard constrastive learning problem can be recast
as learning a joint distribution, from a parameterized class, that best matches the true joint distribution of the data. Matching 
is determined by a sum of two terms measuring Kullback-Liebler divergences between the conditionals of the model and true joint distributions. 
This idea may be generalized in a number of directions, outlined in the next two subsubsections. 

\subsubsection{Generalized Conditional Losses}

One natural direction to generalize \Cref{do:J} is to replace the Kullback-Liebler divergence $\dkl$ by an arbitrary divergence, 
or metric, $\Div_\cU$ (resp. $\Div_\cV$) in the term leading to the $\x-$conditional (resp. $\y-$conditional). %
Furthermore, since the two conditionals for $\x$ and $\y$ may have different representations, the two terms in the loss function may not have a similar scale; indeed this observation applies to the original \Cref{do:J}. It may then be desirable to add scalar parameters $\lambda_\x,\lambda_\y \in \R_{+}$ that balance the two terms. These two generalizations result in the following minimization problem, in which $\Div$ denotes the pair $(\Div_\cU, \Div_\cV):$
\begin{definitiono}
\label{do:J1}
\begin{subequations}
\begin{align*}
\JCONDD(\theta;\lambda_\x,\lambda_\y) &= \frac{\lambda_\x}{2}\mathbb{E}_{\y \sim \mu_\y}\left[\Div_\cU(\mu_{\x|\y}(\cdot|\y)||\pmeas_{\x|\y}(\cdot|\y;\theta))\right] + \frac{\lambda_\y}{2}\mathbb{E}_{\x \sim \mu_\x}\left[\Div_\cV(\mu_{\y|\x}(\cdot|\x)||\pmeas_{\y|\x}(\cdot|\x;\theta))\right],\\
\tcond(\lambda_\x,\lambda_\y;\Div) & \in \argmin_{\theta \in \R^p} \, \JCONDD(\theta;\lambda_\x,\lambda_\y).
\end{align*}
\end{subequations}
\end{definitiono}

Note in particular that $\tcond\bigl(1,1;(\dkl,\dkl)\bigr)=\tcond$. The ratio of hyperparameters $\lambda_u/\lambda_v$ needs to be tuned to balance the two contributions to the objective function.
Furthermore, by choosing $\lambda_v=0$ (resp. $\lambda_u=0$) we obtain an objective function tuned to choose the joint distribution simply
to match the $\x|\y$ (resp. $\y|\x$) conditional and not the sum of both; this is desirable in applications where the desired 
downstream tasks focus on only one of the two conditionals. An example of this, covered in Subsection \ref{ssec:C}, is the fine-tuning
of classifiers based on contrastive learning. Indeed, in Subsection~\ref{ssec:lecun} we show that the original MNIST digit classification
algorithm can be viewed as a generalization of the standard CLIP methodology, using such a one-sided loss.

\begin{remark}
\label{rem:refback}
Optimization problem~\ref{do:J1} is well-defined for any divergence pair $\Div$. But a critical and practical issue is whether it defines an optimization problem for $\theta$ which is actionable given only samples from $\mu.$ This is possible when $\Div_\cU(\cdot||\cdot),\Div_\cV(\cdot||\cdot)$
are both chosen to be the forward Kullback-Liebler divergence $=\dkl(\cdot||\cdot)$; but it is not possible for the reverse Kullback-Liebler divergence,
which requires knowledge of the change of measure $r \coloneqq d\mu/(d\mu_\x \otimes d\mu_\y).$ Nor is it possible for the $\chi^2-$divergence or for the Hellinger or TV metrics, all of which also require knowledge of $r$. However, the optimization problem is still actionable for some choices other than the Kullback-Liebler divergence. The energy distances, of which maximum mean discrepancy (MMD) is a special case, provide a useful class of examples. We  illustrate this with Example~\ref{ex:mmd_conditionals} in~\Cref{app:mmd}.
\end{remark}

\subsubsection{Generalized Joint Losses}

Now we propose a somewhat different class of objectives, based around matching the joint distribution rather than conditionals. 
We assume that $\mu$ has density $r$ with respect to $\mu_\x \otimes \mu_\y$, so that
\begin{equation}
\label{eq:comr}
\mu(d\x,d\y)=r(\x,\y)\mu_\x(d\x)\mu_\y(d\y).
\end{equation}
Recalling $\nu(\cdot\,;\theta)$ given by \eqref{eq:joint_probability_model} we may now consider 
\begin{definitiono}
\label{do:J12}
\begin{subequations}
\begin{align*}
\JJOINT(\theta) &=  \dkl(\mu||\pmeas(\cdot,\cdot;\theta)),\\
\tjoint & \in \argmin_{\theta \in \R^p} \, \JJOINT(\theta).
\end{align*}
\end{subequations}
\end{definitiono}
Now note that, recalling function $\rho$ is defined by \eqref{eq:rz}, we are attempting to model $r(\cdot,\cdot)$ by parameterized function
$\rho(\cdot,\cdot;\theta).$ Minimization of $\JJOINT$ over $\theta$ reflects this goal as the following explicit calculation shows:
\begin{align}
\label{eq:dklj2}
\dkl(\mu||\pmeas(\cdot,\cdot;\theta)) &= \mathbb{E}_{(\x,\y) \sim \mu}[\log r(\x,\y) - \log \pmodel(\x,\y;\theta)].
\end{align}
The first term does not involve $\theta$ and may be ignored for the purposes of minimization to determine the optimal $\theta^*$. Thus we see that Optimization Problem~\ref{do:J12} may be formulated as:
\begin{definitiono}
\label{do:J2}
\begin{subequations}
\begin{align*}
-\LJOINT(\theta) &= \mathbb{E}_{(\x,\y) \sim \mu}[\langle \normg_\x(\x;\theta_\x), \normg_\y(\y;\theta_\y) \rangle/\tau] - \log \mathbb{E}_{(\x,\y) \sim \mu_\x \otimes \mu_\y}[\exp(\langle \normg_\x(\x;\theta_\x), \normg_\y(\y;\theta_\y) \rangle/\tau)],\\
\tjoint & \in \argmin_{\theta \in \R^p} \, \LJOINT(\theta).
\end{align*}
\end{subequations}
\end{definitiono}

Indeed we have proved:
\begin{theorem}
    \label{thm:addedT}
Assume that $\mathbb{E}_{(\x,\y) \sim \mu}\bigl(\log r(\x,\y)\bigr)<\infty.$ Then, $\JJOINT(\theta) = \LJOINT(\theta)+C$ where $C$ does not depend on $\theta.$ Consequently, the Optimization Problems \ref{do:J12} and \ref{do:J2} coincide. 
\end{theorem}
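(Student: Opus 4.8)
The plan is to verify Theorem~\ref{thm:addedT} by direct computation, essentially unwinding the definitions that have already been laid out in the excerpt. The statement follows almost immediately from equation~\eqref{eq:dklj2}, so the proof is a matter of recording that the term that was informally ``ignored'' is indeed a finite $\theta$-independent constant, and that the remaining term is exactly $-\LJOINT(\theta)$.

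First I would start from \eqref{eq:dklj2}, namely $\dkl(\mu||\pmeas(\cdot,\cdot;\theta)) = \mathbb{E}_{(\x,\y)\sim\mu}[\log r(\x,\y)] - \mathbb{E}_{(\x,\y)\sim\mu}[\log\pmodel(\x,\y;\theta)]$, which is itself obtained by writing $d\mu/d\nu = r(\x,\y)/\pmodel(\x,\y;\theta)$ using \eqref{eq:comr} and \eqref{eq:joint_probability_model} and taking $\log$ under $\mathbb{E}_\mu$. Set $C \coloneqq \mathbb{E}_{(\x,\y)\sim\mu}[\log r(\x,\y)]$, which is finite by the hypothesis of the theorem and manifestly does not involve $\theta$; this is the constant appearing in the statement, confirming the remark in the notation section that $C$ depends only on $\mu$. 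Then I would substitute the explicit form of $\pmodel$ from \eqref{eq:rz}, so that $\log\pmodel(\x,\y;\theta) = \langle\normg_\x(\x;\theta_\x),\normg_\y(\y;\theta_\y)\rangle/\tau - \log Z$, and observe that $\log Z$ is precisely $\log\mathbb{E}_{(\x,\y)\sim\mu_\x\otimes\mu_\y}[\exp(\langle\normg_\x(\x;\theta_\x),\normg_\y(\y;\theta_\y)\rangle/\tau)]$ by \eqref{eq:rz}b. Collecting terms yields $\dkl(\mu||\pmeas(\cdot,\cdot;\theta)) = C - \mathbb{E}_{(\x,\y)\sim\mu}[\langle\normg_\x,\normg_\y\rangle/\tau] + \log Z = C + \LJOINT(\theta)$, which is exactly the claimed identity $\JJOINT(\theta) = \LJOINT(\theta) + C$. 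Since adding a $\theta$-independent constant does not change the argmin, Optimization Problems~\ref{do:J12} and~\ref{do:J2} have the same set of minimizers, and the conclusion follows.

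There is no serious obstacle here; the only point requiring a modicum of care is bookkeeping of finiteness and well-definedness. I would note that the hypothesis $\mathbb{E}_{(\x,\y)\sim\mu}(\log r(\x,\y))<\infty$ is what guarantees $C$ is a genuine (finite) constant rather than $+\infty$, so that the decomposition $\JJOINT = \LJOINT + C$ is meaningful; one should also implicitly assume $Z<\infty$ so that $\nu$ is a well-defined probability measure, which is already built into the construction in~\eqref{eq:rz}. A minor subtlety worth a sentence is that $\dkl(\mu||\nu)$ is finite if and only if $\LJOINT(\theta)$ is finite, given $C$ finite, so the identity is an equality in $[-\infty,+\infty]$ that is in fact attained with finite values on the relevant domain; but for the purpose of matching argmins this is immaterial. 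In short, the proof is a one-line substitution plus the trivial observation that argmin is invariant under additive constants, mirroring the structure of the proof of Theorem~\ref{thm:clip_minimization}.
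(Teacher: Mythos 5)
Your proof is correct and follows essentially the same route as the paper, which establishes the theorem through the inline computation in \eqref{eq:dklj2}: splitting $\dkl(\mu||\nu)$ into the $\theta$-independent term $\mathbb{E}_{\mu}[\log r]$ and $-\mathbb{E}_{\mu}[\log\pmodel]$, the latter being exactly $\LJOINT(\theta)$ after substituting the tilting and normalization constant from \eqref{eq:rz}. Your additional remarks on finiteness and the invariance of the argmin under additive constants are consistent with the paper's (implicit) treatment.
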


\begin{remark} Evaluating the joint loss in Optimization Problem~\ref{do:J2} has computational advantages over the commonly used loss $\LCOND$ for contrastive learning, defined in Optimization Problem~\ref{do:L}. The joint loss $\LJOINT$ is efficiently evaluated  by computing the cosine similarities for one batch of data from the joint distribution $\mu$ and one batch from the tensor product of marginal distributions $\mu_\x \otimes \mu_\y$. The contrastive loss $\LCOND$, on the other hand, requires a negative batch for each sample from one of the marginal distributions to evaluate the second and third terms in Optimization Problem~\ref{do:L}.
\end{remark}

The following theorem connects the two optimization problems and is proved in Appendix \ref{app:A}:
\begin{theorem} \label{thm:equivalent_objectives} For all $\theta \in \R^p$, the objectives in Optimization Problems \ref{do:L} and \ref{do:J2} are related by 
the inequality
\begin{equation*}
\LCOND(\theta) \leq \LJOINT(\theta).
\end{equation*}
Thus,
$$\LCOND(\tcond) \le \LCOND(\tjoint) \le \LJOINT(\tjoint).$$
\end{theorem}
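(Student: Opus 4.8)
The plan is to prove the pointwise inequality $\LCOND(\theta) \leq \LJOINT(\theta)$ for each fixed $\theta$, after which the chain of inequalities follows immediately by combining optimality of $\tcond$ for $\LCOND$ with the pointwise bound evaluated at $\tjoint$:
\[
\LCOND(\tcond) \;\le\; \LCOND(\tjoint) \;\le\; \LJOINT(\tjoint),
\]
where the first step uses that $\tcond$ minimizes $\LCOND$ and the second is the pointwise inequality at $\theta = \tjoint$. So the whole content is the pointwise bound.

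To establish $\LCOND(\theta) \le \LJOINT(\theta)$, I would write both quantities using the explicit expansions already available in the excerpt: $-\LCOND(\theta)$ is given by \eqref{eq:consistent_loss}, and $-\LJOINT(\theta)$ is given in Optimization Problem~\ref{do:J2}. Both share the same first ("alignment") term, $\mathbb{E}_{(\x,\y)\sim\mu}[\langle \normg_\x(\x;\theta_\x),\normg_\y(\y;\theta_\y)\rangle/\tau]$, so that term cancels when forming the difference. Writing $s(\x,\y) \coloneqq \langle \normg_\x(\x;\theta_\x), \normg_\y(\y;\theta_\y)\rangle/\tau$, the inequality $\LCOND(\theta)\le\LJOINT(\theta)$ is equivalent to $-\LCOND(\theta)\ge -\LJOINT(\theta)$, i.e.
\[
-\tfrac12\,\mathbb{E}_{\y\sim\mu_\y}\log\mathbb{E}_{\x'\sim\mu_\x}e^{s(\x',\y)}
\;-\;\tfrac12\,\mathbb{E}_{\x\sim\mu_\x}\log\mathbb{E}_{\y'\sim\mu_\y}e^{s(\x,\y')}
\;\ge\;
-\log\mathbb{E}_{(\x,\y)\sim\mu_\x\otimes\mu_\y}e^{s(\x,\y)}.
\]
Equivalently, setting $a(\y) \coloneqq \log \mathbb{E}_{\x'\sim\mu_\x} e^{s(\x',\y)}$ and $b(\x) \coloneqq \log \mathbb{E}_{\y'\sim\mu_\y} e^{s(\x,\y')}$, and noting $\mathbb{E}_{(\x,\y)\sim\mu_\x\otimes\mu_\y}e^{s(\x,\y)} = \mathbb{E}_{\y\sim\mu_\y}e^{a(\y)} = \mathbb{E}_{\x\sim\mu_\x}e^{b(\x)}$, the claim reduces to
\[
\tfrac12\,\mathbb{E}_{\y\sim\mu_\y}[a(\y)] + \tfrac12\,\mathbb{E}_{\x\sim\mu_\x}[b(\x)] \;\le\; \log\mathbb{E}_{\y\sim\mu_\y}e^{a(\y)}.
\]
For the second summand I would rewrite $\mathbb{E}_{\x\sim\mu_\x}[b(\x)]$ using the other representation $\mathbb{E}_{\x\sim\mu_\x}e^{b(\x)} = \mathbb{E}_{\y\sim\mu_\y}e^{a(\y)}$ only at the level of the log-normalizer; but the cleaner route is: by Jensen's inequality applied to the concave function $\log$, $\mathbb{E}_{\y\sim\mu_\y}[a(\y)] \le \log \mathbb{E}_{\y\sim\mu_\y} e^{a(\y)}$ and likewise $\mathbb{E}_{\x\sim\mu_\x}[b(\x)] \le \log \mathbb{E}_{\x\sim\mu_\x} e^{b(\x)}$. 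Since both right-hand sides equal $\log Z$ with $Z = \mathbb{E}_{(\x,\y)\sim\mu_\x\otimes\mu_\y}e^{s(\x,\y)}$ (by Fubini/Tonelli, the inner expectations being nonnegative), averaging the two gives exactly $\tfrac12\mathbb{E}_{\mu_\y}[a] + \tfrac12\mathbb{E}_{\mu_\x}[b] \le \log Z = -(-\LJOINT(\theta)) + (\text{alignment term})$, which is the desired bound.

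The main (and only real) obstacle is bookkeeping around integrability and the possibility of infinite values: one should check that $Z$ is finite (which holds since $s$ is bounded, as $|\normg_\x| = |\normg_\y| = 1$ forces $|s| \le 1/\tau$ by Cauchy–Schwarz, so $e^{-1/\tau} \le Z \le e^{1/\tau}$ and $a,b$ are bounded functions), so that all expectations and logarithms appearing are finite and Jensen's inequality applies without qualification. Because $s$ is bounded, no additional hypotheses beyond those already implicit in the normalized-encoder setup are needed, and the argument is self-contained modulo Fubini's theorem for the nonnegative integrand $e^{s(\x,\y)}$.
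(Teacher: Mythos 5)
Your proposal is correct and follows essentially the same route as the paper: both reduce to the pointwise bound by cancelling the shared alignment term and then apply Jensen's inequality to the two log-partition terms (the paper phrases it via convexity of $-\log$, you via concavity of $\log$, which is the same inequality), with Fubini identifying both averaged normalizers with $\log Z$. Your added check that $|s|\le 1/\tau$ makes all quantities finite is a harmless extra not spelled out in the paper.
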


Thus, solution of the joint minimization problem provides an upper bound for the conditional optimization problem.
Finally we note that it is also possible to generalize Optimization Problem~\ref{do:J12} by using a different divergence, or metric:
\begin{align}
\label{eq:dklj3}
\theta^* = \argmin_{\theta \in \R^p} \, \Div(\mu||\pmeas(\cdot,\cdot;\theta)).
\end{align}
The comments in Remark \ref{rem:refback}, concerning actionable loss functions, apply also to this loss 
function. One example  of an actionable loss function is to take $\Div$ to be the maximum mean discrepancy 
$\Div_{\mathsf{mmd}}$ as defined in Example~\ref{ex:mmd_conditionals}. This results in the following optimization problem:
\begin{definitiono}
\label{do:J3}
\begin{subequations}
\begin{align*}
\LJOINTM(\theta) &= - 2\mathbb{E}_{(x,y) \sim \mu \otimes \pmeas(\cdot;\theta)} k(x,y) + \mathbb{E}_{(y,y') \sim \pmeas(\cdot;\theta) \otimes \pmeas(\cdot;\theta)} k(y,y') \\
\tjointm & \in \argmin_{\theta \in \R^p} \, \LJOINTM(\theta).
\end{align*}
\end{subequations}
\end{definitiono}

\subsection{Generalized Tilting} 
\label{ssec:T}

Recall that contrastive learning may be thought of as learning a probability measure $\nu$, given by \eqref{eq:joint_probability_model}, 
\eqref{eq:rz}, so that it is close to the data generating distribution $\mu.$ The change of measure $\rho$ defined in \eqref{eq:rz}
is referred to as a tilting. In this subsection we develop variants on this standard tilting. We concentrate on two variants both of which
we will return to in the Gaussian setting studied in~\Cref{sec:GS}. However, the reader 
will readily identify numerous other generalizations based on different choices of parameterized function $\rho$ in the expression \eqref{eq:joint_probability_model}.

\subsubsection{Unnormalized Encoders}
Recall that contrastive learning introduces encoders $g_\x, g_v$ on the two modalities, but defines loss functions through use of the cosine distance based on the normalized encoders given in \eqref{eq:normalized}. However it is possible to simply drop the constraint that the embedding vectors are normalized. Making this change in \eqref{eq:joint_probability_model}, \eqref{eq:rz} leads to a new model class of joint distributions with the form
\begin{subequations}
\label{eq:T1}
    \begin{align}
    \pmeas(d\x,d\y;\theta) &=  \pmodel(\x,\y;\theta)  \mu_\x(d\x)\mu_\y(d\y),\\
    \pmodel(\x,\y;\theta) &=  \frac{1}{Z} \exp\Bigl(\langle \g_\x(\x;\theta_\x), \g_\y(\y;\theta_\y) \rangle / \tau \Bigr),\\
        Z &= \int_{\mathcal{\X} \times \mathcal{\Y}} \exp(\langle \g_\x(\x;\theta_\x),  \g_\y(\y;\theta_\y) \rangle / \tau) \mu_\x(d\x)\mu_\y(d\y).
    \end{align}
\end{subequations}
These unnormalized encoders provide our first example of a generalized tilting.

\subsubsection{$L^2-$Distance}
Now note that, if the embedding vectors in \eqref{eq:joint_probability_model}, \eqref{eq:rz} are normalized, 
then the density function of the joint distribution \eqref{eq:rz} can be written as
$$\pmodel(\x,\y;\theta)  \propto \frac{1}{Z}  \exp\Bigl(-\frac{1}{2\tau}\left|\normg_\x(\x;\theta_\x) - \normg_\y(\y;\theta_\y)\right|^2\Bigr)\mu_\x(d\x)\mu_\y(d\y).$$
Using this form for the tilting, but in the unnormalized setting, leads to the %
model:
\begin{subequations}
\label{eq:rz2}
    \begin{align}
    \pmeas(d\x,d\y;\theta) &=  \pmodel(\x,\y;\theta)  \mu_\x(d\x)\mu_\y(d\y),\\
    \pmodel(\x,\y;\theta)  &=  \frac{1}{Z}  \exp\Bigl(-\frac{1}{2\tau}\left|\g_\x(\x;\theta_\x) - \g_\y(\y;\theta_\y)\right|^2\Bigr)\mu_\x(d\x)\mu_\y(d\y),\\
        Z &= \int_{\cX \times \cY} \exp \Bigl(-\frac{1}{2\tau}\left|\g_\x(\x;\theta_\x) - \g_\y(\y;\theta_\y)\right|^2\Bigr) \mu_\x(d\x)\mu_\y(d\y).
    \end{align}
\end{subequations}
These unnormalized encoders provide our second example of a generalized tilting.

\subsubsection{The General Setting}
\label{sssec:gs}

Both of the preceding modified tiltings can be used within all of the generalized probabilistic loss functions described in Subsection~\ref{ssec:G};
in particular, Optimization Problems \ref{do:J1} and \ref{do:J12} are well-defined for any tilting of the form \eqref{eq:joint_probability_model}.
In this context, the following observation will be useful to us in what follows.

\begin{remark}
\label{rem:J2L2} Theorem \ref{thm:clip_minimization} may be generalized to Optimization Problem \ref{do:J1} in the case where $\Div_\cU$ and $\Div_\cV$ are both chosen to be $\dkl.$ The analogous asymmetric loss function and minimization problem is given by 
\begin{definitiono} \label{opt:KLonesided}
\begin{subequations}
    \begin{align*} 
        -\LCOND(\theta;\lambda_\x,\lambda_\y) &= \mathbb{E}_{(\x,\y) \sim \mu}\left[\frac{\lambda_\x}{2} \log \rho(\x|\y;\theta) + \frac{\lambda_\y}{2}\log \rho(\y|\x;\theta)\right]. \\
        \tcond(\lambda_\x,\lambda_\y;\dkl) &\in \argmin_{\theta \in \R^p} \LCOND(\theta;\lambda_\x,\lambda_\y).
    \end{align*}
\end{subequations}
\end{definitiono}
Following the same proof of Theorem \ref{thm:clip_minimization}, the minimizers of $\JCOND(\cdot;\lambda_\x,\lambda_\y)$ and $\LCOND(\cdot;\lambda_\x,\lambda_\y)$ coincide for all $\lambda_\x,\lambda_\y \in \R_+$. The reader will be able to identify
a similar generalization of Theorem~\ref{thm:addedT}.
\end{remark}

\section{Retrieval and Classification}
\label{sec:RAC}
Two information science applications that make use of contrastive learning are retrieval and classification.
In Subsection~\ref{ssec:R} we describe the problem of retrieval, framing it using the perspective on contrastive learning that we have developed over the two preceding sections; Subsection~\ref{ssec:C} is devoted to a similar treatment of classification.

\subsection{Retrieval}
\label{ssec:R}

In this subsection we formalize the use of contrastive learning as a building block in the retrieval task.
The methodology we describe in this subsection proceeds with $\theta=(\theta_\x,\theta_\y)$ fixed at the optimal value found
through the contrastive learning procedure described in Subsection~\ref{ssec:J}, or its subsequent variants detailed in Subsections \ref{ssec:G} and \ref{ssec:T}. Given one realization from a data modality, \emph{crossmodal retrieval} identifies relevant items, from the prescribed marginal distribution of the  other modality, that are aligned with it. In practice, this is accomplished by computing the cosine similarity of the given realization  with samples from the other data modality and returning the elements (or possibly a single element) with highest similarity. The complete procedure is described in Algorithm~\ref{alg:retrieval}. This procedure is referred to as \emph{zero-shot retrieval} as it does not require any additional training of parameter $\theta$ after learning the encoders.

\begin{algorithm}
\caption{Crossmodal Retrieval \label{alg:retrieval}}
\begin{algorithmic}[1]
\STATE \textbf{Input}: Input $\y \in \mathcal{\Y}$, data samples $\mathcal{\X}^N = \{\x^i\}_{i=1}^N$,  number of similar items $K$ 
\STATE Compute cosine similarities $s_{i} \coloneqq \langle \normg_\x(\x^i), \normg_\y(\y) \rangle$ for each sample $\x^i$ %
\STATE Identify $K$ distinct indices $\sigma^* \in \mathbb{N}^K$ with largest similarities, i.e., $\sigma^* \in \argmax_{\sigma \in [[1,K]]} \sum_{k=1}^K s_{\sigma(i)}$
\STATE \textbf{Output}: Samples $\x^{\sigma^*(k)}$ for $k = 1,\dots,K$
\end{algorithmic}
\end{algorithm}

\begin{example}
\label{ex:tti}
    Take $\mathcal{\X}$ to denote images and  $\mathcal{\Y}$ to denote text prompts. In this setting $\y \in \mathcal{\Y}$ represents a text prompt, typically not in the training data set $\mathcal{\Y}^N$; and $\mathcal{\X}^N$ is the collection of $N$ images in the training dataset. Crossmodal retrieval seeks the top $K$ images from $\mathcal{\X}^N$  that are most aligned, in the precise sense described by Algorithm
    \ref{alg:retrieval}, to the given prompt $\y.$ The embedding of all images in $\mathcal{\X}^N$ is precomputed offline.
    Given any specific text prompt $\y$ encountered, the online computational cost is then $\mathcal{O}(\de N)$. 
\end{example}

Here we show how it is performed in practice: with empirical measures computed from 
the discrete data set in Data Assumption~\ref{as:paired_data}. A generalization of  retrieval to the population level setting of measures with continuous densities is found in~\Cref{sec:retrieval-population}.

In practice $\mu_\x$ may not have density with respect to Lebesgue measure and, more fundamentally,
is only available through samples: the marginal distribution for $\x$ is specified by the equally weighted empirical data measure $\mu_{\x}^N$ in~\eqref{eq:empirical} supported on $\mathcal{\X}^N$ in~\eqref{eq:empirical2}. Then, the learned conditional distribution for $\x|\y$ is given by
\begin{equation} \label{eq:condu2}
\pmeas_{\x|\y}^N(d\x|\y;\theta) \propto \exp\Bigl(\langle \normg_\x(\x;\theta_\x), \normg_\y(\y;\theta_\y) \rangle / \tau \Bigr) \mu_\x^N(d\x).
\end{equation}
We may still give an interpretation as a Bayesian inverse problem: we take as $\mu^N_\x$ from \eqref{eq:empirical} as the prior and the function $\rho(\cdot|\y)$ defined in (\ref{eq:continuum_probability}a) as the likelihood (up to a constant of proportionality).
However, the conditional distribution $\pmeas_{\x|\y}^N$, which is the resulting posterior in the Bayesian interpretation, 
is only supported at $N$ points $\mathcal{\X}^N = \{\x^i\}_{i=1}^N$. 
The measure parameterized by $\y$ can be written as the weighted empirical measure
$$\pmeas_{\x|\y}^N(\cdot|\y;\theta) \coloneqq \sum_{i=1}^N w_i(\y;\theta) \delta_{\x^i}.$$
The weights $w_i(\y;\theta) \in [0,1]$ sum to $1$ for any input $\y$. They are defined as
$$w_i(\y;\theta) \coloneqq \frac{\omega_i(\y;\theta)}{\sum_{\ell=1}^N \omega_\ell(\y;\theta)}, \qquad \omega_i(\y;\theta) = \exp\Bigl(\langle \normg_\x(\x^i;\theta), \normg_\y(\y;\theta) \rangle/\tau\Bigr).$$
Continuing the interpretation as a Bayesian inverse problem, the mode or MAP point in this context is defined as the point in $\mathcal{\X}^N$, which maximizes the density of the posterior $\pmeas_{\x|\y}^N$
with respect to the empirical measure $\mu_\x^N$; because $\mu_\x^N$ comprises equally weighted points, this is equivalent to maximizing the likelihood over the data set. Given that the weights ${w}_i$ are proportional to the density, on the data set, we have the following theorem. The proof is found in Appendix~\ref{app:A}.
\begin{theorem} \label{thm:retrieval}
The retrieval process in Algorithm~\ref{alg:retrieval} with $K = 1$ finds a  mode of the empirical conditional distribution
$\pmeas_{\x|\y}^N.$ For each $\y \in \mathcal{Y}$, retrieval computes
\begin{equation} \label{eq:retrieval_discrete}
    \argmax_{i=1,\dots,N} \, \langle \normg_\x(\x^i;\theta), \normg_\y(\y;\theta) \rangle. %
\end{equation}
\end{theorem}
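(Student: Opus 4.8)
The plan is to unpack the definitions and show that the retrieval criterion \eqref{eq:retrieval_discrete} is literally the maximization of the posterior density over the (equally weighted) prior support. First I would recall from the text that the empirical posterior is $\pmeas_{\x|\y}^N(\cdot|\y;\theta) = \sum_{i=1}^N w_i(\y;\theta)\delta_{\x^i}$, with $w_i(\y;\theta) \propto \omega_i(\y;\theta) = \exp(\langle \normg_\x(\x^i;\theta), \normg_\y(\y;\theta)\rangle/\tau)$. Since the prior $\mu_\x^N$ places equal mass $1/N$ on each $\x^i$, the density (Radon--Nikodym derivative) of $\pmeas_{\x|\y}^N$ with respect to $\mu_\x^N$ at the point $\x^i$ is $N w_i(\y;\theta)$. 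A mode, in the MAP sense defined just above the theorem statement, is any $\x^i$ maximizing this density, i.e. any $\x^i$ maximizing $w_i(\y;\theta)$.

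Next I would observe that, because $\tau > 0$ and $t \mapsto e^{t/\tau}$ is strictly increasing, and because the normalizing constant $\sum_{\ell=1}^N \omega_\ell(\y;\theta)$ is a positive quantity independent of $i$, we have
\begin{equation*}
\argmax_{i=1,\dots,N} w_i(\y;\theta) = \argmax_{i=1,\dots,N} \omega_i(\y;\theta) = \argmax_{i=1,\dots,N} \langle \normg_\x(\x^i;\theta), \normg_\y(\y;\theta)\rangle.
\end{equation*}
This chain of equalities of argmax sets is exactly \eqref{eq:retrieval_discrete}. The last step is to match this with the output of Algorithm~\ref{alg:retrieval} when $K=1$: line 2 computes $s_i = \langle \normg_\x(\x^i), \normg_\y(\y)\rangle$ and line 3 with $K=1$ returns an index $\sigma^*$ maximizing $s_{\sigma(1)}$, which is precisely an element of the argmax set above. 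Hence Algorithm~\ref{alg:retrieval} with $K=1$ returns a mode of $\pmeas_{\x|\y}^N$, and it does so by evaluating \eqref{eq:retrieval_discrete}.

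I do not anticipate a substantive obstacle here; the statement is essentially a bookkeeping exercise tying together the Bayesian interpretation (prior $\mu_\x^N$, likelihood $\rho(\cdot|\y;\theta)$ up to proportionality, posterior $\pmeas_{\x|\y}^N$) with the softmax/cosine-similarity structure. The only minor point requiring a word of care is that the mode need not be unique (ties in the cosine similarities produce several maximizers), so the theorem should be read as "a mode" rather than "the mode", and the argmax in \eqref{eq:retrieval_discrete} as a set from which Algorithm~\ref{alg:retrieval} selects one element; I would phrase the proof so as to respect this. I would also note in passing that normalization of the encoders plays no role in the argument beyond fixing the notation $\normg_\x,\normg_\y$ — the same reasoning applies verbatim to the unnormalized tiltings of Subsection~\ref{ssec:T}.
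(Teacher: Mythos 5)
Your proposal is correct and follows essentially the same route as the paper's own proof: identify the mode as a maximizer of the density $Nw_i(\y;\theta)$ of $\pmeas_{\x|\y}^N$ with respect to the equally weighted $\mu_\x^N$, then use monotonicity of the exponential and the $i$-independence of the normalizing constant to reduce the argmax over weights to the argmax over cosine similarities computed in Algorithm~\ref{alg:retrieval}. Your additional remarks on non-uniqueness of the mode and on the argument being independent of encoder normalization are sensible clarifications but do not change the substance.
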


\subsection{Classification}
\label{ssec:C}

In this subsection we formalize the use of contrastive learning as a building block in the classification task. Whilst finding an image from text is the canonical retrieval task, as explained in Example \ref{ex:tti}, the canonical classification task is the assignation of a text classifier to an image. The complete procedure is described in Algorithm~\ref{alg:discrete_classifier} (recall Remark \ref{rem:softmax} for definition of the {\tt softmax} operation)
and a key point to appreciate is that the labels are not necessarily taken from the training data.
\begin{algorithm}
\caption{Crossmodal Classifier  \label{alg:discrete_classifier}}
\begin{algorithmic}[1]
\STATE \textbf{Input}: Input $\x \in \mathcal{\X}$, labels $\mathcal{C} = \{\y^i\}_{i=1}^K$ %
\STATE Compute cosine similarities $s_{i} \coloneqq \langle \normg_\x(\x), \normg_\y(\y^i) \rangle$ for each label $\y^i$ and $i = 1,\dots,K$ 
\STATE \textbf{Output}: Most likely label $i^\ast = \arg\max_{i=1,\dots,K} s_i$ and (optional)  probabilities $\texttt{softmax}(s_i)$ for each label 
\end{algorithmic}
\end{algorithm}

\begin{example} \label{ex:itt}
Take $\mathcal{\X}$ to denote the space of images and  $\mathcal{\Y}$ to denote text prompts. Although image-to-text classification might, in principle, be achieved by simply reversing the roles of text and image in Example~\ref{ex:tti} for retrieval, this typically results in poor performance because the set of all text prompts in the data set is typically inadequate to classify arbitrary new images. Moreover, often interest is focused on classifying an image using a small set of labels---for example diagnosing whether tissue in medical images is healthy or not. 
To solve classification problems of this type, an initial pre-training step of contrastive learning is performed using a large population of text and image pairs
$\mathcal{\X}^N$ and $\mathcal{\Y}^N$ as defined in \eqref{eq:empirical2};
typically $\mathcal{C}$, or at least all of $\mathcal{C}$, is not in $\mathcal{\Y}^N.$ Then, 
fine-tuning specializes the classifier to increase its accuracy as a predictor of a label in $\mathcal{C}$, given an image, by updating
encoder parameters on the basis of a new (and typically smaller) set of labels and images, that are different from those considered during pre-training, but now containing $\mathcal{C}$.
\end{example}

When \emph{pretrained} encoders are used to classify inputs among a finite set without additional fine-tuning,
Algorithm~\ref{alg:discrete_classifier} is referred to as \emph{zero-shot classification}. We will concentrate, henceforth,
on using \emph{fine-tuning}, motivated by the preceding example, but not working in the specific context of text-image data. %
Next, we describe actionable algorithms for classification, based on discrete data with a finite number of labels. A description of the population level problem with continuous densities is presented  in~\Cref{sec:classification-population}.

Our starting point is to assume that we have a pretrained model, via access to the data in Data Assumption \ref{as:paired_data}, which is used
to determine the parameters of the encoder $(\theta_\x,\theta_\y)$. We first describe \emph{zero-shot classification}: how to use the encoders to perform classification over a new distribution of labels, using this pretrained model. We then update the parameters of the encoder for $\y$, and learn a new reference marginal distribution, by use of a second data set of size $M \ll N$---the \emph{fine-tuning process}, with
goal being improved bespoke classification. This second data set is defined in:

\begin{dataassumption} \label{as:paired_data2} The fine-tuning data comprises pairs $\{(\xhat^i,\yhat^i)\}_{i=1}^M$ drawn i.i.d.\thinspace from a joint distribution $\mufinetuning(\x,\y)$ on $\mathcal{\X} \times \mathcal{\Y}.$
\end{dataassumption}

To describe zero-shot classification we first define finite set 
$$\mathcal{C} = \{\yhat^i\}_{i=1}^K \subseteq \{\yhat^i\}_{i=1}^M.$$ 
The set $\mathcal{C}$ comprises a set of \emph{labels} which we wish to assign in the classification task. Without loss of generality we
have ordered them to be the first $K$ members of the entire fine-tuning data set, marginalized on $\mathcal{\Y}.$
We then define the empirical measures  
\begin{equation}
\label{eq:empiricalK}   
\mufinetuning^K \coloneqq \frac{1}{K}\sum_{\ell=1}^K \delta_{(\x^\ell,\y^\ell)}, \quad
\mufinetuning_\x^K \coloneqq \frac{1}{K}\sum_{\ell=1}^K \delta_{\xhat^\ell}, \quad \mufinetuning_\y^K \coloneqq \frac{1}{K}\sum_{\ell=1}^K \delta_{\yhat^\ell}. 
\end{equation}
We may use $\mufinetuning_\y^K$ as reference measure to define a conditional measure from a pair of pretrained encoders with parameter
$\theta=(\theta_\x,\theta_\y)$; this leads to the weighted empirical measure
\begin{align}
\nunew_{\y|\x}^K(\y|\x;\theta) &\propto \exp(\langle \normg_\x(\x;\theta_\x), \normg_\y(\y;\theta_\y) \rangle / \tau)\mufinetuning_\y^K(d\y) = \sum_{i=1}^K w_i(\x;\theta) \delta_{\y^i}.
\end{align}
Here, $w_i(\x;\theta) \in [0,1]$ are normalized weights, defined for any sample $\x$, and given by
$$w_i(\x;\theta) \coloneqq \frac{\omega_i(\x;\theta)}{\sum_{\ell=1}^K \omega_\ell(\x;\theta)}, \qquad \omega_i(\x;\theta) = \exp(\langle \normg_\x(\x;\theta_\x), \normg_\y(\y^i;\theta_\y) \rangle/\tau).$$
The most likely label for each input $\x$ is then given by the mode 
$$\y^*(\x) \in \argmax_{\y^i \in \mathcal{C}} \nunew_{\y|\x}^K(\y^i|\x;\theta) = \argmax_{i=1,\dots,K} w_i(\x).$$

\begin{example} In the Bayesian interpretation, where we view the titling proportional to $\pmodel(\x,\y;\theta)$ as a likelihood function, classification over an equally weighted marginal distribution of labels is a MAP estimator. It can also be interpreted as the restricted maximum likelihood estimator
$$\arg\max_{\y^i \in \mathcal{C}} \pmodel(\y^i|\x;\theta) = \argmax_{\y^i \in \mathcal{C}} \, \langle \normg_\x(\x;\theta_\x), \normg_\y(\y^i;\theta_\y) \rangle .$$
\end{example}

As in the population loss setting one may fine-tune the parameters of the $\y$ encoder and redefine the prior to align the classification algorithm
with the fine-tuning data set given in \Cref{as:paired_data2}. To this end we now fix the pretrained parameters $\theta_\x$ of the $\x$ encoder. 
We then define a new prior $\munef_{\y}^K$ defined by parameter $\theta_\phi \coloneqq F=(F_1, \cdots, F_K) \in \R^K$:
\begin{equation}
    \munef_{\y}^K(d\y;\theta_\phi) = \sum_{i=1}^K \frac{ \exp(F_i)}{\sum_{j=1}^K \exp(F_j)} \delta_{\yhat^i}.
\end{equation}
Thus $\munef_\y^K$ is a reweighting of $\munew_\y^K$, analogous to the re-weighting of $\munew_\y$, to define $\munef_\y$, defined at the population level. Then, the conditional measure for $\y|\x$ depending on parameter $\vartheta = (\theta_\y,F)$ is given by the weighted empirical measure
\begin{equation*}
\nunef_{\y|\x}^K(\y|\x;\vartheta) \propto \exp(\langle \normg_\x(\x;\theta_\x), \normg_\y(\y;\theta_\y) \rangle /\tau)\munef_\y^K(d\y;\theta_\phi)=\sum_{i=1}^K w_i(\x;\theta) \delta_{\y^i}, 
\end{equation*}
where $w_i(\x;\theta) \in [0,1]$ are normalized weights defined as 
\begin{align}
w_i(\x;\theta) = \frac{\omega_i(\x;\theta)}{\sum_{l = 1}^M \omega_l(\x;\theta)}, \qquad \omega_i(\x;\theta) = \exp\bigl( \langle \normg_\x(\x;\theta_\x), \normg_\y(\y^i;\theta_\y) \rangle / \tau + F_i \bigr).
\end{align}
We wish to adjust $\theta_v$, along with $F$, to improve performance on the specified classifiers $\mathcal{C}$.
But, to classify an input $\x$ among a finite set of labels from $\mathcal{C}$, we only need the action of the encoder at $\y^i \in \mathcal{C}$. Thus, let the matrix $G \in \R^{\de \times K}$ %
be a set of weights where each column $G_{i} = \normg_\y(\y^i;\theta_\y) \in \R^{\de}$ contains the evaluation of the encoder for label $v^i$. 
We let $e_\y$ be the  unit-vector corresponding to the label $\y$ (i.e., one-hot encoding)
and now redefine $\vartheta=(G,F)$, noting that we only need to optimize over $(G,F)$ not $(\theta_\y,F).$  We now fine-tune $\vartheta$ using the 
dataset $\mufinetuning$ prescribed in Data Assumption~\ref{as:paired_data2} by minimizing the discrete version of the one-sided loss in~\eqref{do:J1}\footnote{Throughout the paper $\langle \cdot\,. \,\cdot \rangle$ denotes the inner-product on $\R^{\de};$ note that the inner-product here is on $\R^K.$}:
\begin{subequations}  
\label{op:finetuning0}
\begin{align}
    -\LFINE^K(\vartheta) &= \mathbb{E}_{(\x,\y) \sim \mufinetuning^K}\Bigl[\bigl\langle e_{\y}, \bigl(G^\top \normg_\x(\x;\theta_\x) / \tau + F\bigr) \bigr\rangle_{\R^K} \Bigr]\\
    &\quad\quad\quad - \mathbb{E}_{\x \sim \mufinetuning_\x^K} \log \mathbb{E}_{\y \sim \mufinetuning_\y^K} \exp\Bigl[\bigl\langle e_{\y}, \bigl(G^\top \normg_\x(\x;\theta_\x) / \tau + F\bigr) \bigr\rangle_{\R^K}\Bigr], \nonumber \\
    \tfine &\in \argmin \LFINE^K(\vartheta).
\end{align}
\end{subequations}
Given the learned parameter of the fine-tuned model, the classifier for each input $\y$ is defined as the mode of the conditional distribution for $v$ given $u$: 
\footnote{The optimization problem defined by equations \eqref{op:finetuning0} has population level analog Optimization Problem \ref{op:finetuning};
likewise \eqref{eq:mode_labelsA0} has population level analog~\eqref{eq:mode_labelsA}.}
\begin{equation}
\label{eq:mode_labelsA0}
\y^*(\x) \in \argmax_{\y^i \in \mathcal{C}} \nunef_{\y|\x}^K(\y|\x;\tfine) = \argmin_{i=1,\dots,K} \bigl(G_i^\top \normg_\x(\x;\theta_\x) + F_i \bigr).
\end{equation}

\section{Gaussian Setting}
\label{sec:GS}

In this section we study the contrastive learning problem when the data distribution is a multivariate Gaussian. This enables explicit insights into the capabilities of the standard approach to contrastive learning and the suggested variants on it proposed in~\Cref{sec:PP}; these insights are obtained via theory and via straightforward numerical experiments. The main theoretical highlights are contained in Corollaries \ref{cor:matching_means}, \ref{cor:matching_meansandcov} and \ref{corr:marginals_joint_loss}. And these results are supported by further theory concerning low-rank 
approximation and by numerical studies that take our understanding beyond the theory.

We let $(\x,\y) \in \mathcal{\X} \times  \mathcal{\Y} \coloneqq 
\R^{\dx} \times \R^{\dy}$ be a multivariate centered Gaussian random variable with distribution $\mu=\mathcal{N}(0,\cov)$. We assume that the covariance matrix $\cov$ has block form
$$\cov = \begin{bmatrix} \cov_{\x\x} & \cov_{\x\y} \\ \cov_{\y\x} & \cov_{\y\y} \end{bmatrix},$$
where $\cov$ is strictly positive-definite. Matrices $\cov_{\x\x}$ and $\cov_{\y\y}$ are also then necessarily strictly positive-definite and hence invertible. The two marginal distributions for $\x$ and $\y$ are given by
\begin{align} \label{eq:Gaussian_marginals}
    \mu_{\x}(\cdot) = \mathcal{N}(0,  \cov_{\x\x}), \qquad 
    \mu_{\y}(\cdot) = \mathcal{N}(0, \cov_{\y\y});
\end{align}
moreover, the two conditional distributions for $\x|\y$ and $\y|\x$ are given by
\begin{subequations} \label{eq:Gaussian_conditionals}
\begin{align} 
    \mu_{\x|\y}(\cdot|\y) &= \mathcal{N}(\cov_{\x\y}\cov_{\y\y}^{-1}\y, \cov_{\x|\y}), \qquad \cov_{\x|\y} \coloneqq \cov_{\x\x} - \cov_{\x\y}\cov_{\y\y}^{-1}\cov_{\y\x}, \label{eq:true_conditional_xy}\\
    \mu_{\y|\x}(\cdot|\x) &= \mathcal{N}(\cov_{\y\x}\cov_{\x\x}^{-1}\x, \cov_{\y|\x}), \qquad \cov_{\y|\x} \coloneqq \cov_{\y\y} - \cov_{\y\x}\cov_{\x\x}^{-1}\cov_{\x\y}.\label{eq:true_conditional_yx}
\end{align}
\end{subequations}
In~\Cref{sec:PP} we formulate the contrastive approach to learning in terms of finding a representation of the joint distribution as
a change of measure (tilting) from  a reference measure defined as the independent product of the
marginals in \eqref{eq:Gaussian_marginals}. The key question we focus on in this section
is the ability of the learned joint distribution to accurately
replicate the conditionals in~\eqref{eq:Gaussian_conditionals}. 
To enable explicit analysis of this question we employ linear encoders and log-quadratic tiltings so that the learned joint
distribution is also Gaussian.

We introduce embedding (latent space) dimension $\de$ and seek tiltings based on the linear encoders
\begin{equation}
    \label{eq:LE}
\g_\x(\x) = G \x, \qquad \g_\y(\y) = H \y,
\end{equation}
with $G \in \R^{\de \times \dx}$ and $H \in \R^{\de \times \dy}$ and $\de \leq \min(\dx,\dy)$\footnote{It is straightforward to generalize the analysis to a setting with non-zero mean $\mean$ in $\mu.$ To consistently approximate the conditional means in this setting, one may consider affine encoders of the form $\g_\x(\x) = G\x + g,$ and $\g_\y(\y) = H\y + h$ with $g \in \R^{\dx}$ and $h \in \R^{\dy}$.}. We will work primarily in the settings of the tiltings defined in \eqref{eq:T1} and \eqref{eq:rz2}, making an exception to include normalization of the encoders only in some specific numerical experiments. We will also take the temperature parameter $\tau = 1$, since this can be absorbed into the matrices $G,H$. To determine parameters $G,H$ we will study both Optimization Problems \ref{opt:KLonesided} and \ref{do:J12} for the generalized conditional and the joint losses, respectively.

In~\Cref{sec:Gaussian_cosine} we study the standard Optimization Problem~\ref{do:J}, i.e., the particular symmetric case of Optimization Problem~\ref{opt:KLonesided}. We show we can match the conditional means of both distributions in~\eqref{eq:Gaussian_conditionals}, i
f the latent space has high enough dimension (Corollary~\ref{cor:matching_means}); otherwise we
identify the best low-rank approximation of the conditional means. However this methodology fails to represent the conditional covariances. \Cref{sec:Gaussian_quadraticform} uses a one-sided choice in Optimization Problem \ref{opt:KLonesided}, matching only the conditional of one modality on the other, not both. We show that in this setting we can match the mean and covariance of the one-sided conditional distribution if the latent space has high enough dimension (Corollary~\ref{cor:matching_meansandcov}); and we identify the optimization problem for the best low-rank approximation. In Subsection~\ref{ssec:gjoint}, we employ the Optimization Problem \ref{do:J12} based on matching the joint distribution rather than conditionals, and explicitly identify the family of minimizers which attain the minimum value of the objective function; we show that the joint loss leads to better
representation of the marginals (Corollary~\ref{corr:marginals_joint_loss}). \Cref{sec:Gaussian_visualization} is devoted to
numerical illustrations of the proven theoretical results, together with empirical extensions of the theory to new settings; these new settings include the addition of normalization constraints on the encoders in the setting of \Cref{sec:Gaussian_cosine}, loss matching both conditionals in  the setting of \Cref{sec:Gaussian_quadraticform} and the study of the loss function minimizing divergence of the joint distributions from \Cref{ssec:gjoint}.

\subsection{Cosine Distance: Conditional Loss} \label{sec:Gaussian_cosine}

Consider Optimization Problem \ref{do:J} with model class $\nu$ defined by the linear encoders in~\eqref{eq:LE}:
\begin{equation} \label{eq:joint_probability_modelI}
\pmeas(d\x,d\y;\theta) = \frac{1}{Z} \exp\Bigl(\langle G\x, H\y \rangle \Bigr) \mu_\x(d\x)\mu_\y(d\y).
\end{equation}
We may define parameter $\theta \coloneqq (G, H).$ Note, however, that $\pmeas$ is invariant to re-scaling of the parameters $G \mapsto \alpha U G$ and $H \mapsto \frac{1}{\alpha} U H$ for any scalar $\alpha > 0$ and any orthonormal matrix $U \in \R^{\de \times \de}$. 
To avoid this invariance we redefine $\theta=A \in \R^{\dx \times \dy}$, where $A \coloneqq G^\top H$.

For the Gaussian marginal distributions $\mu_\x,\mu_\y$ in~\eqref{eq:Gaussian_marginals}, $\nu$ is a multivariate Gaussian distribution $\pmeas = \mathcal{N}(\mean_\theta,\cov_\theta)$ where the mean $\mean_\theta$ and inverse covariance matrix $\cov_\theta^{-1}$ have the block-form 
\begin{equation} \label{eq:Gaussianmodel_moments}
    m_\theta = \begin{bmatrix} 0 \\ 0 \end{bmatrix}, \qquad \cov_\theta^{-1} = \begin{bmatrix} \cov_{uu}^{-1} & -A \\ -A^\top & \cov_{vv}^{-1} \end{bmatrix}.
\end{equation}
Moreover, the learnable model $\nu$ has the conditional distributions
\begin{subequations}
\begin{align}    
\label{eq:ConditionalModelI_Gaussian}
\pmeas_{\x|\y}(d\x|\y;A) &= \frac{1}{Z}\exp\Bigl(\langle \x, A \y \rangle\Bigr)\mu_\x(d\x) = \mathcal{N}(d\x;\cov_{\x\x} A\y, \cov_{\x\x}),\\
\pmeas_{\y|\x}(d\y|\x;A) &= \frac{1}{Z}\exp\Bigl(\langle  \x, A \y \rangle\Bigr)\mu_\y(d\y) = \mathcal{N}(d\y;\cov_{\y\y} A^\top \x, \cov_{\y\y}).
\end{align}
\end{subequations} 
By Remark \ref{rem:J2L2}, the relevant optimization problem 
over $\theta \coloneqq A$ is to minimize the loss function
\begin{align}
\LCOND(A) &= -\mathbb{E}_{(\x,\y) \sim \mu} \Bigl[ \langle \x, A \y \rangle  \Bigr] \notag \\
&\quad +\frac{1}{2} \mathbb{E}_{\x \sim \mu_\x} \log \mathbb{E}_{\y \sim \mu_\y}\Bigl[ \exp(\langle  \x, A \y \rangle)\Bigr] + \frac{1}{2}\mathbb{E}_{\y \sim \mu_\y} \log \mathbb{E}_{\x \sim \mu_\x}\Bigl[\exp(\langle  \x, A \y \rangle)\Bigr]. \label{eq:objective_linear_encoder}
\end{align}

The following result provides a closed form solution for the minimizer of this loss function. The proof of the theorem may be found in Appendix~\ref{app:GaussianProofs}.
Recall from~\Cref{ssec:notation} that
for any matrix $\Sigma$, $\Sigma_r$ denotes the rank$-r$ truncation of its singular value decomposition.

\begin{theorem} \label{thm:Gaussian_solution_cosine}
The minimizer of $\LCOND(A)$ over all matrices of size $\dx \times \dy$ 
is
\begin{equation} \label{eq:Gaussian_cond_cosine_minimizer_unconstrained}
A^\ast =
\cov_{\x\x}^{-1}\cov_{\x\y}\cov_{\y\y}^{-1}. \end{equation}
The minimizer of $\LCOND(A)$ over the set of rank-r matrices $\mathcal{A}_r = \{A \in \R^{\dx \times \dy}, \; \text{rank}(A) \leq r\}$ for $0 < r \leq \min(\dx,\dy)$ is given by
\begin{equation}
\label{eq:Gaussian_cond_cosine_minimizer_constrained}
A^*(r) = 
\cov_{\x\x}^{-1/2} (\cov_{\x\x}^{-1/2}\cov_{\x\y}\cov_{\y\y}^{-1/2})_r \cov_{\y\y}^{-1/2}.
\end{equation}
\end{theorem}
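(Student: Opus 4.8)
The plan is to compute $\LCOND(A)$ in closed form for Gaussian marginals, and then to minimize the resulting expression over $A$ (first unconstrained, then over the rank-$r$ manifold). The starting point is the expression \eqref{eq:objective_linear_encoder}. For the first term, $\mathbb{E}_{(\x,\y)\sim\mu}\langle \x, A\y\rangle = \Tr(A^\top \cov_{\x\y})$, using that the cross-covariance of $\mu$ is $\cov_{\x\y}$. For the second and third terms I would use the Gaussian moment-generating function: for $\x\sim\mu_\x=\mathcal{N}(0,\cov_{\x\x})$, $\mathbb{E}_{\x\sim\mu_\x}\exp(\langle \x, A\y\rangle) = \exp\bigl(\tfrac12 \langle A\y, \cov_{\x\x} A\y\rangle\bigr)$, so that $\log\mathbb{E}_{\x\sim\mu_\x}\exp(\langle \x, A\y\rangle) = \tfrac12 \y^\top A^\top \cov_{\x\x} A\y$; taking the outer expectation over $\y\sim\mu_\y$ gives $\tfrac12\Tr(A^\top\cov_{\x\x}A\cov_{\y\y})$. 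The third term is identical by symmetry. Hence
\begin{equation*}
\LCOND(A) = -\Tr(A^\top\cov_{\x\y}) + \tfrac12 \Tr(A^\top\cov_{\x\x}A\cov_{\y\y}).
\end{equation*}

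\textbf{Unconstrained minimizer.} This is a strictly convex quadratic in $A$ (strict because $\cov_{\x\x},\cov_{\y\y}\succ0$ forces the Hessian, which acts as $A\mapsto \cov_{\x\x}A\cov_{\y\y}$, to be positive definite on $\R^{\dx\times\dy}$). Setting the gradient to zero gives the normal equation $\cov_{\x\x}A\cov_{\y\y} = \cov_{\x\y}$, whose unique solution is $A^\ast = \cov_{\x\x}^{-1}\cov_{\x\y}\cov_{\y\y}^{-1}$, proving \eqref{eq:Gaussian_cond_cosine_minimizer_unconstrained}. (One can double-check by noting that the implied conditional means $\cov_{\x\x}A^\ast\y = \cov_{\x\y}\cov_{\y\y}^{-1}\y$ and $\cov_{\y\y}(A^\ast)^\top\x = \cov_{\y\x}\cov_{\x\x}^{-1}\x$ indeed match the true conditional means in \eqref{eq:Gaussian_conditionals}.)

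\textbf{Rank-$r$ minimizer.} The idea is to change variables to symmetrize the quadratic form. Write $B \coloneqq \cov_{\x\x}^{1/2} A \cov_{\y\y}^{1/2}$; note that $A$ has rank $\le r$ if and only if $B$ does, since $\cov_{\x\x}^{1/2},\cov_{\y\y}^{1/2}$ are invertible. Substituting, $\Tr(A^\top\cov_{\x\x}A\cov_{\y\y}) = \Tr(B^\top B) = \|B\|_F^2$ and $\Tr(A^\top\cov_{\x\y}) = \Tr\bigl(B^\top \cov_{\x\x}^{-1/2}\cov_{\x\y}\cov_{\y\y}^{-1/2}\bigr) = \langle B, M\rangle_F$ where $M \coloneqq \cov_{\x\x}^{-1/2}\cov_{\x\y}\cov_{\y\y}^{-1/2}$. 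Thus $\LCOND = -\langle B, M\rangle_F + \tfrac12\|B\|_F^2 = \tfrac12\|B - M\|_F^2 - \tfrac12\|M\|_F^2$, and minimizing over rank-$\le r$ matrices $B$ is exactly the best Frobenius-norm rank-$r$ approximation problem, solved by the Eckart--Young--Mirsky theorem: $B^\ast(r) = M_r$, the rank-$r$ SVD truncation. Unwinding, $A^\ast(r) = \cov_{\x\x}^{-1/2} B^\ast(r) \cov_{\y\y}^{-1/2} = \cov_{\x\x}^{-1/2}(\cov_{\x\x}^{-1/2}\cov_{\x\y}\cov_{\y\y}^{-1/2})_r\cov_{\y\y}^{-1/2}$, which is \eqref{eq:Gaussian_cond_cosine_minimizer_constrained}.

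\textbf{Anticipated obstacle.} The computations above are routine; the only genuinely delicate point is verifying that $\LCOND(A)$ is finite and that interchanging expectation, logarithm, and the substitution of the Gaussian MGF are all legitimate — i.e., that the inner expectations $\mathbb{E}_{\x\sim\mu_\x}\exp(\langle\x,A\y\rangle)$ are finite for all $\y$ (they are, by Gaussianity, for every fixed $A$ and $\y$) and that the outer expectation of the resulting quadratic is finite (it is, since $\mu_\y$ has all moments). A secondary point worth a remark is the non-uniqueness at the level of $(G,H)$: the minimizer is unique only in the variable $A = G^\top H$, consistent with the rescaling/rotation invariance noted before the theorem; over rank-$r$ matrices uniqueness of $A^\ast(r)$ additionally requires a spectral gap $\sigma_r(M) > \sigma_{r+1}(M)$, which I would mention but not belabor.
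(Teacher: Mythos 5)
Your proposal is correct and takes essentially the same route as the paper: evaluate the inner log-moment-generating-function terms in closed form to reduce $\LCOND$ to a quadratic in $A$, complete the square into a weighted Frobenius-norm objective, and solve the rank-constrained case by best low-rank approximation. The only cosmetic difference is that you pass to $B=\cov_{\x\x}^{1/2}A\cov_{\y\y}^{1/2}$ and apply the classical Eckart--Young--Mirsky theorem, whereas the paper cites the weighted low-rank approximation result of Friedland and Torokhti; with invertible weights these are the same argument.
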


The following corollary is also proved in
Appendix~\ref{app:GaussianProofs}:

\begin{corollary} \label{cor:matching_means}
If the parameter in the learnable model $\nu$~\eqref{eq:joint_probability_modelI} is chosen to be $A= G^\top H = A^\ast$, Theorem~\ref{thm:Gaussian_solution_cosine} 
results in the approximate conditional distributions:
\begin{subequations}
    \begin{align}
    \pmeas_{\x|\y}(\x|\y;A^\ast) &= \mathcal{N}(\cov_{\x\y}\cov_{\y\y}^{-1}\y, \cov_{\x\x}) \label{eq:cosine_optconditional_xy_unconstrained} \\
    \pmeas_{\y|\x}(\y|\x;A^\ast) &= \mathcal{N}(\cov_{\y\x\y} \cov_{\x\x}^{-1}\x, \cov_{\y\y}), \label{eq:cosine_optconditional_yx_unconstrained}
    \end{align}
\end{subequations} 
Thus, the conditional means of $\pmeas$ in~\eqref{eq:cosine_optconditional_xy_unconstrained} and~\eqref{eq:cosine_optconditional_yx_unconstrained}  match those of the data distribution $\mu$ in~\eqref{eq:true_conditional_xy} and~\eqref{eq:true_conditional_yx}, respectively; however, the conditional variances of $\pmeas$ are strictly larger, in the cone of positive definite matrices, than those of the data distribution $\mu$, unless the data distribution is in product form with respect to $\x$ and $\y$ when they coincide.
\end{corollary}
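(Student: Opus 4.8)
The plan is to prove the corollary by direct substitution into the conditional formulas already derived in \eqref{eq:ConditionalModelI_Gaussian}, followed by a short positive-semidefiniteness computation; nothing beyond Theorem~\ref{thm:Gaussian_solution_cosine} and the block Gaussian identities \eqref{eq:Gaussian_conditionals} is needed. First I would substitute $A = A^\ast = \cov_{\x\x}^{-1}\cov_{\x\y}\cov_{\y\y}^{-1}$ into \eqref{eq:ConditionalModelI_Gaussian}. Since $\pmeas_{\x|\y}(d\x|\y;A) = \mathcal{N}(d\x;\cov_{\x\x}A\y,\cov_{\x\x})$, the mean becomes $\cov_{\x\x}A^\ast\y = \cov_{\x\x}\cov_{\x\x}^{-1}\cov_{\x\y}\cov_{\y\y}^{-1}\y = \cov_{\x\y}\cov_{\y\y}^{-1}\y$, while the covariance is unchanged at $\cov_{\x\x}$; this is exactly \eqref{eq:cosine_optconditional_xy_unconstrained}. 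The identity \eqref{eq:cosine_optconditional_yx_unconstrained} follows in the same way, using $A^{\ast\top} = \cov_{\y\y}^{-1}\cov_{\y\x}\cov_{\x\x}^{-1}$ (by symmetry of $\cov_{\x\x},\cov_{\y\y}$ and $\cov_{\x\y}^\top = \cov_{\y\x}$), so that $\cov_{\y\y}A^{\ast\top}\x = \cov_{\y\x}\cov_{\x\x}^{-1}\x$.

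Next I would read off the matching of means. The conditional mean of $\pmeas_{\x|\y}(\cdot|\y;A^\ast)$ is $\cov_{\x\y}\cov_{\y\y}^{-1}\y$, which is precisely the conditional mean of $\mu_{\x|\y}$ in \eqref{eq:true_conditional_xy}; likewise the conditional mean $\cov_{\y\x}\cov_{\x\x}^{-1}\x$ of $\pmeas_{\y|\x}(\cdot|\x;A^\ast)$ coincides with that of $\mu_{\y|\x}$ in \eqref{eq:true_conditional_yx}. For the covariances I would compute the Loewner difference directly: using \eqref{eq:true_conditional_xy},
\[
\cov_{\x\x} - \cov_{\x|\y} = \cov_{\x\x} - \bigl(\cov_{\x\x} - \cov_{\x\y}\cov_{\y\y}^{-1}\cov_{\y\x}\bigr) = \cov_{\x\y}\cov_{\y\y}^{-1}\cov_{\y\x},
\]
and symmetrically $\cov_{\y\y} - \cov_{\y|\x} = \cov_{\y\x}\cov_{\x\x}^{-1}\cov_{\x\y}$.

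Since $\cov_{\y\y}$ is strictly positive definite, so is $\cov_{\y\y}^{-1}$, and writing $\cov_{\y\y}^{-1} = \cov_{\y\y}^{-1/2}\cov_{\y\y}^{-1/2}$ gives, for every $\x' \in \R^{\dx}$, $\langle \x', \cov_{\x\y}\cov_{\y\y}^{-1}\cov_{\y\x}\x'\rangle = |\cov_{\y\y}^{-1/2}\cov_{\y\x}\x'|^2 \ge 0$; hence the difference is positive semidefinite, and it is the zero matrix iff $\cov_{\y\y}^{-1/2}\cov_{\y\x} = 0$, i.e.\ iff $\cov_{\y\x} = \cov_{\x\y}^\top = 0$. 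But for a jointly Gaussian $(\x,\y)$ the condition $\cov_{\x\y} = 0$ is exactly independence, i.e.\ $\mu = \mu_\x\otimes\mu_\y$ is in product form, and in that case \eqref{eq:true_conditional_xy} gives $\cov_{\x|\y} = \cov_{\x\x}$, so the two covariances coincide. When $\mu$ is not in product form the difference is a nonzero positive semidefinite matrix, so $\cov_{\x\x}\succeq\cov_{\x|\y}$ with $\cov_{\x\x}\neq\cov_{\x|\y}$; the identical argument applies to $\cov_{\y\y}$ versus $\cov_{\y|\x}$ using $\cov_{\x\x}^{-1}\succ 0$.

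I do not expect any serious obstacle here; the only point needing care is the precise reading of ``strictly larger in the cone of positive definite matrices.'' What the computation above actually establishes is that the difference of covariances is positive semidefinite and vanishes exactly in the product-form case. Genuine strict positive-definiteness of the difference, $\cov_{\x\x}\succ\cov_{\x|\y}$, additionally requires $\cov_{\x\y}$ to have full row rank (so that $\cov_{\y\y}^{-1/2}\cov_{\y\x}$ is injective), and likewise $\cov_{\y\x}$ full row rank for the other direction; I would either impose this as an explicit hypothesis or record it as a remark, so as not to obscure the clean dichotomy ``strictly larger, unless $\mu$ is in product form.''
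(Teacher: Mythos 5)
Your proposal is correct and follows essentially the same route as the paper: substitute $A^\ast$ from Theorem~\ref{thm:Gaussian_solution_cosine} into the model conditionals \eqref{eq:ConditionalModelI_Gaussian} to read off the means $\cov_{\x\y}\cov_{\y\y}^{-1}\y$ and $\cov_{\y\x}\cov_{\x\x}^{-1}\x$ with covariances $\cov_{\x\x}$, $\cov_{\y\y}$ (the paper does this for the rank-constrained $A^*(r)$ and then specializes). You additionally spell out the covariance comparison $\cov_{\x\x}-\cov_{\x|\y}=\cov_{\x\y}\cov_{\y\y}^{-1}\cov_{\y\x}\succeq 0$, vanishing exactly when $\cov_{\x\y}=0$, which the paper leaves implicit, and your caveat is apt: Loewner-strict inequality would require $\cov_{\x\y}$ of full row rank, so "strictly larger in the cone" is best read as a nonzero positive semidefinite difference unless $\mu$ is in product form.
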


This corollary shows that the conditional loss will not be consistent if used in a generative sense for sampling from approximate conditional distributions. Sections~\ref{sec:Gaussian_quadraticform} and~\ref{ssec:gjoint} consider a different alignment metric and loss function, respectively, and show that by doing so we can better characterize the approximated covariances. In particular,
Corollary~\ref{cor:matching_meansandcov} shows that it is possible to exactly match one of the conditional covariances. And
Corollary~\ref{corr:marginals_joint_loss} shows that using the joint loss rather than the conditional loss yields a closer approximation to the marginal covariances of the data distribution. 
For further discussion of the material in this subsection see Remarks \ref{rem:G1} and \ref{rem:G2}; there we discuss an alternative formulation of the loss function, and the setting with empirical data.

\subsection{Positive Quadratic Form: Conditional Loss} \label{sec:Gaussian_quadraticform}

Corollary \ref{cor:matching_means} shows that, with learnable measure of the form~\eqref{eq:joint_probability_modelI}, it is only
possible to match the conditional means, not the conditional covariances. To address this we now
consider the following generalization:
\begin{equation} \label{eq:Gaussian_joint_quadratic}
\pmeas(d\x,d\y;\theta) = \frac{1}{Z} \exp\Bigl(-\frac12|G\x-H\y|^2 \Bigr) \mu_\x(\x)\mu_\y(\y).
\end{equation}
The parameters of this model are $\theta=(G,H).$  The measure $\nu$ is a multivariate Gaussian distribution that depends on three matrix products $A = G^\top H \in \R^{\dx \times \dy}$, $B = G^\top G \in \R^{\dx \times \dx}$ and $C = H^\top H \in \R^{\dy \times \dy}$. %
Moreover, the joint measure has the form $\pmeas(\x,\y;\theta) = \mathcal{N}(m_\theta,\cov_\theta)$, where the mean and inverse covariance matrix have the block form
\begin{equation}
m_\theta = \begin{bmatrix} 0 \\ 0 \end{bmatrix}, \qquad \cov_\theta^{-1} = \begin{bmatrix} B + \cov_{\x\x}^{-1} & -A \\ 
-A^\top & C + \cov_{\y\y}^{-1} \end{bmatrix}.
\end{equation}
The parameterized model has the conditional distributions
\begin{subequations} \label{eq:Gaussian_conditionals_quadratic}
    \begin{align}
        \pmeas_{\x|\y}(\x|\y;\theta) &= \frac{1}{Z} \exp\Bigl(-\frac12|G\x-H\y|^2 \Bigr) \mu_\x(d\x) = \mathcal{N}\bigl(d\x;(B + \cov_{\x\x}^{-1})^{-1} A \y, (B + \cov_{\x\x}^{-1})^{-1}\bigr), \\
        \pmeas_{\y|\x}(\y|\x;\theta) &=  \frac{1}{Z} \exp\Bigl(-\frac12|G\x-H\y|^2 \Bigr) \mu_\y(d\y) = \mathcal{N}\bigl(d\y;(C + \cov_{\y\y}^{-1})^{-1} A^\top \x, (C + \cov_{\y\y}^{-1})^{-1}\bigr).
    \end{align}
\end{subequations}
The additional degrees of freedom introduced by using \eqref{eq:Gaussian_joint_quadratic} in place of \eqref{eq:joint_probability_modelI}
enable us to match the mean \emph{and} covariance of either one of the conditional distributions.
To this end we study the optimal solution when the objective only aims to match the conditional distribution for $\x|\y$. Analogous results may be derived for the $\y|\x$ conditional. To achieve this matching we employ Optimization Problem~\ref{opt:KLonesided} with $(\lambda_{\x},\lambda_{\y})=(2,0)$ to obtain the following objective that is minimized to determine the parameters $\theta$:
\begin{equation*}
\LCOND(\theta;2,0) = \mathbb{E}_{\y \sim \mu_\y} \left[-\mathbb{E}_{\x \sim \mu_{\x|\y}(\cdot|\y)}\Bigl(-\frac{1}{2}|G \x - H \y |^2 \Bigr) + \log \mathbb{E}_{\x \sim \mu_\x} \exp\Bigl(-\frac{1}{2}|G \x - H \y |^2\Bigr)\right].%
\end{equation*}
Expanding the quadratic forms, the objective simplifies to the form
\begin{equation*}
\LCOND(\theta;2,0) = \mathbb{E}_{(\x,\y) \sim \mu} \Bigl[\frac{1}{2}|G \x|^2 - \langle u, G^\top H v \rangle \Bigr] + \mathbb{E}_{\y \sim \mu_\y}\Bigl[\log \mathbb{E}_{\x \sim \mu_\x}\exp\Bigl(-\frac{1}{2}|G\x|^2 + \langle \x, G^\top H \y \rangle \Bigr)\Bigr], %
\end{equation*}
which  depends only on the matrix products $A=G^\top H$ and $B=G^\top G$.
Thus, we can consider optimization over parameters $(A,B)$ and hence define
\begin{align*} 
\LCOND(A,B;2,0) \coloneqq \mathbb{E}_{(\x,\y) \sim \mu} \Bigl[\frac{1}{2}\langle \x, B \x \rangle - \langle 
\x, A \y \rangle \Bigr] + \mathbb{E}_{\y \sim \mu_\y} \Bigl[ \log \mathbb{E}_{\x \sim \mu_\x} \exp\Bigl(-\frac{1}{2}\langle \x, B \x \rangle + \langle \x, A \y \rangle \Bigr)\Bigr].%
\end{align*}

Recall, again, that in Subsection~\ref{ssec:notation} we introduce the following notation:
for any matrix $\Sigma$, $\Sigma_r$ denotes the rank$-r$ truncation of its singular value decomposition.
The following theorem presents a closed form for the optimal matrix pair $(A,B)$  without and with rank constraints arising from using an embedding dimension $\de<\min(\dx,\dy)$. The proof may be found in Appendix~\ref{app:GaussianProofs}.

\begin{theorem} \label{thm:Gaussian_quadraticform_onesided}  
The minimizer of $\LCOND(A,B;2,0)$ over all matrices $A$ of size $\dx \times \dy$ and matrices $B$ of size $\dx \times \dx$ is 
\begin{subequations}
    \begin{align}
    A^\ast &= \cov_{\x|\y}^{-1}\cov_{\x\y}\cov_{\y\y}^{-1} \\ %
    B^\ast &= \cov_{\x\x}^{-1}\cov_{\x\y}\cov_{\y|\x}^{-1}\cov_{\y\x}\cov_{\x\x}^{-1}. \label{eq:optB_onesided}
    \end{align}
\end{subequations}
The minimizer of $\LCOND(A,B;2,0)$  
over the rank-constrained sets of matrices
$\mathcal{A}_r = \{A \in \R^{\dx \times \dy}: \text{rank}(A) \leq r\}$ and $\mathcal{B}_r = \{B \in \R^{\dx \times \dx}: \text{rank}(B) \leq r\}$ for $0 < r \leq \min(\dx,\dy)$ is given by %
\begin{subequations} \label{eq:opt_rank_constrained_onesided}
    \begin{align}
    A^*(r) &= (B^*(r) + \cov_{\x\x}^{-1})^{1/2} ((B^*(r) + \cov_{\x\x}^{-1})^{1/2}\cov_{\x\y}\cov_{\y\y}^{-1/2})_r \cov_{\y\y}^{-1/2}, \\
    B^*(r) &= \argmin_{B \in \mathcal{B}_r} \Tr((B + \cov_{\x\x}^{-1})\cov_{\x|\y}) + \log\left|(B + \cov_{\x\x}^{-1})\cov_{\x|\y}\right| \, + \label{eq:opt_onesided_Bproblem} \\
    &\qquad\qquad\quad \left\|\left((B + \cov_{\x\x}^{-1})^{1/2}\cov_{\x\y}\cov_{\y\y}^{-1/2}\right)_r - (B + \cov_{\x\x}^{-1})^{1/2}\cov_{\x\y}\cov_{\y\y}^{-1/2}\right\|_{F}^2 \nonumber.
    \end{align}
\end{subequations}
\end{theorem}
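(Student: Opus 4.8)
The plan is to reduce $\LCOND(A,B;2,0)$ to an explicit scalar function of the matrix pair $(A,B)$ by performing the Gaussian integral in the inner expectation, and then to minimize in two stages, exploiting $\min_{(A,B)}=\min_B\min_A$: first over $A$ with $B$ fixed (a low-rank matrix approximation problem), then over $B$. \emph{Evaluating the loss.} For $\x\sim\mu_\x=\mathcal{N}(0,\cov_{\x\x})$ and symmetric $B$ with $M\coloneqq B+\cov_{\x\x}^{-1}\succ 0$, the Gaussian moment generating function gives $\mathbb{E}_{\x\sim\mu_\x}\exp\bigl(-\tfrac12\langle\x,B\x\rangle+\langle\x,A\y\rangle\bigr)=|I+\cov_{\x\x}B|^{-1/2}\exp\bigl(\tfrac12\langle A\y,M^{-1}A\y\rangle\bigr)$. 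Taking logarithms, then $\mathbb{E}_{\y\sim\mu_\y}$, combining with the first term of $\LCOND(A,B;2,0)$, and completing the square in $A$ (writing $Y\coloneqq M^{1/2}\cov_{\x\y}\cov_{\y\y}^{-1/2}$) yields
\[
2\,\LCOND(A,B;2,0)=\Tr(B\cov_{\x\x})-\log|I+\cov_{\x\x}B|+\bigl\|M^{-1/2}A\cov_{\y\y}^{1/2}-Y\bigr\|_F^2-\|Y\|_F^2 .
\]
(When $M$ is not positive definite the inner expectation diverges, so the feasible set is implicitly restricted; this does not affect the minimizers.)

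\emph{Minimizing over $A$.} With $M$ fixed, only $\|M^{-1/2}A\cov_{\y\y}^{1/2}-Y\|_F^2$ depends on $A$, and $A\mapsto M^{-1/2}A\cov_{\y\y}^{1/2}$ is a linear bijection preserving rank. Over all $A$ the square can be set to zero, giving $A^\ast=M\cov_{\x\y}\cov_{\y\y}^{-1}=(B+\cov_{\x\x}^{-1})\cov_{\x\y}\cov_{\y\y}^{-1}$; over $\mathcal{A}_r$, the Eckart--Young theorem applied to $Y$ forces $M^{-1/2}A^\ast(r)\cov_{\y\y}^{1/2}=Y_r$ (the best rank-$r$ Frobenius approximant of $Y$), i.e. $A^\ast(r)=M^{1/2}\bigl(M^{1/2}\cov_{\x\y}\cov_{\y\y}^{-1/2}\bigr)_r\cov_{\y\y}^{-1/2}$, which with $M=B^\ast(r)+\cov_{\x\x}^{-1}$ is exactly the stated expression.

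\emph{Minimizing over $B$.} Substituting the optimal $A$ and using $I+\cov_{\x\x}B=\cov_{\x\x}M$ together with $\Tr(M\cov_{\x\x})-\Tr\bigl(M\cov_{\x\y}\cov_{\y\y}^{-1}\cov_{\y\x}\bigr)=\Tr(M\cov_{\x|\y})$, the loss collapses, up to a $\theta$-independent constant, to a function of $B$ alone of the form in \eqref{eq:opt_onesided_Bproblem}, the rank-$r$ residual term vanishing in the unconstrained case. With no rank constraint, $M\mapsto\Tr(M\cov_{\x|\y})-\log|M|$ is strictly convex on the positive-definite cone with stationary point $M=\cov_{\x|\y}^{-1}$, so $B^\ast=\cov_{\x|\y}^{-1}-\cov_{\x\x}^{-1}$; the Woodbury/Schur-complement identity $\cov_{\x|\y}^{-1}=\cov_{\x\x}^{-1}+\cov_{\x\x}^{-1}\cov_{\x\y}\cov_{\y|\x}^{-1}\cov_{\y\x}\cov_{\x\x}^{-1}$ turns this into \eqref{eq:optB_onesided}, and feeding $M=\cov_{\x|\y}^{-1}$ into $A^\ast=M\cov_{\x\y}\cov_{\y\y}^{-1}$ gives $A^\ast=\cov_{\x|\y}^{-1}\cov_{\x\y}\cov_{\y\y}^{-1}$.

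I expect the difficulty to be bookkeeping rather than conceptual: evaluating the Gaussian integral and completing the square with the determinant prefactor handled correctly, and then recognizing that after the change of variables $A\mapsto M^{-1/2}A\cov_{\y\y}^{1/2}$ the rank-constrained subproblem is literally a best Frobenius rank-$r$ approximation, so Eckart--Young applies verbatim and the residual cannot be simplified further (hence $B^\ast(r)$ is only characterized implicitly). A secondary care point is matching the trace-minus-log-determinant minimizer in the unconstrained regime to the closed form \eqref{eq:optB_onesided} via block-inverse identities and checking feasibility, e.g.\ $B^\ast\succeq 0$, which follows from $\cov_{\x|\y}\preceq\cov_{\x\x}$.
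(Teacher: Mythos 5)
Your proof is correct and follows essentially the same two-stage strategy as the paper: reduce the loss to a trace-plus-log-determinant term in $B=M-\cov_{\x\x}^{-1}$ (with $M\coloneqq B+\cov_{\x\x}^{-1}$) plus a Frobenius mean-mismatch term in $A$, minimize over $A$ first (unconstrained: zero the square; rank-constrained: Eckart--Young after the rank-preserving change of variables $A\mapsto M^{-1/2}A\cov_{\y\y}^{1/2}$), then over $B$. The only methodological difference is how the decomposition is obtained: you integrate the log-partition term directly via the Gaussian moment generating function (the paper's Lemma~\ref{lem:Gaussian_expintegral}) and complete the square, whereas the paper identifies $\nu_{\x|\y}$ as Gaussian and applies the closed-form KL divergence between Gaussians (Lemma~\ref{lem:KLdivergence_Gaussians}) to $\JCOND=\LCOND+\mathrm{const}$; the two routes are equivalent, yours being slightly more self-contained. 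One caveat: your computation yields, up to additive constants, the reduced objective $\Tr(M\cov_{\x|\y})-\log|M|+\bigl\|Y_r-Y\bigr\|_F^2$ with $Y\coloneqq M^{1/2}\cov_{\x\y}\cov_{\y\y}^{-1/2}$, i.e.\ the log-determinant enters with a \emph{minus} sign (equivalently $-\log|(B+\cov_{\x\x}^{-1})\cov_{\x|\y}|$), and this is precisely what makes the unconstrained stationarity condition $M=\cov_{\x|\y}^{-1}$, hence \eqref{eq:optB_onesided}, come out; the plus sign printed in \eqref{eq:opt_onesided_Bproblem}, and in the paper's intermediate display (which also drops the factor $\tfrac12$ multiplying the mean-mismatch term $\Delta$), is a sign slip. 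So you should not claim your reduced objective is literally ``of the form in \eqref{eq:opt_onesided_Bproblem}'' but state the corrected form; with that adjustment your relative weighting of the trace/log-determinant and residual terms is the correct one and your argument, including the Woodbury identification of $B^\ast$ and the feasibility remark $B^\ast\succeq 0$, is complete.
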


\begin{corollary} \label{cor:matching_meansandcov}
If the parameters $(A,B)$ in the learnable model $\nu$ given by \eqref{eq:Gaussian_joint_quadratic} are chosen to be $(A^\ast,B^\ast)$, the minimizers of $\LCOND(A,B;2,0)$, then the conditional distribution $\nu_{\x|\y}$, which only depends on the parameters $(A,B)$, is given by
\begin{equation}
    \pmeas_{\x|\y}(\x|\y;A^\ast,B^\ast) = \mathcal{N}(\cov_{\x\y}\cov_{\y\y}^{-1}\y, \cov_{\x\x} - \cov_{\x\y}\cov_{\y\y}^{-1}\cov_{\y\x}) \label{eq:cosine_optconditional_xy_unconstrained2},
\end{equation}
which exactly matches the conditional of the data distribution $\mu_{\x|\y}$.
\end{corollary}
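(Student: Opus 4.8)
The plan is to substitute the closed-form minimizers $(A^\ast,B^\ast)$ from Theorem~\ref{thm:Gaussian_quadraticform_onesided} into the conditional-distribution formula~\eqref{eq:Gaussian_conditionals_quadratic} and simplify. Recall that for general parameters $(A,B)$ equation~\eqref{eq:Gaussian_conditionals_quadratic} gives
$$\pmeas_{\x|\y}(\cdot|\y;A,B) = \mathcal{N}\bigl((B + \cov_{\x\x}^{-1})^{-1}A\y,\ (B+\cov_{\x\x}^{-1})^{-1}\bigr),$$
so the corollary reduces to verifying two identities: (i) $(B^\ast + \cov_{\x\x}^{-1})^{-1} = \cov_{\x\x} - \cov_{\x\y}\cov_{\y\y}^{-1}\cov_{\y\x} = \cov_{\x|\y}$, and (ii) $(B^\ast + \cov_{\x\x}^{-1})^{-1}A^\ast = \cov_{\x\y}\cov_{\y\y}^{-1}$. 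Everything else follows by direct substitution.

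For (i), insert $B^\ast = \cov_{\x\x}^{-1}\cov_{\x\y}\cov_{\y|\x}^{-1}\cov_{\y\x}\cov_{\x\x}^{-1}$ from~\eqref{eq:optB_onesided} to obtain
$$B^\ast + \cov_{\x\x}^{-1} = \cov_{\x\x}^{-1} + \cov_{\x\x}^{-1}\cov_{\x\y}\cov_{\y|\x}^{-1}\cov_{\y\x}\cov_{\x\x}^{-1}.$$
The right-hand side is precisely the upper-left block of $\cov^{-1}$ delivered by the block-matrix inversion (Schur complement) formula, which is known to equal $\cov_{\x|\y}^{-1}$; equivalently, it is the Woodbury identity applied to $\cov_{\x\x} - \cov_{\x\y}\cov_{\y\y}^{-1}\cov_{\y\x}$, using that $\cov$, $\cov_{\x\x}$, $\cov_{\y\y}$ are all invertible. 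Inverting both sides gives $(B^\ast + \cov_{\x\x}^{-1})^{-1} = \cov_{\x|\y}$, which is the covariance asserted in~\eqref{eq:cosine_optconditional_xy_unconstrained2}. For (ii), combine $A^\ast = \cov_{\x|\y}^{-1}\cov_{\x\y}\cov_{\y\y}^{-1}$ with (i) to get $(B^\ast + \cov_{\x\x}^{-1})^{-1}A^\ast = \cov_{\x|\y}\cov_{\x|\y}^{-1}\cov_{\x\y}\cov_{\y\y}^{-1} = \cov_{\x\y}\cov_{\y\y}^{-1}$, which is exactly the mean map of $\mu_{\x|\y}$ in~\eqref{eq:true_conditional_xy}. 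Hence $\pmeas_{\x|\y}(\cdot|\y;A^\ast,B^\ast) = \mu_{\x|\y}(\cdot|\y)$.

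The only step requiring any care is the Woodbury/Schur bookkeeping in (i); once that identity is in hand the result is immediate, so there is no substantive obstacle here — the analytic content lives in Theorem~\ref{thm:Gaussian_quadraticform_onesided}, and this corollary simply reads off the conditional distribution implied by the optimal $(A^\ast,B^\ast)$.
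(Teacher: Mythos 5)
Your proposal is correct and follows essentially the same route as the paper: substitute $(A^\ast,B^\ast)$ into the conditional formula~\eqref{eq:Gaussian_conditionals_quadratic}, use the Sherman--Morrison--Woodbury/Schur identity to show $(B^\ast+\cov_{\x\x}^{-1})^{-1}=\cov_{\x|\y}$, and then read off that the mean map is $\cov_{\x\y}\cov_{\y\y}^{-1}$. The paper's proof differs only in incidental details (it first states the rank-constrained conditional and notes the objective's independence of $C=H^\top H$), while your explicit verification of the mean via $A^\ast=\cov_{\x|\y}^{-1}\cov_{\x\y}\cov_{\y\y}^{-1}$ is the same computation made slightly more explicit.
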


The proof establishing this corollary is contained in Appendix~\ref{app:GaussianProofs}. See Remark \ref{rem:G3} for discussion of how to link matrices $A,B$ to $G,H$.

\subsection{Cosine Distance: Joint Loss}
\label{ssec:gjoint}

In this subsection we return to the proposed model form of learnable measure $\nu$ given by~\eqref{eq:joint_probability_modelI}, but
we consider the  Optimization Problem~\ref{do:J2}; recall that the loss function appearing therein is defined by minimizing the KL divergence between the learnable joint measure $\nu$ and the data measure $\mu$, rather than through conditionals. The relevant optimization problem over the parameter $A=G^\top H$ is to minimize the objective function
\begin{equation} \label{eq:JointLossGaussian}
  \LJOINT(A) = -\mathbb{E}_{(\x,\y) \sim \mu}[\langle \x, A \y\rangle] + \log \mathbb{E}_{(\x,\y) \sim \mu_{\x} \otimes \mu_\y}[\exp(\langle \x, A \y \rangle)].
\end{equation}
In what follows the following function, and its properties, will be useful in evaluating the properties of minimizers of $\LJOINT$.
\begin{definition} \label{def:sv_shrinkage}
Define $h\colon(0,1] \to \R^+$ by $h(\sigma)=\sigma^{-1}\Bigl(\frac12(1+4\sigma^2)^{\frac12}-\frac12\Bigr).$
\end{definition}
We note that $\h(\sigma) \in [0,\sigma)$, that $\lim_{\sigma \to 0}\sigma^{-1}h(\sigma)=1$ and that $h(1)=\frac12(\sqrt{5}-1) \in (0,1).$
The following theorem describes the form of the minimizer of the joint loss in closed form; proof may be found in~\Cref{app:GaussianProofs}. Once again, recall that in~\Cref{ssec:notation} we introduce the following notation:
for any matrix $\Sigma$, $\Sigma_r$ denotes the rank-r truncation of its singular value decomposition.

\begin{theorem} \label{thm:Gaussian_joint} 
Let $U\Sigma V^\top$ be the singular value decomposition of $\cov_{\x\x}^{-1/2}\cov_{\x\y}\cov_{\y\y}^{-1/2}$ where $\Sigma$ is a diagonal matrix of size $\min(\dx,\dy) \times \min(\dx,\dy)$. The minimizer of $\LJOINT(A)$ over all matrices of size $\dx \times \dy$ is given by 
\begin{equation} \label{eq:JointMin}
A^\ast %
= \cov_{\x\x}^{-1/2} U h(\Sigma) V^\top \cov_{\y\y}^{-1/2},
\end{equation}
where $h$ in Definition~\ref{def:sv_shrinkage} is applied elementwise to the diagonal entries of $\Sigma$.  %
Furthermore, the minimizer of $\LJOINT(A)$ over the rank-constrained set of matrices $\mathcal{A}_r = \{A \in \R^{\dx \times \dy}: \text{rank}(A) \leq r\}$ for $0 < r \leq \min(\dx,\dy)$ is given by
\begin{equation} \label{eq:JointMin_rankconstrained}
A^*(r) = \cov_{\x\x}^{-1/2} (U h(\Sigma) V^\top)_r \cov_{\y\y}^{-1/2}.
\end{equation}
\end{theorem}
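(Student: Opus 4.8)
The plan is to put $\LJOINT(A)$ into closed form, whiten the covariances to expose the matrix $\cov_{\x\x}^{-1/2}\cov_{\x\y}\cov_{\y\y}^{-1/2}$, and then reduce the matrix minimization to a scalar problem in each singular value via a von Neumann trace-inequality argument. First I would evaluate the two expectations in \eqref{eq:JointLossGaussian}. The first is linear, giving $\mathbb{E}_{(\x,\y)\sim\mu}[\langle\x,A\y\rangle]=\Tr(A^\top\cov_{\x\y})$. For the second, under $\mu_\x\otimes\mu_\y$ I condition on $\y$, so that $\langle\x,A\y\rangle$ is centred Gaussian with variance $\y^\top A^\top\cov_{\x\x}A\y$; the inner expectation over $\x$ is therefore $\exp(\tfrac12\y^\top A^\top\cov_{\x\x}A\y)$, and the remaining integral over $\y\sim\mathcal{N}(0,\cov_{\y\y})$ is the moment generating function of a Gaussian quadratic form, equal to $\det(I-\cov_{\y\y}A^\top\cov_{\x\x}A)^{-1/2}$ on the region where $\cov_{\y\y}A^\top\cov_{\x\x}A\prec I$ and $+\infty$ otherwise. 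Hence, on the feasible set, $\LJOINT(A)=-\Tr(A^\top\cov_{\x\y})-\tfrac12\log\det\bigl(I-\cov_{\y\y}A^\top\cov_{\x\x}A\bigr)$.

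Next I would substitute $\widetilde A:=\cov_{\x\x}^{1/2}A\cov_{\y\y}^{1/2}$ and set $K:=\cov_{\x\x}^{-1/2}\cov_{\x\y}\cov_{\y\y}^{-1/2}=U\Sigma V^\top$. A short computation gives $\LJOINT=-\langle\widetilde A,K\rangle_F-\tfrac12\log\det(I-\widetilde A^\top\widetilde A)$, feasible exactly when $\widetilde A^\top\widetilde A\prec I$; and since $\cov\succ0$ forces $I-KK^\top=\cov_{\x\x}^{-1/2}\cov_{\x|\y}\cov_{\x\x}^{-1/2}\succ0$, every singular value of $K$ lies in $[0,1)$, so $h$ (extended continuously by $h(0)=0$) is defined on the spectrum of $\Sigma$. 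The log-det term depends only on the singular values $d_1,d_2,\dots$ of $\widetilde A$, while von Neumann's trace inequality gives $\langle\widetilde A,K\rangle_F\le\sum_i d_i\sigma_i$ with equality iff $\widetilde A$ and $K$ have matched singular vectors in sorted order. Minimizing first over singular vectors and then over the $d_i$ thus decouples the problem into $\min_{d\in[0,1)}\bigl(-d\sigma-\tfrac12\log(1-d^2)\bigr)$ for each $\sigma=\sigma_i$; this scalar objective is strictly convex, with stationarity equation $\sigma d^2+d-\sigma=0$ whose admissible root is precisely $d=h(\sigma)$. Undoing the whitening recovers $A^\ast=\cov_{\x\x}^{-1/2}Uh(\Sigma)V^\top\cov_{\y\y}^{-1/2}$.

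For the rank-constrained case the same alignment argument applies, but now at most $r$ of the $d_i$ may be nonzero; pairing in sorted order forces the surviving ones to be matched with the $r$ largest $\sigma_i$. To justify that retaining the largest singular values is optimal I would observe that the per-mode optimal value $\psi(\sigma):=-h(\sigma)\sigma-\tfrac12\log(1-h(\sigma)^2)$ has derivative $-h(\sigma)\le0$ by the envelope theorem, hence is non-increasing; and since implicit differentiation of $\sigma h^2+h-\sigma=0$ yields $h'=(1-h^2)/(1+2\sigma h)>0$, so $h$ is strictly increasing, the diagonal $\operatorname{diag}(h(\sigma_1),\dots,h(\sigma_r),0,\dots)$ is exactly the rank-$r$ SVD truncation of $h(\Sigma)$. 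This gives $A^\ast(r)=\cov_{\x\x}^{-1/2}(Uh(\Sigma)V^\top)_r\cov_{\y\y}^{-1/2}$.

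I expect the main obstacle to be the reduction step rather than the algebra. One must carry the feasibility constraint $\widetilde A^\top\widetilde A\prec I$ through so that the $\log\det$ stays real, and, more importantly, argue carefully that the von Neumann lower bound is attained \emph{simultaneously} with the optimal singular-value profile — so that the aligned $\widetilde A=Uh(\Sigma)V^\top$ is a genuine global minimizer and not merely a critical point — and that, under the rank bound, the optimal choice of which $r$ singular values to retain is forced. By comparison the scalar optimization and the Gaussian integral are routine.
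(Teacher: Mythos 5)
Your proposal is correct and follows essentially the same route as the paper's proof: derive the closed form $-\Tr(A\cov_{\y\x})-\tfrac12\log\bigl|I-\cov_{\y\y}A^\top\cov_{\x\x}A\bigr|$ (you via conditioning and the Gaussian quadratic-form MGF, the paper via its block-Gaussian integral lemma), whiten to $\widehat{A}=\cov_{\x\x}^{1/2}A\cov_{\y\y}^{1/2}$, apply the von Neumann trace inequality to align singular vectors, and solve the decoupled scalar problems whose stationarity condition $\sigma d^2+d-\sigma=0$ yields $d=h(\sigma)$. Your additional care about the feasibility region $\widehat{A}^\top\widehat{A}\prec I$, the fact that $\sigma_i\in[0,1)$, and the envelope/monotonicity argument justifying which $r$ modes to retain in the rank-constrained case are points the paper's proof passes over quickly, and they strengthen rather than alter the argument.
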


The following corollary shows that the approximation to the joint data distribution resulting from  Theorem~\ref{thm:Gaussian_joint} has marginal distributions that are closer to the marginals of the data distribution $\mu$ than the approximation implied by minimizing the conditional losses in \Cref{sec:Gaussian_cosine}.
Without loss of generality, we study the approximation to marginal for $\x$.
An analogous result may be stated for the marginal on $\y$, by symmetry. The proof of the corollary can be found in Appendix~\ref{app:GaussianProofs}.

\begin{corollary} \label{corr:marginals_joint_loss}
Recall that the true marginal distribution $\mu_\x$ is the centred Gaussian with covariance $\mathcal{N}(0, \cov_{\x\x}).$ Consider the marginal distribution
implied with the model form \eqref{eq:joint_probability_modelI} under the (conditional) loss function \eqref{eq:objective_linear_encoder},
denoted $\nu_{\x}(d\x;A^*_{\mathsf{cond}})$, and under the (joint) loss function 
\eqref{eq:JointLossGaussian}, denoted  $\nu_{\x}(d\x;A^*_{\mathsf{joint}}).$ Then
\begin{subequations} 
    \begin{align} 
    \nu_{\x}(d\x;A^*_{\mathsf{cond}}) &= \mathcal{N}(0, \cov_{\x\x}^{1/2}U\bigl(I_{\dx} - \Sigma^2)^{-1}U^\top\cov_{\x\x}^{1/2}\bigr), \label{eq:marginal_dist_approx_cond}\\
    \nu_{\x}(d\x;A^*_{\mathsf{joint}}) &= \mathcal{N}(0, \cov_{\x\x}^{1/2}U\bigl(I_{\dx} - h(\Sigma)^2)^{-1}U^\top\cov_{\x\x}^{1/2}\bigr), \label{eq:marginal_dist_approx_joint}
    \end{align}
\end{subequations}
where $U$ and $\Sigma$ are comprised of the left singular vectors and singular values of $\cov_{\x\x}^{-1/2}\cov_{\x\y}\cov_{\y\y}^{-1/2}$. 
In view of the stated properties of function $h$, this shows that minimizing the joint loss results in a marginal distribution on $\x$ which is closer to the
true marginal distribution than that obtained by minimizing the conditional loss.
\end{corollary}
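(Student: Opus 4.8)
The plan is to reduce everything to the block-precision representation of $\nu$, compute the two approximate $\x$-marginals in closed form, and then read off the comparison from the scalar properties of the shrinkage function $h$. First I would recall from \eqref{eq:Gaussianmodel_moments} that, for the model \eqref{eq:joint_probability_modelI} with parameter $\theta=A$, the joint law is $\nu(\cdot\,;A)=\mathcal{N}(0,\cov_\theta)$ with
\[
\cov_\theta^{-1} = \begin{bmatrix} \cov_{\x\x}^{-1} & -A \\ -A^\top & \cov_{\y\y}^{-1} \end{bmatrix}.
\]
Marginalising a centred Gaussian over the $\y$-block replaces the $(\x,\x)$ block of the precision by its Schur complement with respect to the $(\y,\y)$ block, so $\nu_\x(d\x;A)=\mathcal{N}\bigl(0,(\cov_{\x\x}^{-1}-A\cov_{\y\y}A^\top)^{-1}\bigr)$, and the whole problem reduces to evaluating $A\cov_{\y\y}A^\top$ at the two minimizers.

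Next I would introduce the singular value decomposition $\cov_{\x\x}^{-1/2}\cov_{\x\y}\cov_{\y\y}^{-1/2}=U\Sigma V^\top$, taking $U,V$ square orthogonal and $\Sigma$ padded with zeros. Substituting $A^*_{\mathsf{cond}}=\cov_{\x\x}^{-1}\cov_{\x\y}\cov_{\y\y}^{-1}$ from Theorem~\ref{thm:Gaussian_solution_cosine} and cancelling a factor $\cov_{\y\y}$ gives $A^*_{\mathsf{cond}}\cov_{\y\y}(A^*_{\mathsf{cond}})^\top=\cov_{\x\x}^{-1/2}U\Sigma^2 U^\top\cov_{\x\x}^{-1/2}$; substituting $A^*_{\mathsf{joint}}=\cov_{\x\x}^{-1/2}Uh(\Sigma)V^\top\cov_{\y\y}^{-1/2}$ from Theorem~\ref{thm:Gaussian_joint} and using $V^\top V=I$ gives $A^*_{\mathsf{joint}}\cov_{\y\y}(A^*_{\mathsf{joint}})^\top=\cov_{\x\x}^{-1/2}Uh(\Sigma)^2 U^\top\cov_{\x\x}^{-1/2}$. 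Feeding these into the Schur-complement formula, factoring $\cov_{\x\x}^{-1/2}$ out of the precision, inverting, and using $UU^\top=I_{\dx}$, produces exactly \eqref{eq:marginal_dist_approx_cond} and \eqref{eq:marginal_dist_approx_joint}. I would then note that strict positive-definiteness of $\cov$ forces $\cov_{\x|\y}=\cov_{\x\x}^{1/2}U(I_{\dx}-\Sigma^2)U^\top\cov_{\x\x}^{1/2}$ to be positive-definite, hence every singular value of $\Sigma$ lies in $[0,1)$, and $h(\sigma)\in[0,\sigma)\subseteq[0,1)$ on that range, so both of the stated inverses are well-defined.

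For the comparison I would use the scalar inequality $0\le h(\sigma)<\sigma$ on $(0,1]$ (with $h(0)=0$), already recorded after Definition~\ref{def:sv_shrinkage}, to get $0\preceq h(\Sigma)^2\preceq\Sigma^2\prec I_{\dx}$. Inverting $I_{\dx}-(\cdot)$ and conjugating by the invertible matrix $\cov_{\x\x}^{1/2}U$ (a congruence, which preserves the Loewner order; here $\cov_{\x\x}$ itself is the case $\Sigma\mapsto0$) yields
\[
\cov_{\x\x}\preceq\cov_{\x\x}^{1/2}U(I_{\dx}-h(\Sigma)^2)^{-1}U^\top\cov_{\x\x}^{1/2}\preceq\cov_{\x\x}^{1/2}U(I_{\dx}-\Sigma^2)^{-1}U^\top\cov_{\x\x}^{1/2},
\]
with strict inequality on the span of the nonzero singular directions (equivalently, whenever $\cov_{\x\y}\neq0$). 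This is precisely the claim: both approximate marginals overestimate $\cov_{\x\x}$ in the positive-definite cone, and the joint-loss approximation overestimates by strictly less.

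I do not expect a genuine obstacle. The only care required is bookkeeping: matching the singular-value-decomposition dimension conventions so that $U$ may be taken square orthogonal with $UU^\top=I_{\dx}$ and $\Sigma^2, h(\Sigma)^2$ read as the appropriate $\dx\times\dx$ diagonal matrices, and checking the positive-definiteness bounds $\Sigma\prec I_{\dx}$ and $h(\Sigma)\prec I_{\dx}$ that legitimise the inverses. The substance of the comparison is entirely the elementary scalar fact $h(\sigma)<\sigma$.
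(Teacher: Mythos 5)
Your argument is correct and follows essentially the same route as the paper: obtain the $\x$-marginal covariance from the Schur complement of the block precision of $\nu(\cdot;A)$, substitute the SVD-based forms of $A^*_{\mathsf{cond}}$ and $A^*_{\mathsf{joint}}$ to get $D=\Sigma$ and $D=h(\Sigma)$, and conclude via $0\le h(\sigma)<\sigma$ and congruence-invariance of the Loewner order. The only differences are cosmetic (you keep the precision Schur complement where the paper writes its Woodbury-expanded form, and you add the welcome check $\Sigma\prec I_{\dx}$ from positive-definiteness of $\cov_{\x|\y}$), so your proposal matches the paper's proof.
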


\subsection{Numerical Illustrations} \label{sec:Gaussian_visualization}

In this section, we consider a two-dimensional Gaussian target distribution $\mu$ and visualize the approximations that result from various contrastive learning problems. The target distribution we consider is $\mu = \mathcal{N}(\mean,\cov)$ with mean $m$ and covariance matrix $\cov$ given by $$\mean = \begin{bmatrix} 0 \\ 0 \end{bmatrix}, \qquad \cov = \begin{bmatrix} 1.5 & 1 \\ 1 & 1.5\end{bmatrix}.$$ 
Here $u,v \in \R$ are the two components of the vector in $\R^2$ governed by Gaussian $\mu.$ Because of the low dimensionality of the example the latent space
also has dimension $\de=1.$ (In Section \ref{sec:experiments} we will consider Gaussian numerical examples where the embedding dimension is smaller than the dimension of the spaces in which $u$ and $v$ lie.)
In Figure~\ref{fig:visualize-data} we plot the reference measure $\mu_{\x} \otimes \mu_\y$, used to define various contrastive learning models through tilting, and the target distribution $\mu$.

\begin{figure}[!ht]
\centering
\includegraphics[width=0.7\textwidth]{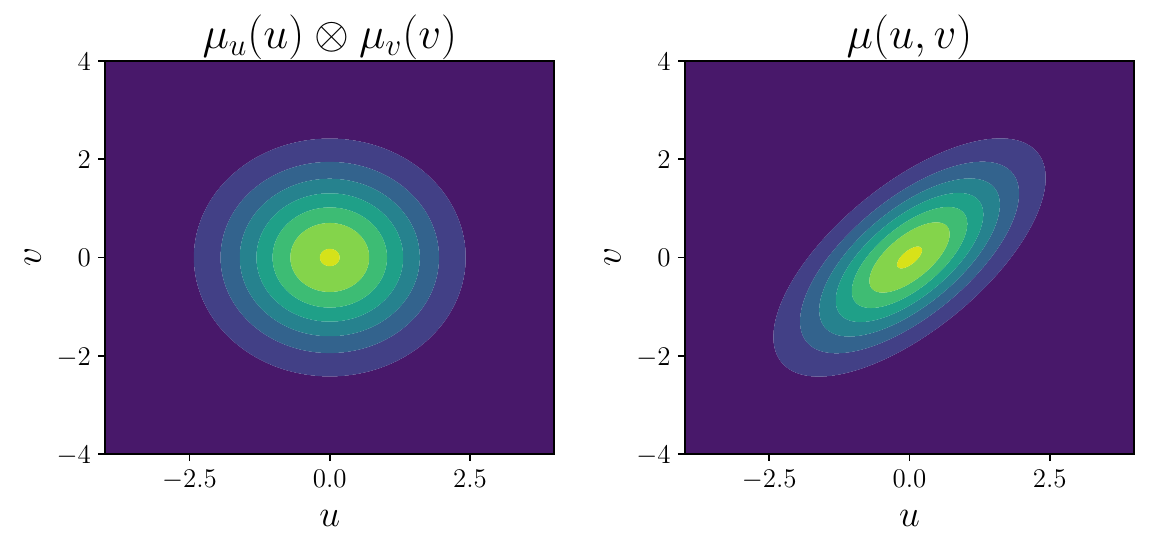}
\vspace{-0.3cm}
\caption{Two-dimensional densities for the reference Gaussian distribution given by the product of marginals $\mu_{\x} \otimes \mu_\y$ (\emph{left}) and the target Gaussian distribution $\mu$ (\emph{right}). CLIP aims to learn a model that tilts the reference distribution (on the left) to match the target distribution (on the right). 
\label{fig:visualize-data}}
\end{figure}

First, we investigate the approximate distributions that result from the learning problems considered in Sections~\ref{sec:Gaussian_cosine}-\ref{ssec:gjoint} for various alignment metrics and objective functions. Figures~\ref{fig:visualize-marg_cosine_cond}-\ref{fig:visualize-marg_cosine_joint} plot the conditional and marginal densities of the models $\nu(\cdot,\cdot;\theta^*)$ corresponding to the optimal parameter $\theta^*$ for the learning problems arising from: (i) the cosine distance with the two-sided conditional loss; (ii) the positive quadratic form with the one-sided conditional loss;  and (iii) the cosine distance with the joint loss, respectively. As expected from Theorem~\ref{thm:Gaussian_solution_cosine}, we observe that the cosine alignment correctly identifies the two conditional means for $\mu_{\x|\y}$ and $\mu_{\y|\x}$ in Figure~\ref{fig:visualize-marg_cosine_cond}. However, the model structure produces Gaussian conditionals whose covariances follow the marginal covariance of the reference distribution, which have larger variance in all directions than the true conditional covariances. With the additional parameter in the positive quadratic form, the model correctly captures both the conditional means and variance for the $\x|\y$ variable in Figure~\ref{fig:visualize-marg_quadratic_cond}, but the one-sided loss does not accurately describe the moments for the $\y|\x$ variable; see Theorem~\ref{thm:Gaussian_quadraticform_onesided}. Lastly, using the cosine alignment metric with the joint loss results in a closer match to the true marginal covariances of the data distribution, as expected from Theorem~\ref{thm:Gaussian_joint}.

In Figure~\ref{fig:visualize-twodim_PDFs2}, we plot the resulting approximation to the data distribution. As compared to the target $\mu$, we observe that using the cosine distance with the conditional loss results in a joint distribution with inflated variance along both variables. With the positive quadratic form, the variance for $u|v$ is reduced, while the $v|u$ conditional is not correctly specified when using the one-sided loss. Lastly, the joint loss results in the closest variance in both variables to the joint distribution.

\begin{figure}[!ht]
\centering
\includegraphics[width=0.9\textwidth]{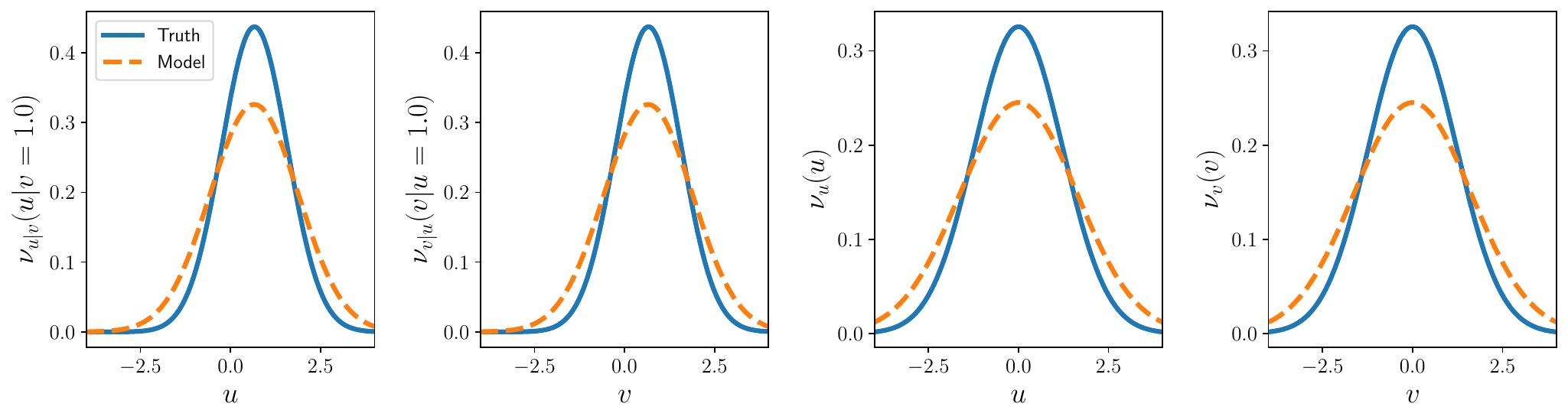}
\vspace{-0.3cm}
\caption{Densities for the conditional distributions $\nu_{\x|\y
},\nu_{\y|\x}$ and the marginals $\nu_\x,\nu_\y$ resulting from the cosine distance with the two-sided conditional loss. The conditional means are matched, but not the variance; see Theorem~\ref{thm:Gaussian_solution_cosine}.}
\label{fig:visualize-marg_cosine_cond}
\end{figure}
\begin{figure}[!ht]
\centering
\includegraphics[width=0.9\textwidth]{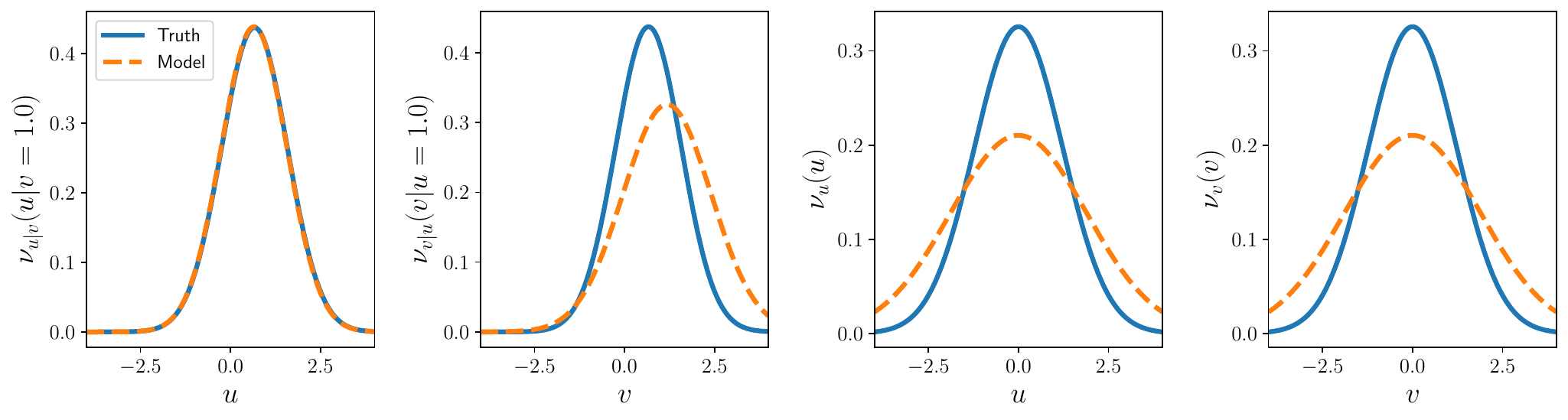}
\vspace{-0.3cm}
\caption{Densities for the conditional distributions $\nu_{\x|\y
},\nu_{\y|\x}$ and the marginals $\nu_\x,\nu_\y$ resulting from the cosine distance with the joint loss. The mean and variance for the $\x|\y$ conditional is matched, but not for $\y|\x$; see Theorem~\ref{thm:Gaussian_quadraticform_onesided}.}
\label{fig:visualize-marg_quadratic_cond}
\end{figure}
\begin{figure}[!ht]
\centering
\includegraphics[width=0.9\textwidth]{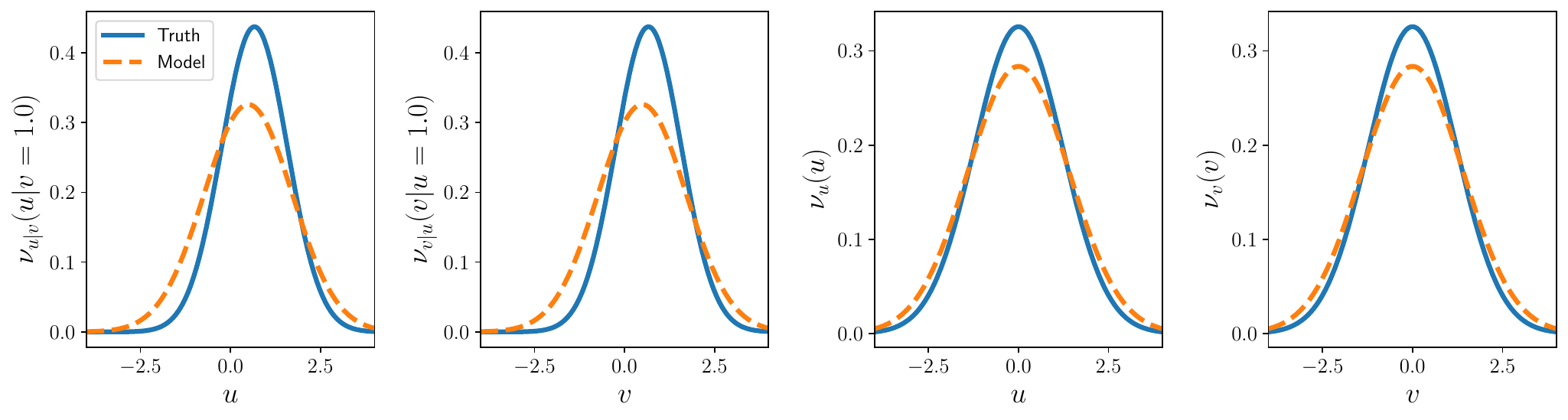}
\vspace{-0.3cm}
\caption{Densities for the marginals $\nu_\x,\nu_\y$ and the conditional distributions $\nu_{\x|\y
},\nu_{\y|\x}$ resulting from the cosine distance with the joint loss. The marginal variances are better approximated, but a bias is introduced in the conditional means; see Theorem~\ref{thm:Gaussian_joint}.}
\label{fig:visualize-marg_cosine_joint}
\end{figure}

\begin{figure}[!ht]
\centering
\includegraphics[width=0.32\textwidth]{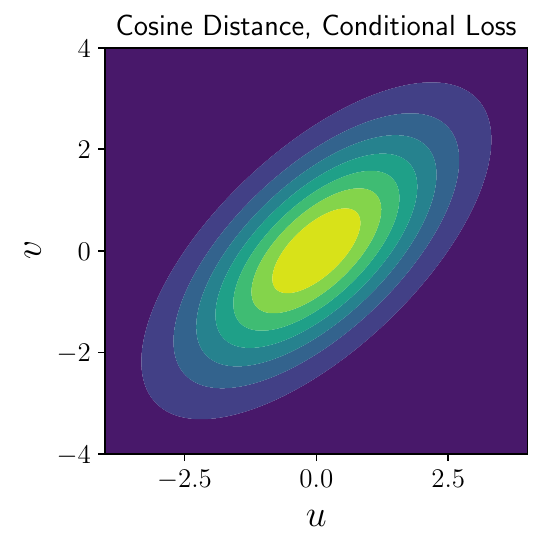}
\includegraphics[width=0.32\textwidth]{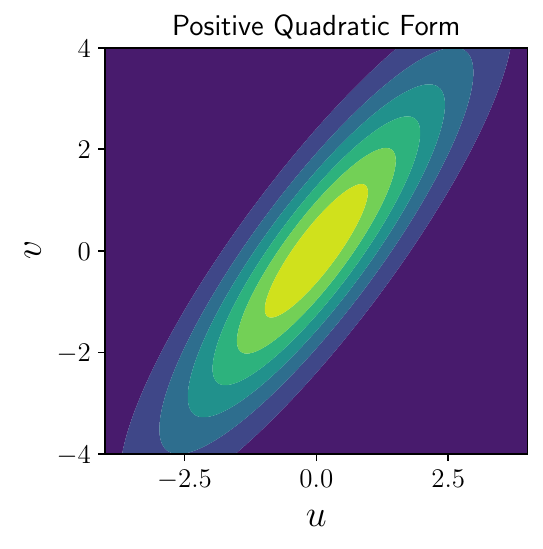}
\includegraphics[width=0.32\textwidth]{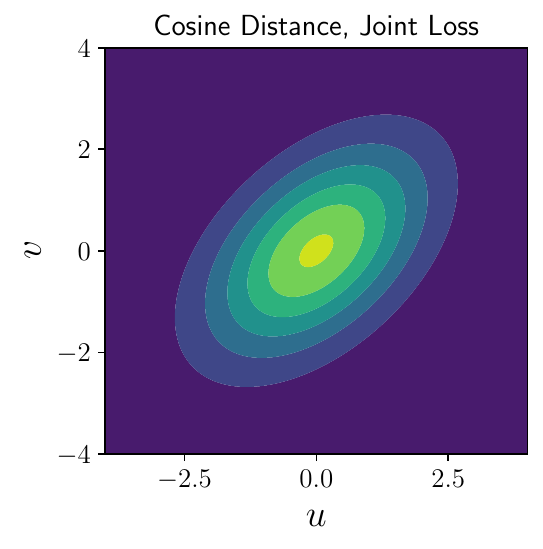}
\vspace{-0.3cm}
\caption{Two-dimensional densities for the distributions $\nu$ learned with various alignment metrics and objective functions: cosine distance with the conditional loss from \Cref{sec:Gaussian_cosine} (\emph{left}), the positive quadratic form with the conditional loss for matching the $u|v$ conditional from \Cref{sec:Gaussian_quadraticform} (\emph{middle}) and the cosine distance with the joint loss from Section~\ref{ssec:gjoint} (\emph{right}). These densities provide different approximations of the joint distribution on the right of Figure~\ref{fig:visualize-data}. Unsurprisingly, the joint loss provides the closest approximation to the  target distribution. 
\label{fig:visualize-twodim_PDFs2}}
\end{figure}

Lastly we go beyond the Gaussian setting and consider the effect of normalizing the encoders. We consider the approximation resulting from normalized encoders that are defined by parameters $\theta_\x \coloneqq g \in \R$ and $\theta_\y \coloneqq h \in \R$:
$$\normg_\x(\x;\theta_\x) = \frac{g\x}{|g\x|}, \qquad \normg_\y(\y;\theta_\y) = \frac{h\y}{|h\y|}.$$
Note that $\normg_\x(\x;\theta_\x),  \normg_\y(\y;\theta_\y) \in \bS^0:=\{\pm 1\}.$
 Due to the nonlinearity of $\normg_\x(\cdot\,;\theta_\x)$ and $\normg_\y(\cdot\,;\theta_\y)$ 
the resulting learned joint and conditional distributions are non-Gaussian; this is
illustrated in  Figure~\ref{fig:twodim_dist_comparisons_normalized}. In this case, the model captures the positive correlation of the $(\x,\y)$ variables, which is observed in the approximation of the joint distribution. The expressiveness of the normalized models tilts the marginal distributions by weighting the probability mass to the left and right side of the origin. As a result of the symmetry of the joint target distribution and the low-dimensional embedding, we note that the marginal distributions for this example are captured exactly with the learned model, but we do not present this result because this property is not guaranteed in arbitrary dimensions.

\begin{figure}[!ht]
\centering
\includegraphics[width=0.5\textwidth]{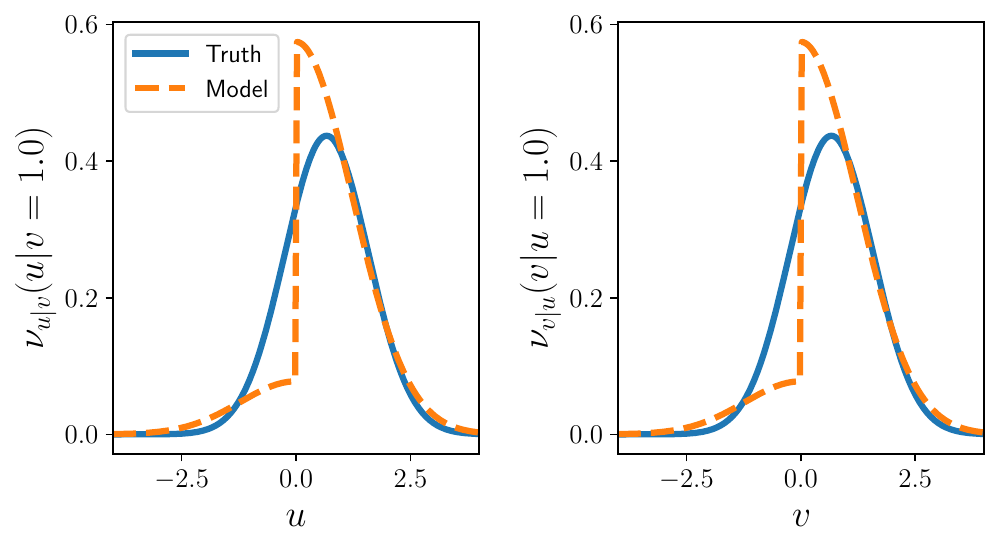}
\includegraphics[width=0.28\textwidth]{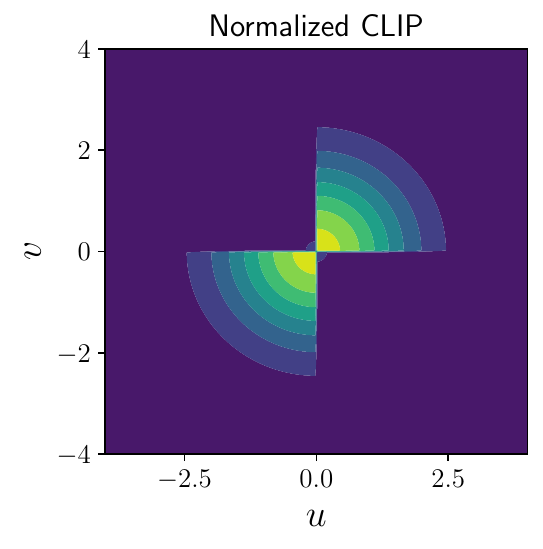}
\vspace{-0.3cm}
\caption{One-dimensional densities for the conditional distributions $\nu_{\x|\y},\nu_{\y|\x}$ (\emph{left}) and the two-dimensional joint distribution $\nu$ (\emph{right}) that is learned with the cosine metric and the normalized encoder models.~\label{fig:twodim_dist_comparisons_normalized}}
\end{figure}

\section{Numerical Experiments} \label{sec:experiments} 

In this section we present numerical experiments that complement the material presented in the previous sections.
Subsection \ref{ssec:61} is devoted to a high dimensional Gaussian example where an exactly computable true joint distribution
may be used to evaluate sensitivity of CLIP to parameters such as embedding dimension, size of the data set and batch size used in training.
In Subsection \ref{ssec:lecun} we show how classification with the MNIST data set may be formulated as a specific generalization
of the basic contrastive learning set-up, by changing the embeddings and working with a one-sided loss.
Subsection \ref{ssec:fluid_flow} demonstrates how a problem in Lagrangian data assimilation may be addressed by use of contrastive learning.

\subsection{High-dimensional Gaussian} \label{ssec:61}

In this section we consider a pair of data modalities that are different linear projections of a Gaussian process, and hence themselves
follow a multivariate joint distribution. As in \Cref{sec:Gaussian_cosine}, the conditionals of this distribution are known analytically, allowing us to evaluate the learned model approximations. Let $D=(0,1)$ and $C$ be a trace-class covariance operator on $\cR:=L^2(D;\R).$ Denote the eigenpairs of $C$ by
$(\psi_j,\lambda_j)$ with non-negative eigenvalues ordered to be decreasing in $j \in \{1,2,3,\dots \}.$ 
From this we may define draws from $w \sim \mathcal{N}(0,C)$  via the Karhunen-Lo\'{e}ve decomposition
\begin{equation}
    \label{eq:KLo}
\rr(x) = \sum_{j=1}^\infty \sqrt{\lambda_j} \xi_j \psi_j(x).
\end{equation}
Let $\Delta$ denote the Laplacian equipped with homogeneous Neumann boundary conditions on $D$ and viewed as acting on functions in $H^2(D;\R)$ with mean zero over $D$. We assume that $C = (-\Delta + \tau^2 I)^{-\alpha}$.
Then, for all positive integers $j$, $\psi_{j}(x) = \cos(j\pi x)$ and $\lambda_j = (j^2\pi^2 + \tau^2)^{-\alpha}$, where $\tau$ represents an inverse length scale and $\alpha$ defines the regularity of the process. We note the index in~\eqref{eq:KLo} starts at $j = 1$ to ensure that  $\rr$ has zero mean when integrated over $D$. In our experiments we set the hyper-parameters to $\tau = 3$ and $\alpha = 2$.

The two data modalities are linear projections of the Gaussian process given by the noisy pointwise evaluation of $w$ at $\dx$ uniformly-spaced grid locations $(x_1,\dots,x_{\dx}) \in D$ and the first $\dy$ random elements in its  orthonormal basis. That is, 
\begin{subequations} \label{eq:Gaussian_linear_transformations}
    \begin{align}
        \x &= (\rr(x_1),\dots, \rr(x_{\dx})) + \eta_\x \in \cX:=\bR^{\dx}, \\
        \y &= \left(\frac{1}{\sqrt{\lambda_\x}} \int \rr(x)\psi_1(x) dx, \dots, \frac{1}{\sqrt{\lambda_{\dy}}} \int \rr(x)\psi_{\dy}(x) dx \right) \in 
         \cY \coloneqq \bR^{\dy},
    \end{align}
\end{subequations}
where $\eta_{\x} \sim \mathcal{N}(0, \sigma^2 I_{\dx})$. In our experiments we set $\dx = 12$ and $\dy = 5$, to emphasize an asymmetry that arises in practice between the information content of the two modalities, and employ noise standard deviation $\sigma = 0.05$. We consider the true random process based on a truncation of the
Karhunen-Lo\'{e}ve expansion \eqref{eq:KLo} to $1000$ modes. Figure~\ref{fig:Gaussian_true_conditionals} displays 400 independent realizations of the underlying true process $w$, conditional on one realization of $\y$, as well as the conditional distribution for each of the two data modalities, $\mu_{\x|\y}(\cdot|\y)$ and $\mu_{\y|\x}(\cdot|\x)$, given one realization of $\y$ and $\x,$ respectively. For the linear transformations of the Gaussian process $\rr$ in \eqref{eq:Gaussian_linear_transformations}, the conditional distributions of each modality are also Gaussian. Thus, the mean and covariance of these conditionals are computable in closed-form using the expressions in~\eqref{eq:Gaussian_conditionals} and hence the mean and confidence intervals capturing 95\% of the conditional probability mass are also plotted in Figure~\ref{fig:Gaussian_true_conditionals}. 

\begin{figure}
\centering
\includegraphics[width=\textwidth]{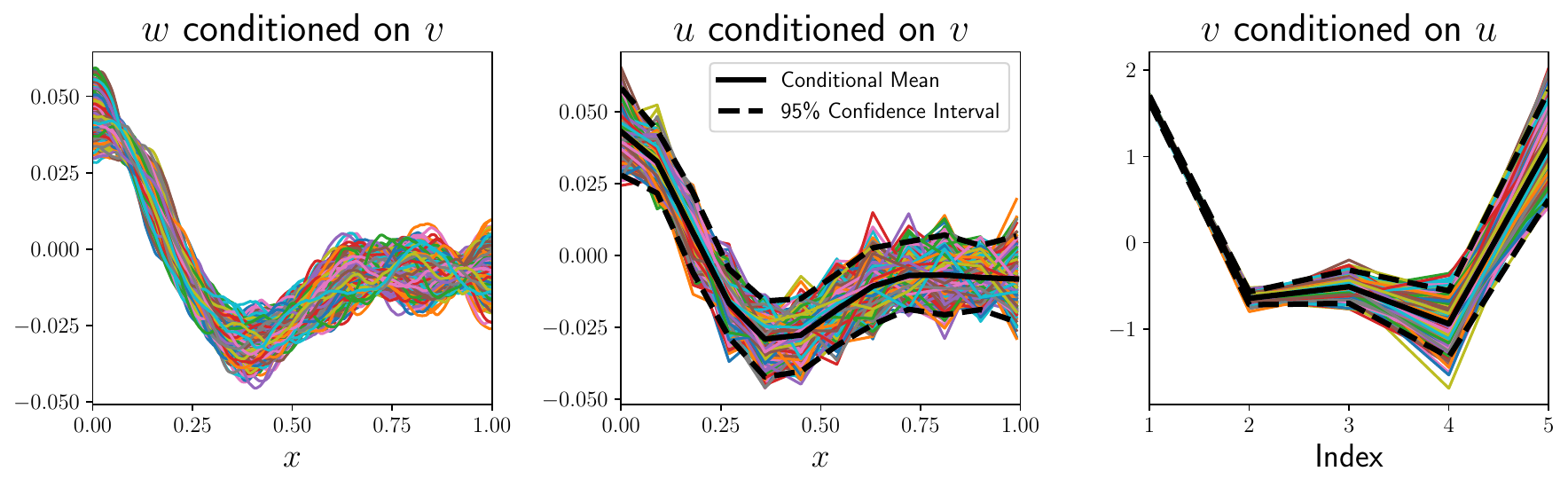}
\caption{Conditional distributions for the Gaussian process $\rr$ (\emph{left}) and the noisy observations $\x$ (\emph{middle}) conditioned on a realization of the coefficients $\y$ as well as the conditional distribution for $\y$ conditioned on a realization of the noisy observation $\x$ (\emph{right}). For each conditional, we plot 400 samples from the conditional distribution, the conditional mean and a 95\% confidence interval.} \label{fig:Gaussian_true_conditionals}
\end{figure}

To define a model for the joint distribution $\mu(\x,\y)$ as a tilting of the marginals $\mu_\x \otimes \mu_\y$, we choose linear encoders with the form in~\eqref{eq:LE}. The encoders are not normalized, in order to exactly capture the true Gaussian conditional expectations for each modality, as described in~\Cref{sec:GS}. The encoders are learned using the cosine alignment metric with the two-sided conditional loss. We approximate the loss using $N$ i.i.d.\thinspace samples from the joint distribution $\mu$, which is estimated at every step of the optimizer with a small batch of samples to build the empirical loss function as in~\Cref{r:wgwan}. The loss is minimized using the Adam optimizer with a learning rate of $10^{-4}$ over 500 epochs. In this subsection, we study the effect of the choice of embedding dimension on the approximation of the conditional expectations. We also study  various practical approximations introduced in the optimization problem including the number of data samples and the batch size.

First, we study the effect of increasing the embedding dimension of the model with $N = 10,000$ training samples and a fixed batch size of $512$. Figure~\ref{fig:Gaussian-means_vs_embedding_dim} plots the estimated conditional expectations for both data modalities $\mathbb{E}[\x|\y]$ and $\mathbb{E}[\y|\x]$ for the same realization of the conditioning variables of $\y$ and  $\x$, respectively, as in Figure~\ref{fig:Gaussian_true_conditionals}. We observe that increasing the embedding dimension up to the dimension $\de = \min(\dx,\dy) = 5$  improves the approximation to the true conditional expectations in black. As expected from Theorem~\ref{thm:Gaussian_solution_cosine}, the solution does not improve for larger embedding dimensions with a fixed total number of samples $N$ and batch size. 

\begin{figure}[!ht]
    \centering
    \includegraphics[width=0.24\textwidth]{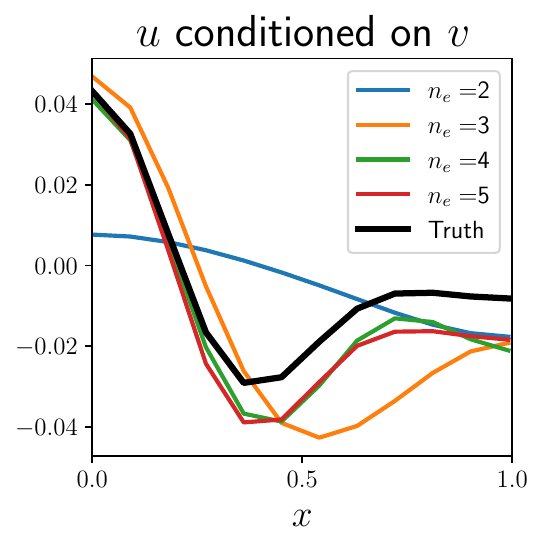}
    \includegraphics[width=0.24\textwidth]{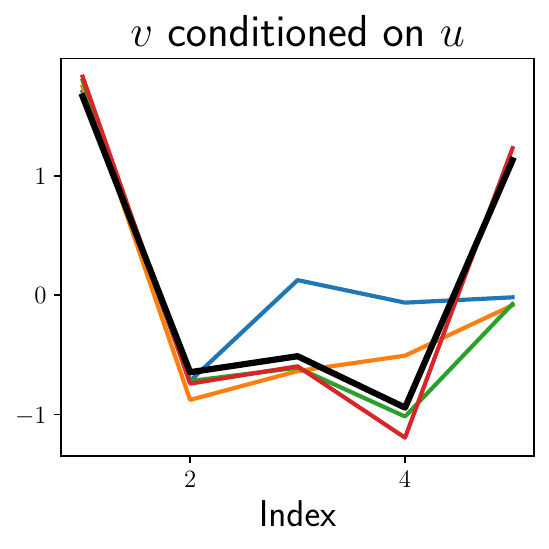}
    \includegraphics[width=0.24\textwidth]{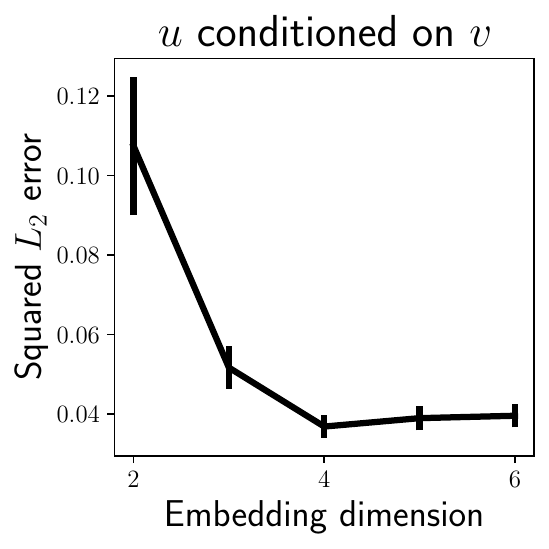}
    \includegraphics[width=0.24\textwidth]{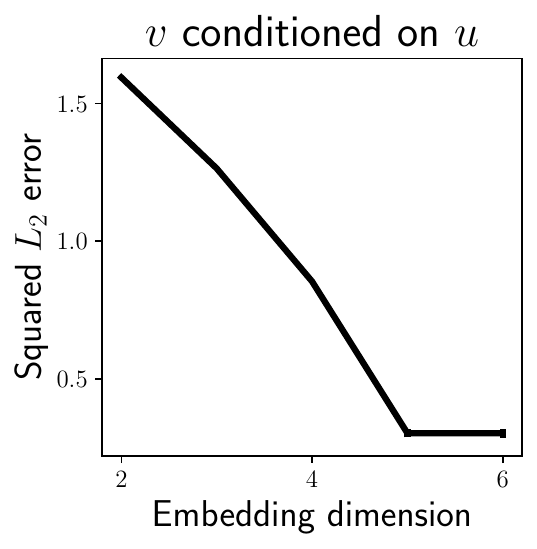}
    \vspace{-0.3cm}
    \caption{Left: Conditional expectations $\mathbb{E}[\x|\y]$ and $\mathbb{E}[\y|\x]$ with increasing embedding dimension for a fixed realization of $\y$ and $\x$, respectively. Right: Squared errors in the conditional expectations in expectation over $10^4$ realizations of the conditioning variables. We observe convergence of the conditional expectations toward the truth when increasing the embedding dimension.}
    \label{fig:Gaussian-means_vs_embedding_dim}
\end{figure}

Next, we study the effect of increasing the batch size during training with a fixed embedding dimension of $\de = 5$ and $N = 10,000$ training samples; see Figure~\ref{fig:Gaussian-MeanErr_vs_batchsize}. We observe that the errors for both conditionals converge to zero as the batch size increases, which reduces the bias in the finite sample estimator given in~\eqref{eq:LN} as an approximation of the population objective in~\eqref{eq:consistent_loss}.
Note that the minimizer of the population loss yields the true conditional means; see Theorem~\ref{thm:Gaussian_solution_cosine}.

\begin{figure}[!ht]
    \centering
    \includegraphics[width=0.24\textwidth]{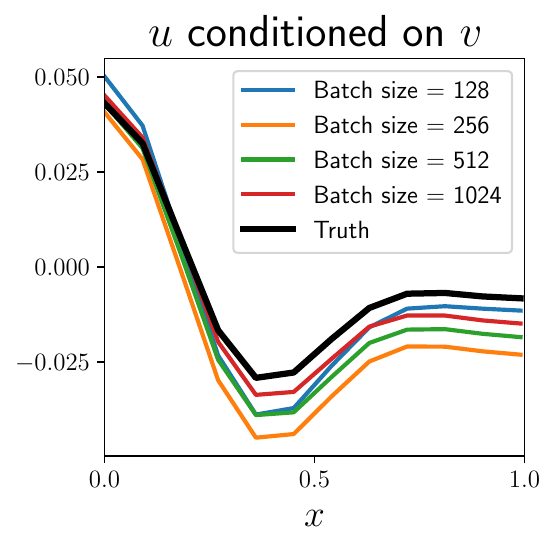}
    \includegraphics[width=0.24\textwidth]{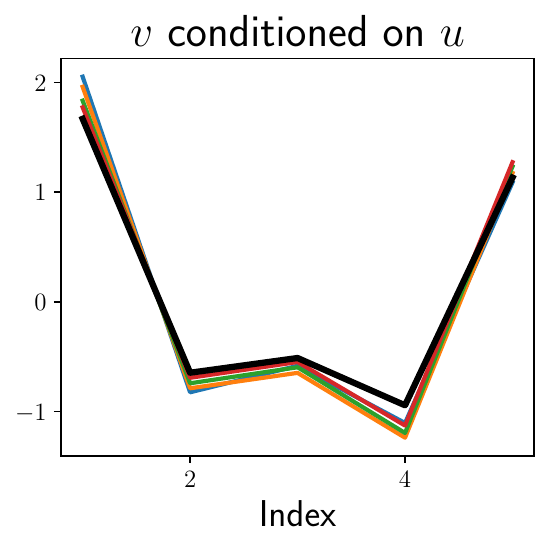}
    \includegraphics[width=0.24\textwidth]{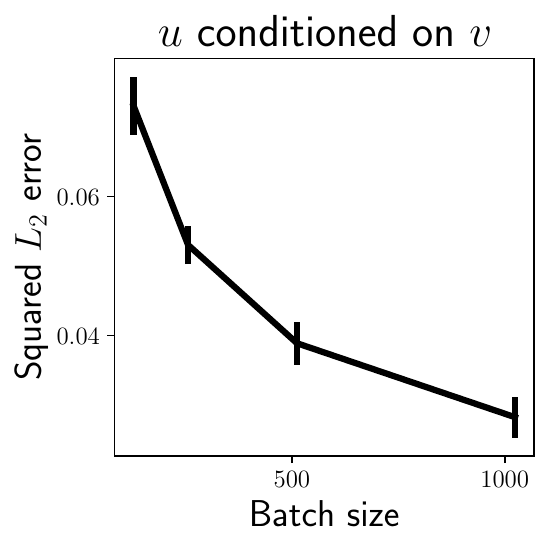}
    \includegraphics[width=0.24\textwidth]{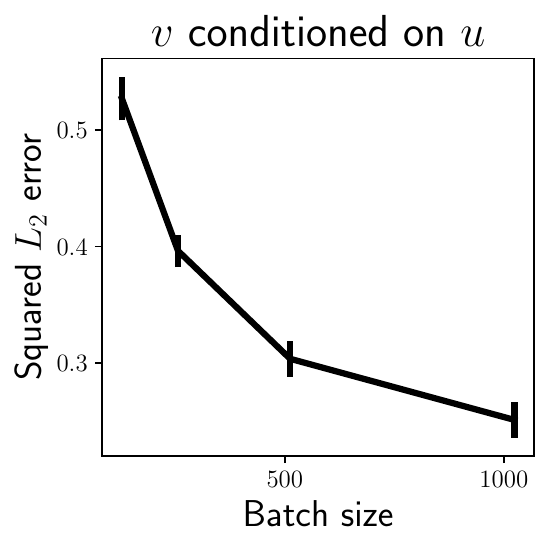}
    \vspace{-0.3cm}
    \caption{Conditional expectations with increasing batch size. We observe that the conditional expectations converge toward the truth when increasing the batch size.}
    \label{fig:Gaussian-MeanErr_vs_batchsize}
\end{figure}

Lastly, we study the effect of increasing the training sample size $N$ in Figure~\ref{fig:Gaussian-MeanErr_vs_samplesize} with a fixed batch size of 512 and embedding dimension set to $\de = 5$. We observe similar convergence of the expectations to the true expectations for $\x$ and $\y$ in expectation over $10^4$ realizations of the conditioning variables.

\begin{figure}[!ht]
    \centering
    \includegraphics[width=0.24\textwidth]{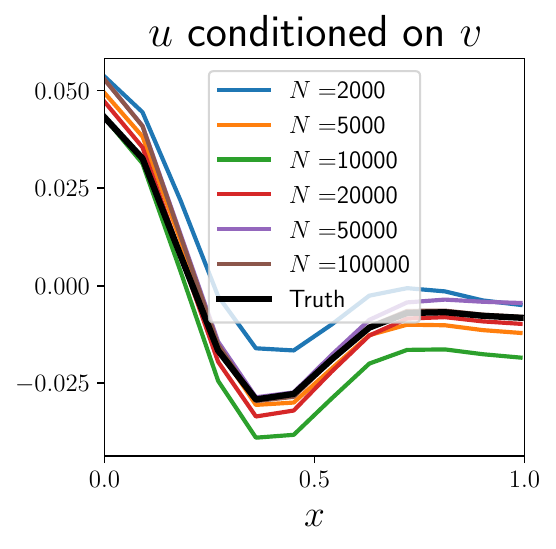}
    \includegraphics[width=0.24\textwidth]{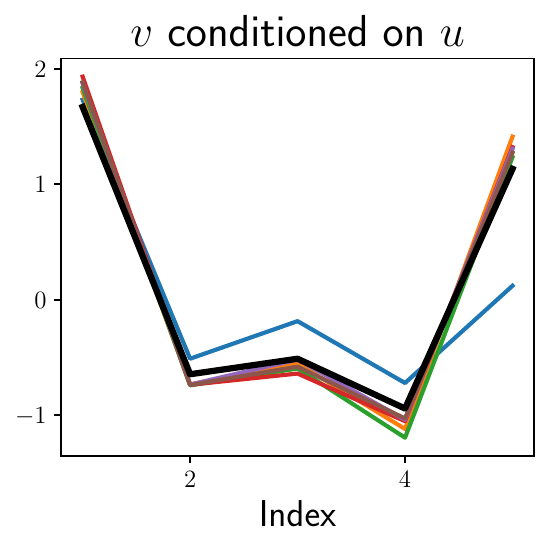}
    \includegraphics[width=0.24\textwidth]{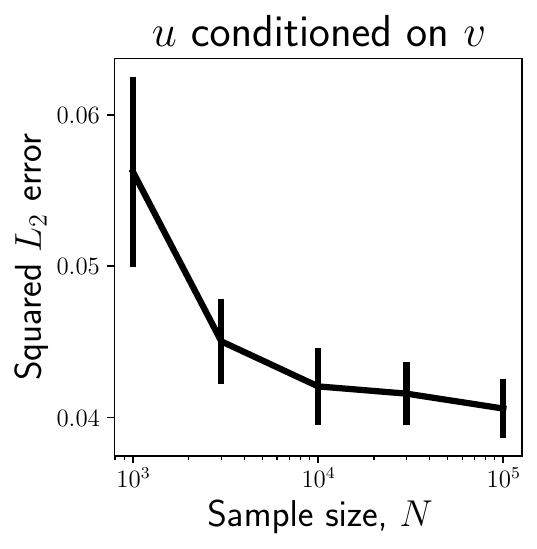}
    \includegraphics[width=0.24\textwidth]{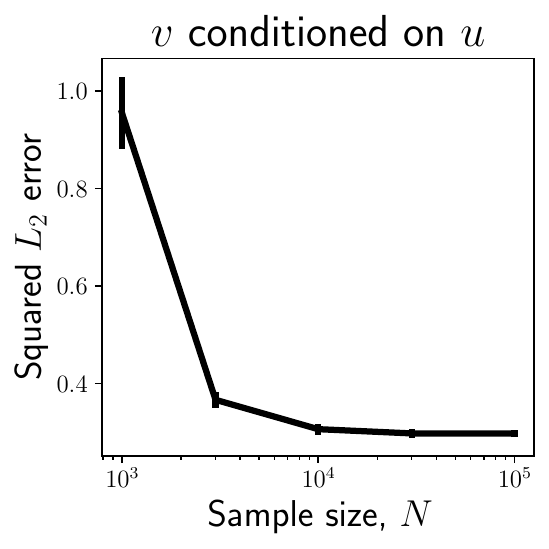}
    \vspace{-0.3cm}
    \caption{Left: Conditional expectations $\mathbb{E}[\x|\y]$ and $\mathbb{E}[\y|\x]$ with increasing sample size for a fixed realization of $\y$ and $\x$, respectively. Conditional expectations with increasing sample size. Right: Squared errors in the conditional expectations in expectation over $10^4$ realizations of the conditioning variables. We observe convergence of the conditional expectations toward the truth when increasing the sample size.}
    \label{fig:Gaussian-MeanErr_vs_samplesize}
\end{figure}

\subsection{The MNIST Data Set}
\label{ssec:lecun}

In this experiment, we consider our two data modalities to be images of MNIST digits $\x \in \mathcal{\X} = [0,1]^{28 \times 28}$ and their corresponding 
labels $\y \in \mathcal{\Y} = \{0,\dots,9\}$ determined by the prescribed labelling of the dataset. Our goal is to identify encoders $g_\x$ and $g_\y$, acting on $\cU$ and $\cV$ respectively, to jointly characterize the conditional distributions of images given labels $\mu_{\x|\y}$ and labels given images $\mu_{\y|\x}$. We note that while $\mu_{\x|\y}(\cdot|\y)$ is supported on a continuous space for each $\y$, $\mu_{\y|\x}(\cdot|\x)$ is a categorical distribution for each $\x$.  

In order to relate representation learning to a classification problem, we choose the image encoder $g_\x$ to be a feed-forward neural network whose outputs represent the log-probabilities associated to each label in $\mathcal{\Y}$; and we choose $g_{\y}(\y) = e_{\y}$ to be the one-hot encoding of the label $\y$. In this case, the embedding space dimension of the two encoders equals the number of labels, i.e., $\de = |\cY|$. In particular, we use a LeNet architecture for $g_\x \colon \mathcal{\X} \rightarrow \R^{|\cY|}$ consisting of convolution and linear layers with ReLu activation functions and set $\theta_\x$ as the weights and biases of these layers. In a classification task, the learned map $g_\x$ is used to predict a label for each input image $\x$, which is given by the output $g_\x(\x) \in \R^{|\cY|}$ with the largest log-probability. What we have just outlined is exactly the architecture used in the seminal paper \cite{lecun1998gradient}. In that paper an optimization problem is defined by minimizing the cross-entropy loss function $\LMNIST(\cdot).$

\begin{proposition} Minimizing $\LMNIST(\theta)$ with respect to $\theta$ is equivalent to minimizing the one-sided conditional loss $\JCONDD(\theta;2,0)$ using the unnormalized embedding
$g_{\x}$ and the one-hot encoding $g_{\y}$  as defined in the preceding paragraph.
\end{proposition}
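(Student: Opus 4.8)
The plan is to rewrite the one-sided conditional objective $\JCONDD(\theta;2,0)$ as a single conditional negative-log-likelihood, then substitute the MNIST-specific encoders and recognise the classification cross-entropy loss $\LMNIST$ up to an additive constant that does not depend on $\theta$.

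\textbf{Step 1 (reduce $\JCONDD$ to a conditional log-likelihood).} First I would invoke \Cref{rem:J2L2}: once both divergences appearing in $\JCONDD$ are taken to be the forward KL divergence, minimising $\JCONDD(\cdot;\lambda_\x,\lambda_\y)$ is equivalent to minimising the asymmetric loss of \Cref{opt:KLonesided}, by the same computation that proves \Cref{thm:clip_minimization}: a conditional divergence $\mathbb{E}_{\x\sim\mu_\x}\dkl\bigl(\mu_{\y|\x}(\cdot|\x)\,||\,\pmeas_{\y|\x}(\cdot|\x;\theta)\bigr)$ differs from the conditional cross-entropy $-\mathbb{E}_{(\x,\y)\sim\mu}\log\pmodel(\y|\x;\theta)$ only by the conditional entropy of $\mu_{\y|\x}$, which does not involve $\theta$. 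The one-sided weighting keeps exactly one such term, and the factor $\lambda/2=1$ leaves it with unit coefficient; in the classification set-up the surviving term is the log-likelihood of the label $\y$ given the image $\x$ — the only direction whose model normaliser is a finite sum over the $|\cY|$ labels rather than an intractable integral over images. Hence, up to an additive $\theta$-independent constant $c$,
\begin{equation*}
\JCONDD(\theta;2,0) \;=\; -\,\mathbb{E}_{(\x,\y)\sim\mu}\bigl[\log\pmodel(\y|\x;\theta)\bigr] \;+\; c .
\end{equation*}

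\textbf{Step 2 (substitute the MNIST encoders).} Next I would plug in the constructions of the preceding paragraph: the unnormalised image encoder $\g_\x(\cdot;\theta_\x)$ is the LeNet, with output in $\R^{|\cY|}$ interpreted as label log-probabilities; the label encoder is the one-hot map $\g_\y(\y)=e_\y$; and $\de=|\cY|$, $\tau=1$. Because the encoders are not normalised, $\pmodel(\y|\x;\theta)$ is the analogue of (\ref{eq:continuum_probability}b) for unnormalised encoders, so
\begin{equation*}
\pmodel(\y|\x;\theta) \;=\; \frac{\exp\bigl(\langle\g_\x(\x;\theta_\x),e_\y\rangle\bigr)}{\mathbb{E}_{\y'\sim\mu_\y}\exp\bigl(\langle\g_\x(\x;\theta_\x),e_{\y'}\rangle\bigr)} \;=\; \frac{\exp\bigl(\g_\x(\x;\theta_\x)_\y\bigr)}{\sum_{j\in\cY}\mu_\y(j)\,\exp\bigl(\g_\x(\x;\theta_\x)_j\bigr)} .
\end{equation*}
For a uniform reference on labels this is precisely $\texttt{softmax}\bigl(\g_\x(\x;\theta_\x)\bigr)_\y$; otherwise the weights $\mu_\y(j)$ shift the $j$-th coordinate of $\g_\x$ by the constant $\log\mu_\y(j)$, which can be absorbed into the bias of the final linear layer of the LeNet and so does not change the parameterised model class. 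Inserting this into Step~1 gives $\JCONDD(\theta;2,0) = -\mathbb{E}_{(\x,\y)\sim\mu}\bigl[\log\texttt{softmax}(\g_\x(\x;\theta_\x))_\y\bigr] + c = \LMNIST(\theta) + c$, the cross-entropy loss of \cite{lecun1998gradient} up to a $\theta$-independent constant, so the two objectives share the same minimisers. The same computation with $\mu$, $\mu_\x$, $\mu_\y$ replaced by the empirical measures $\mu^N$, $\mu_\x^N$, $\mu_\y^N$ yields the statement for the losses actually optimised in practice.

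\textbf{Main obstacle.} The only step that is not routine bookkeeping is the treatment of the reference measure $\mu_\y$ in the tilting: the classical MNIST loss uses the bare $\texttt{softmax}$ with no prior reweighting of classes, whereas the contrastive model tilts off the label marginal $\mu_\y$. These coincide exactly when the labels are balanced, and in general differ only by a coordinatewise constant shift of $\g_\x$ that is absorbed into the network's last-layer bias — so the two parameterisations define the same family; making this re-identification precise, and checking that it neither enlarges nor shrinks the model class, is the substantive point, the remainder being the $\theta$-independent entropy constant that \Cref{rem:J2L2} and \Cref{thm:clip_minimization} already control.
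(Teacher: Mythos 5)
The paper states this proposition without proof, so there is no argument of the authors' to compare against line by line; your route---reduce the one-sided conditional KL objective to the conditional cross-entropy $-\mathbb{E}_{(\x,\y)\sim\mu}\log\rho(\y|\x;\theta)$ by the same cancellation of the $\theta$-independent entropy term as in Theorem \ref{thm:clip_minimization}, substitute the one-hot encoder so that $\langle \g_\x(\x;\theta_\x),e_\y\rangle=\g_\x(\x;\theta_\x)_\y$ and the model normaliser becomes the finite weighted sum $\sum_j\mu_\y(j)\exp(\g_\x(\x;\theta_\x)_j)$, and absorb the shifts $\log\mu_\y(j)$ into the last-layer bias of the LeNet---is exactly the computation the surrounding text gestures at, and it is correct, including your closing remark that in the unbalanced-label case ``equivalent'' means equivalent up to a bijective reparameterisation of the final bias (same model class, same minimum, minimisers in correspondence).

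The one point you should state openly rather than settle by the ``only tractable side'' remark: your Step 1 declares that $\JCONDD(\theta;2,0)$ retains the term matching $\mu_{\y|\x}$ (labels given images). Under the indexing fixed in Optimization Problems \ref{do:J1} and \ref{opt:KLonesided}, and used consistently in Subsection \ref{sec:Gaussian_quadraticform}, the weight $\lambda_\x$ multiplies the $\x|\y$ divergence, so literally $\JCONDD(\theta;2,0)$ is the image-given-label objective, whose log-partition term is $\mathbb{E}_{\y\sim\mu_\y}\log\mathbb{E}_{\x'\sim\mu_\x}\exp(\langle \g_\x(\x';\theta_\x),e_\y\rangle)$; that objective is not $\LMNIST$ (its normaliser averages over images rather than labels, and it is not invariant to per-image shifts of the logits, whereas $\LMNIST$ is), so under the literal convention the stated equivalence would fail. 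The proposition is true for the $\y|\x$ side, i.e.\ for $\JCONDD(\theta;0,2)$ in the notation of Optimization Problem \ref{do:J1}; the prose and figure captions of Subsection \ref{ssec:lecun} use the label $(2,0)$ for that side while the displayed formula for $\LCONDD(\theta;2,0)$ there follows the original convention, so the inconsistency is in the paper itself. Your proof resolves the ambiguity in the only way that makes the statement true, which is the right call, but the resolution should be made explicit (one sentence identifying which conditional is being matched and noting the index swap) rather than inferred from tractability of the normaliser.
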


Next, we show the results of learning the model parameters with three choices of loss functions between conditionals. First, we consider the standard classification methodology leading to $\LMNIST(\theta)$, which (by the preceding proposition) corresponds to minimizing the one-sided conditional loss $\JCONDD(\theta;2,0)$. This in turn is equivalent to minimizing the loss function $\LCONDD(\theta;2,0)$ given by
\begin{align*}
\LCONDD(\theta;2,0)   
    &= -\mathbb{E}_{(\x,\y) \sim \mu} \Bigl[\langle g_{\x}(\x;\theta_\x), g_{\y}(\y) \rangle \Bigr]  + \mathbb{E}_{\y \sim \mu_\y}\log \mathbb{E}_{\x \sim \mu_\x} \Bigl[\exp(\langle g_{\x}(\x;\theta_\x), g_{\y}(\y) \rangle) \Bigr],\\
     \theta^* & = \arg\min_{\theta \in \R^p} \LCONDD(\theta;2,0).
\end{align*}
This loss seeks the encoder parameters $\theta_\x$ to best match the conditional distribution for labels given a candidate image $\mu_{\y|\x}$. Secondly, we consider  minimizing the loss function $\LCONDD(\theta;0,2)$ that matches the conditional distribution $\mu_{\x|\y}$ for images given a label. And finally
we consider the loss function that aims to match both conditionals, $\LCONDD(\theta;1,1)$. In all three cases we use unnormalized encoders as defined in the
paragraph preceding the proposition. We learn the model parameters with $N = 60,000$ training samples of paired images and labels. We learn the model parameters using the Adam optimizer by training for 300 epochs with a batch size of $512$ and a learning rate of $10^{-4}$. %

First, we compare the approximate models for the conditional distribution of labels given images $\pmeas_{\y|\x}$. Figure~\ref{fig:MNIST_label_predictions} plots the results of mapping three images to a label using the three
different learned models. We observe that the one-sided conditional loss on label given image is most accurate for predicting the true label, while the two other losses yield predicted distributions with non-zero probabilities on potential labels (for example $8$ as well as $3$ for the first digit) that are consistent with the image from subjective interpretations.
\begin{figure}
\centering
\includegraphics[width=0.8\textwidth]{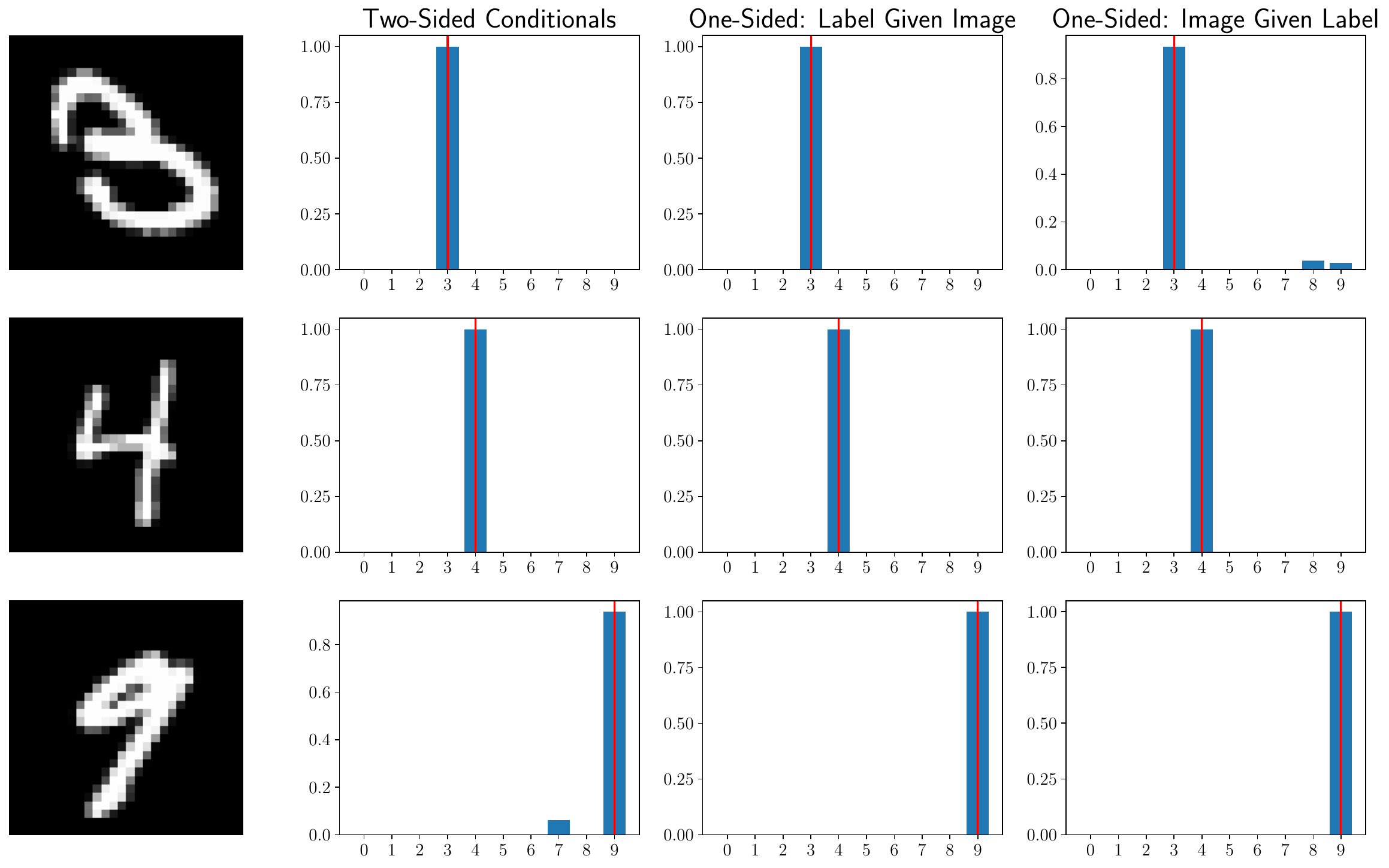}
\vspace{-0.3cm}
\caption{Predicted probabilities of the image encoder $g_\x$ relative to the true label for three candidate images (rows) from the test set using three loss functions.} \label{fig:MNIST_label_predictions}
\end{figure}

Second, we compare the approximate models for the conditional distribution of images given each label, namely $\pmeas_{\x|\y}$. Given an empirical marginal distribution over $N$ images $\mu_\x^N$, the model in~\eqref{eq:condu2} defines a weighted empirical distribution $\pmeas_{\x|\y}^N(\cdot|\y;\theta_\x)$ over the $N$ images with the weights given by the normalized exponential of the cosine similarities $\langle g_{\x}(\x;\theta_\x), g_{\y}(\y) \rangle$ for a given label $\y$.  Figure~\ref{fig:MNIST_cosines} plots 16 images sampled with replacement from the conditional probability distribution of the learned model for each loss function for the label $\y = 7$ with $\Nt =10,000$ test samples. We note that the test set consists of images for all 10 digits. While all sampled images are correctly associated with the prescribed label, we observe that only the distribution learned using the two-sided conditional loss (left plot) and the one-sided conditional loss for images given label (right plot) show a diverse set of images with multiple styles. Instead, the model learned with the one-sided loss for labels given image, that is more accurate at predicting class probabilities in Figure~\ref{fig:MNIST_label_predictions}, displays many repeated images of the digit 7. This shows that the conditional distribution, with this choice of loss function, has its probability mass concentrated on a single image; such a distribution  does not accurately capture the true data distribution consisting of different images that are consistent with the label.
\begin{figure}
\centering
\includegraphics[width=0.85\textwidth]{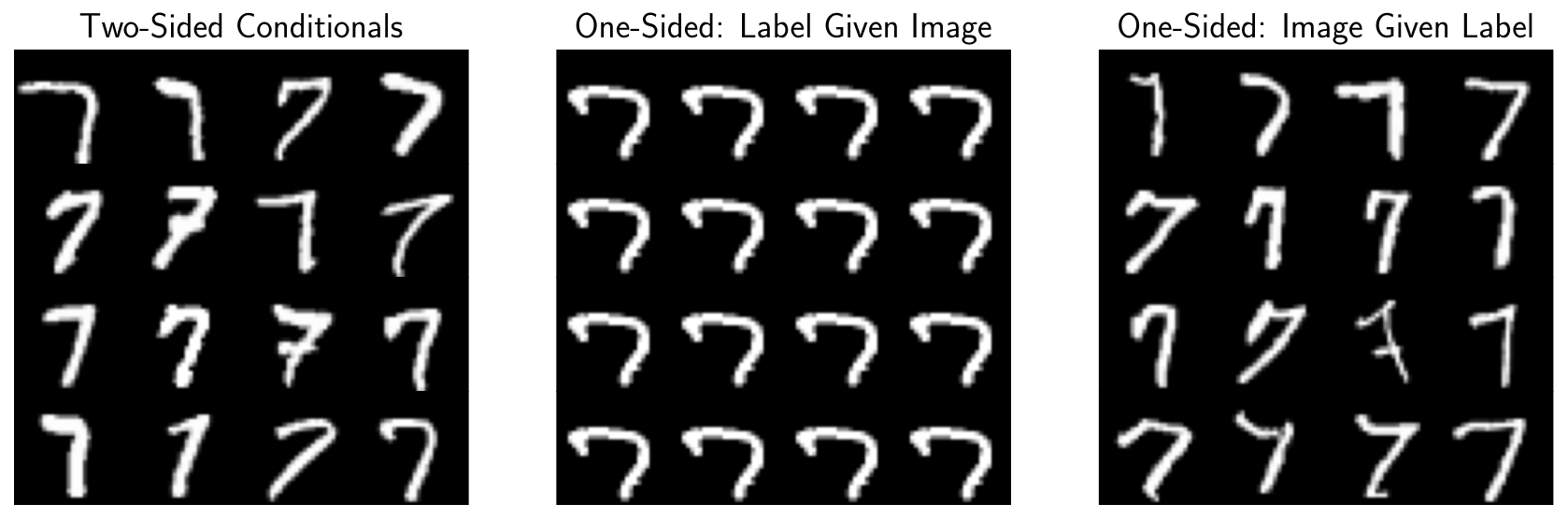}
\vspace{-0.1cm}
\caption{Sixteen images sampled from the MNIST test dataset according to the weights given by the cosine similarity between the learned image encoder for three different loss functions and the one-hot encoding of the label $\y = 7$. We observe that the encoders learned with the two-sided conditional loss (\emph{left}) and the loss for image given labels (\emph{right}) demonstrate a diverse set of images, while the encoder learned using the one-sided loss for label given image (\emph{middle}) does not capture the distribution well, instead concentrating on a single digit.  \label{fig:MNIST_cosines}}
\end{figure}

\subsection{Lagrangian Data Assimilation} \label{ssec:fluid_flow}

In this subsection we study the problem of recovering an Eulerian velocity field  from Lagrangian fluid flow data \cite{kuznetsov2003method}, introduced in Subsection \ref{ssec:IE}. We generalize the simplified setting described there in two ways: (i) we work with a time-dependent divergence free velocity, expressible as the skew-gradient of a potential (streamfunction); (ii) we assume that the Eulerian data is already encoded via the coefficients of an expansion of the potential in a set of time-oscillating, and spatially Fourier, modes.

Given potential  $\psi \colon \mathbb{T}^2 \times [0,T] \rightarrow \R$ we may define a time-dependent incompressible velocity field $\rr \in C^1( \mathbb{T}^2 \times [0,T], \R^2)$ as follows:
\begin{equation} \label{eq:velocity_from_pot}
\rr(x,t) = J\nabla \psi(x,t), \qquad J = \begin{bmatrix} 0 & -1 \\ 1 & 0 \end{bmatrix}.
\end{equation}
We work with potentials in the form
\begin{equation}
\label{eq:pot}
\psi(x,t)=\sum_{k=1}^K \psi_k \exp(i\omega_k t) e_k(x)
\end{equation}
where the $\{e_k\}_{k=1}^K$ denote a finite collection of Fourier modes and the $\{\omega_k\}_{k=1}^K$ denote a set of temporal frequencies, chosen at
random. We identify the complex numbers $\{\psi_k\}_{k=1}^K$ with a vector in $\R^{2K}$ and view this as a prescribed embedding of the Eulerian observations
of the velocity field: $g_\x(\x)=\{\psi_k\}_{k=1}^K;$ we will learn only the embedding of the Lagrangian data.

\begin{figure}
\centering
\includegraphics[width=0.4\textwidth]{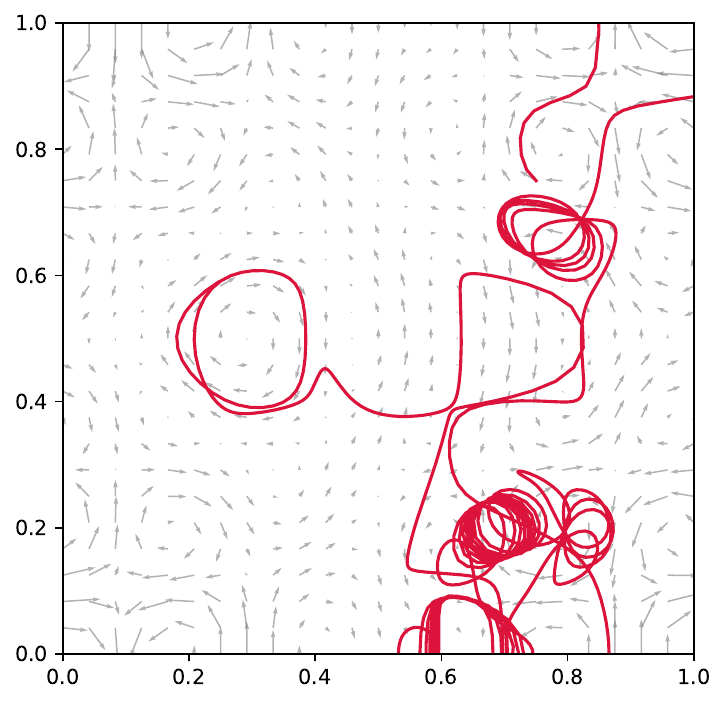}
\includegraphics[width=0.4\textwidth]{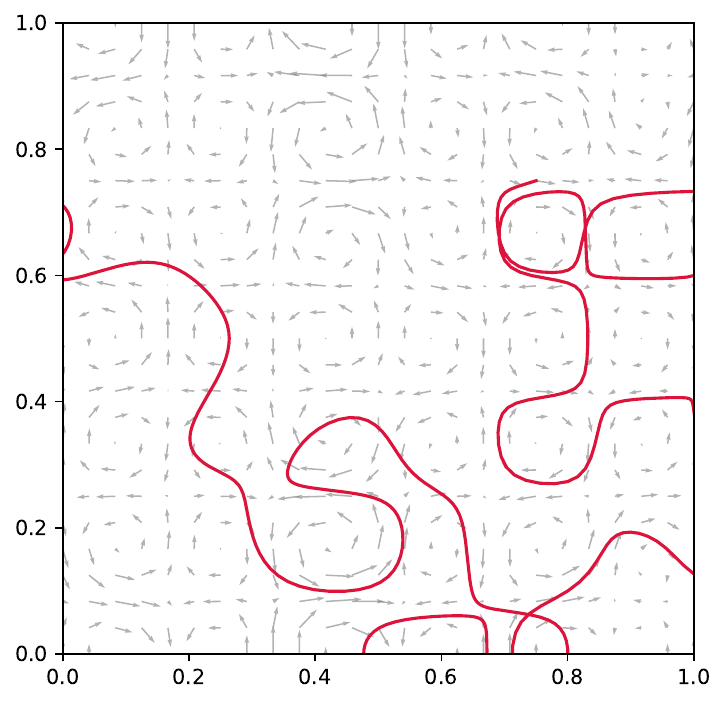}
\vspace{-0.3cm}
\caption{Visualization of two paired samples of the Eulerian potential field and Lagrangian trajectory. \label{fig:Lagrangian_paired_samples}}
\end{figure}

To define the Lagrangian data we integrate the velocity field in-time starting from the initial condition $x(0) = (0.75,0.75)$ to define a trajectory $x(t)$. In our experiment, we use a fourth-order Runge-Kutta method with time-step $\Delta t = 10^{-5}$ and integrate up to $T = 0.1$ The  trajectory position  is recorded  every $10$ time steps to produce a data sequence $\y \in \R^{2 \times J_\y}$ given by the vectors $v(j) = x(\Delta t 10 j) \in \R^2$ for $j = 1,\dots,J_\y$. In our experiment we have $J_\y = 1000$ observation times and $K = 49$ coefficients. Figure~\ref{fig:Lagrangian_paired_samples} visualizes two potential fields (frozen at time $t=0$) and their associated trajectories in the two-dimensional periodic domain $[0,1]^2$.

We use a text transformer to define the encoder for the trajectory $g_\y(\y)$. Similarly to text, the transformer follows the architecture of a typical text encoder that processes the values of the trajectory at each time-step as a separate token. More precisely, the encoder is given by the following composition
$$ g_\y(\y) = P \circ \mathsf{A}_M \circ \cdots \mathsf{A}_1 \circ L(\y),$$
where $L \colon \R^{d} \rightarrow \R^{d'}$ is an initial lifting layer acting on each sample along the trajectory of length $J_\y$, $\mathsf{A}_k \colon \R^{J_\y \times d'} \rightarrow \R^{J_\y \times d'}$ is a residual attention block (consisting of a multi-head attention, layer normalization and a multi-layer perception network), and $P \colon \R^{J_\y \times d'} \rightarrow \R^{\de}$ is a pooling operator that extracts the last element (i.e., token) of the ensemble and projects it into the embedding dimension. In this example, we select $\de = 2K$ to embed the Lagrangian data into the space of coefficients for the potential and we use $M = 12$ layers.

In our experiments, we minimize the training loss $\LCOND$ on the two conditionals using the Adam optimizer with mini-batches of size 128, a learning rate set to $10^{-4}$ and using 20 epochs that each see $N = 32,268$ training samples. %
For this study we evaluate the retrieval capabilities of the learned embeddings on a test set of the same size comprising a total of $\Nt = 32,268$ paired samples of coefficients and trajectories. 
Figure~\ref{fig:fluid_CLIP_loss} plots the retrieval accuracies (normalized to 1) on the training and test sets across the training iterations. In particular, we evaluate the accuracy of retrieving the coefficients from Lagrangian trajectories in the top row and retrieving the trajectories from the coefficients in the bottom row. For a set of samples from a given modality, the retrieval accuracy is computed by evaluating the fraction of inputs for which the true paired sample of the other modality is the top retrieved sample according to the learned cosine similarities ($R@1$ left) or within the retrieved samples with the top five similarities ($R@5$ right). We observe that the model shows an improvement in mapping potential functions to trajectories and trajectories to potential functions across training, approaching 100\% accuracy for $R@5$ retrieval. We also observe that the retrieval accuracy is generally higher for recovering trajectories from potentials, which is expected given this mapping is well-defined by~\eqref{eq:velocity_from_pot} and~\eqref{eq:pot}. %
Lastly, Figure~\ref{fig:fluid_CLIP_retrieval} shows an example of the more challenging retrieval problem of recovering a velocity field potential from a Lagrangian trajectory.

\begin{figure}[!ht]
\centering
\includegraphics[width=0.9\textwidth]{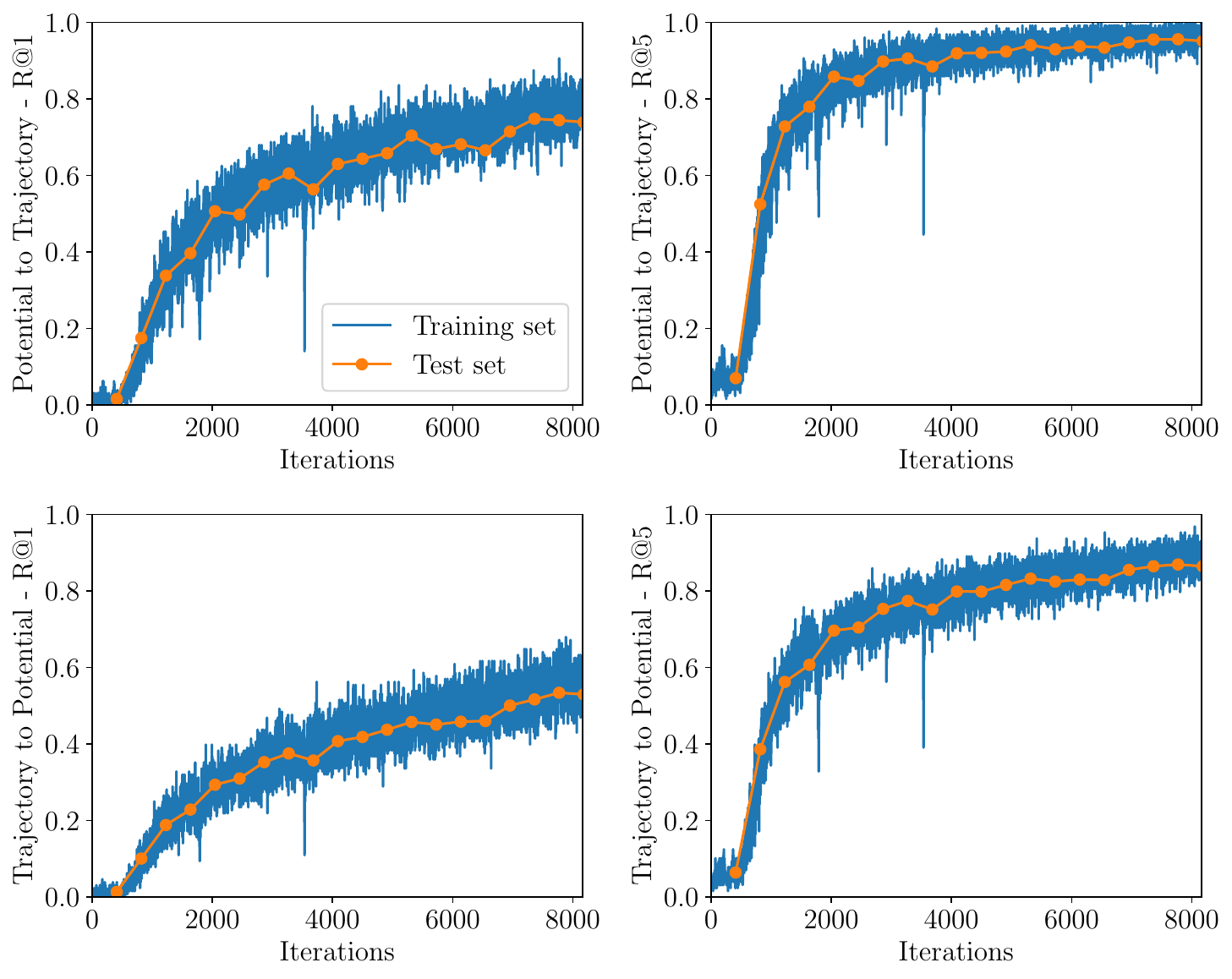}
\vspace{-0.3cm}
\caption{Improvement in retrieval accuracy during training of the encoders for the Lagrangian data assimilation problem. We compute the retrieval accuracies from coefficients to trajectories (\emph{top}) and trajectories to coefficients (\emph{bottom}) for both top retrieval (\emph{left}) and top 5 retrieved items (\emph{right}), which shows improvement on both the training and test data. \label{fig:fluid_CLIP_loss}}
\end{figure}

\begin{figure}[!ht]
\centering
\includegraphics[width=0.8\textwidth]{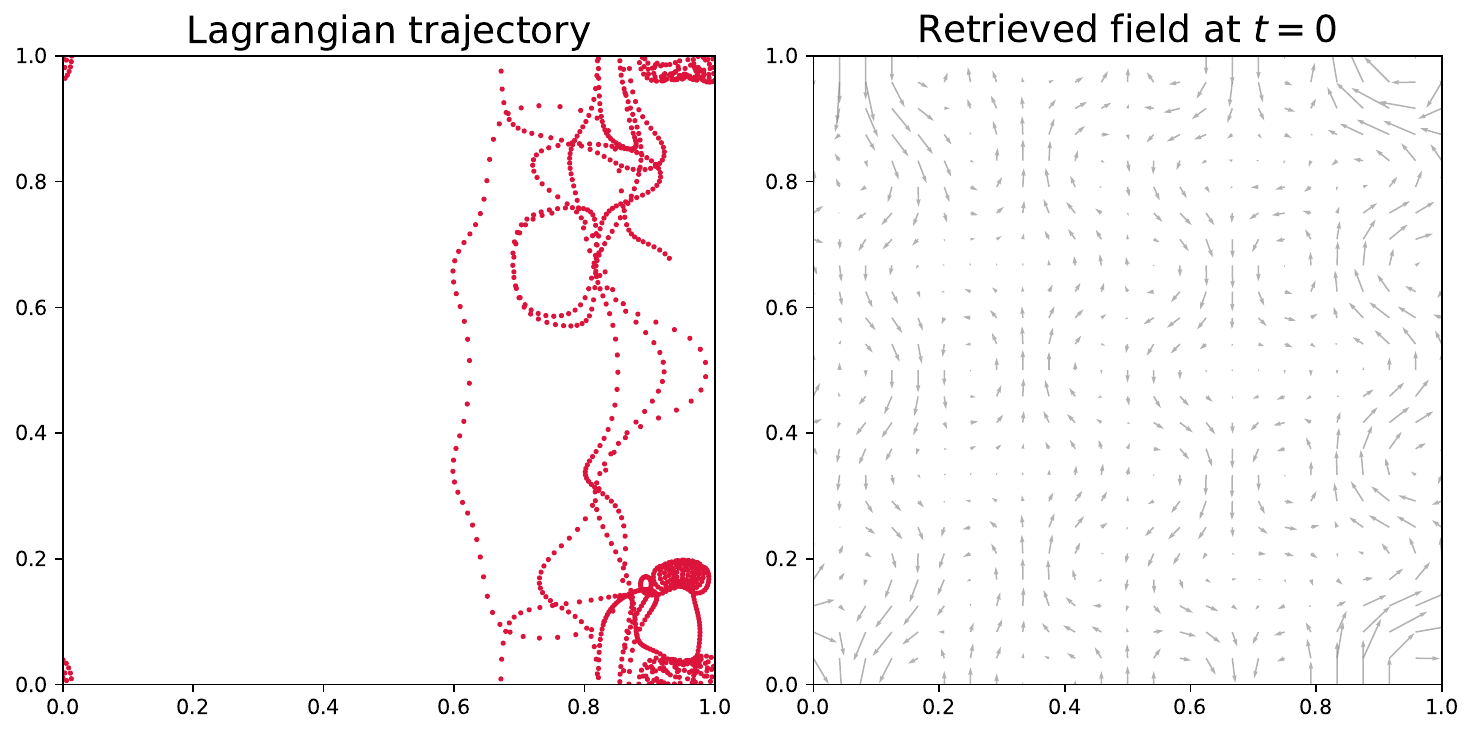}
\vspace{-0.2cm}
\caption{Retrieval of a Eulerian potential from a Lagrangian trajectory. \label{fig:fluid_CLIP_retrieval}}
\end{figure}

\section{Discussion and Outlook}

This work presents a mathematical framework for contrastive learning, introducing several generalizations of the standard approach through novel alignment metrics and probabilistic loss functions. By analyzing Gaussian models, we develop theory to elucidate the relative merits of the different variants on the standard methodology. Our numerical experiments support and extend the theoretical findings, demonstrating their relevance beyond the Gaussian setting. These experiments also demonstrate connections to image classification tasks and suggest broader applicability in scientific and engineering domains. Future directions include extending the framework to handle problems involving more than two modalities, leveraging the flexibility of the proposed methodological variants, utilizing the probabilistic insights emphasized in our formulation, and further developing applications in science and engineering.

\section*{Acknowledgments}
The work of AMS on Lagrangian data assimilation is supported by a Department of Defense (DoD) Vannevar Bush Faculty Fellowship (award N00014-22-1-2790). The authors thank Niall Siegenheim for his assistance in creating the Lagrangian dataset used in this study.

\def\bibfont{\small}
\bibliographystyle{imsart-number}
\bibliography{references}

\appendix

\section{Proofs of Optimization Results}
\label{app:A}

\begin{proof}[Proof of Theorem~\ref{thm:clip_minimization}] Let $\nu$ and $\mu$ be measures, both absolutely continuous with
respect to a common reference measure $\lambda$. The Kullback-Leibler divergence between $\mu$ and $\nu$ is
$$\dkl(\mu||\nu) = \mathbb{E}_{\mu}\left[\log\frac{d\mu/d\lambda}{d\nu/d\lambda}\right].$$
Using this identity with reference measures $\mu_\x$ and $\mu_\y$, respectively, we obtain
\begin{align*}
\mathbb{E}_{\y \sim \mu_\y}\left[\dkl(\mu_{\x|\y}(\cdot|\y)||\pmeas_{\x|\y}(\cdot|\y;\theta)\right] &= \mathbb{E}_{\y \sim \mu_\y}\mathbb{E}_{\mu_{\x|\y}(\cdot|\y)}\left[\log \frac{d\mu_{\x|\y}(\cdot|\y)}{d\mu_\x}\right] - \mathbb{E}_{\y \sim \mu_\y}\mathbb{E}_{\x \sim \mu_{\x|\y}(\cdot|\y)}[\log\pmodel(\x|\y;\theta)],\\ 
\mathbb{E}_{\x \sim \mu_\x}\left[\dkl(\mu_{\y|\x}(\cdot|\x)||\pmeas_{\y|\x}(\cdot|\x;\theta)\right] &= \mathbb{E}_{\x \sim \mu_\x}\mathbb{E}_{\mu_{\y|\x}(\cdot|\x)}\left[\log \frac{d\mu_{\y|\x}(\cdot|\x)}{d\mu_\y}\right] - \mathbb{E}_{\x \sim \mu_\x}\mathbb{E}_{\y \sim \mu_{\y|\x}(\cdot|\x)}[\log\pmodel(\y|\x;\theta)]. 
\end{align*}
Under the finite-entropy assumption, the objective $\JCOND$ can be written as
\begin{align*}
\JCOND(\theta) &= \frac{1}{2}\mathbb{E}_{\y \sim \mu_\y}\left[\dkl(\mu_{\x|\y}(\cdot|\y)||\pmeas_{\x|\y}(\cdot|\y;\theta)\right] + \frac12 \mathbb{E}_{\x \sim \mu_\x}\left[\dkl(\mu_{\y|\x}(\cdot|\x)||\pmeas_{\y|\x}(\cdot|\x;\theta))\right] \\
&= C - \frac12 \mathbb{E}_{\y \sim \mu_\y}\mathbb{E}_{\x \sim \mu_{\x|\y}(\cdot|\y)}[\log\pmodel(\x|\y;\theta)] - \frac12 \mathbb{E}_{\x \sim \mu_\x}\mathbb{E}_{\y \sim \mu_{\y|\x}(\cdot|\x)}[\log\pmodel(\y|\x;\theta)] \\
&= C + \LCOND(\theta),
\end{align*}
where $C$ is a constant that only depends on $\mu$, and hence not on the parameters $\theta$, given by
\begin{align*}
C &\coloneqq \frac{1}{2}\mathbb{E}_{\y \sim \mu_\y}\mathbb{E}_{\mu_{\x|\y}(\cdot|\y)}\left[\log \frac{d\mu_{\x|\y}(\cdot|\y)}{d\mu_\x}\right] + \frac{1}{2}\mathbb{E}_{\x \sim \mu_\x}\mathbb{E}_{\mu_{\y|\x}(\cdot|\x)}\left[\log \frac{d\mu_{\y|\x}(\cdot|\x)}{d\mu_\y}\right] \\
&= \frac{1}{2}\mathbb{E}_{\y \sim \mu_\y}\left[\dkl(\mu_{\x|\y}(\cdot|\y)||\mu_{\x})\right] + \frac{1}{2}\mathbb{E}_{\x \sim \mu_\x}\left[\dkl(\mu_{\y|\x}(\cdot|\x)||\mu_{\y})\right].
\end{align*}
From the assumption, $C < \infty.$ Thus, a minimizer of $\JCOND$ is also a minimizer of $\LCOND$.
\end{proof}

\begin{proof}[Proof of Theorem~\ref{thm:equivalent_objectives}]
For the lower bound, we use that $x \mapsto -\log(x)$ is a convex function on $(0,\infty)$ so that by Jensen's inequality we have
\begin{align*}
\mathbb{E}_{\x \sim \mu_\x}\left[-\log \mathbb{E}_{\y \sim \mu_\y}\exp \bigl( \langle \normg_\x(\x;\theta_\x), \normg_\y(\y; \theta_\y) \rangle / \tau \bigr)\right] &\geq  -\log\mathbb{E}_{(\x,\y) \sim \mu_{\x}\otimes \mu_{\y}} \exp \bigl( \langle \normg_\x(\x;\theta_\x), \normg_\y(\y; \theta_\y) \rangle / \tau \bigr) \\
\mathbb{E}_{\y \sim \mu_\y}\left[-\log \mathbb{E}_{\x \sim \mu_\x}\exp \bigl( \langle \normg_\x(\x;\theta_\x), \normg_\y(\y; \theta_\y) \rangle / \tau \bigr)\right] &\geq  -\log\mathbb{E}_{(\x,\y) \sim \mu_{\x}\otimes \mu_{\y}} \exp \bigl( \langle \normg_\x(\x;\theta_\x), \normg_\y(\y; \theta_\y) \rangle / \tau \bigr).
\end{align*}
Substituting these expressions in the loss function $\LCOND$ gives us
\begin{align*}
-\LCOND(\theta) &\geq \mathbb{E}_{(\x,\y) \sim \mu}\left[\langle \normg_\x(\x;\theta_\x), \normg_\y(\y; \theta_\y) \rangle / \tau \right] - \log \mathbb{E}_{(\x,\y) \sim \mu_{\x}\otimes \mu_{\y}} \left[\langle \normg_\x(\x;\theta_\x), \normg_\y(\y; \theta_\y) \rangle / \tau \right].\\
&= -\LJOINT(\theta),
\end{align*}
for all $\theta \in \R^p$. Multiplying both signs by a negative number gives us the final result. 
\end{proof}

\begin{proof}[Proof of Theorem~\ref{thm:retrieval}] For an empirical distribution, we have that the maximum is achieved at one of the elements in the dataset. Moreover,
$$\argmax_{\x \in \mathcal{\X}} \pmodel^N(\x|\y;\theta) = \argmax_{\x^i: i \in [N]} \omega_{i}(\y).$$
Given that the un-normalized weights are monotonic functions of the cosine similarity and the normalization constant does not affect the minimum, we have the result.
\end{proof}

\section{Use of Maximum Mean Discrepancy} \label{app:mmd}

\begin{example} \label{ex:mmd_conditionals} One metric that is readily computable between empirical measures is the maximum mean discrepancy (MMD). Assume
we are given a space $U$ and kernel function $k \colon U \times U \rightarrow \R_+$ measuring the closeness of points in $U$; for example we might
use the Gaussian kernel or polynomial kernel. The MMD between measures $\pi$ and $\pi'$ supported on $U$ is 
\begin{equation} \label{eq:MMD}
\Div_{\mathsf{mmd}}(\pi,\pi') \coloneqq \mathbb{E}_{(x,x') \sim \nu \otimes \nu} k(x,x') - 2\mathbb{E}_{(x,y) \sim \nu \otimes \nu'} k(x,y) + \mathbb{E}_{(y,y') \sim \nu' \otimes \nu'} k(y,y').
\end{equation}
For instance, if $\pi = \frac{1}{N}\sum_{i=1}^N \delta_{x^i}$ and $\pi' = \frac{1}{N}\sum_{i=1}^N \delta_{y^i},$ then the metric has the form
$$\Div_{\mathsf{mmd}}(\pi,\pi') \coloneqq \frac{1}{N(N-1)}\sum_{i \neq j} k(x^i,x^j) - 2\frac{1}{N(N-1)}\sum_{i \neq j} k(x^i,y^j) + \frac{1}{N(N-1)}\sum_{i \neq j} k(y^i,y^j).$$

Now set $U=\cU$ and choose a kernel $k_\x$; and then set $U=\cV$ and choose a kernel $k_\y.$
Using the two resulting MMDs for the divergence $\Div$ in Optimization Problem~\ref{do:J1} we have the objective function (noting that the MMD distance
will typically be defined with different kernels on the two different spaces $\cU,\cV$) given as:
\begin{align*}
\JCONDM(\theta;\lambda_\x,\lambda_\y) &= \frac{\lambda_\x}{2}\mathbb{E}_{\y \sim \mu_\y}\left[\Div_{\mathsf{mmd}}(\mu_{\x|\y}(\cdot|\y)||\pmeas_{\x|\y}(\cdot|\y;\theta))\right]\\&\quad\quad\quad\quad\quad\quad + \frac{\lambda_\y}{2}\mathbb{E}_{\x \sim \mu_\x}\left[\Div_{\mathsf{mmd}}(\mu_{\y|\x}(\cdot|\x)||\pmeas_{\y|\x}(\cdot|\x;\theta))\right].
\end{align*}
Only a subset of terms in this objective depend on the parameters $\theta$. Thus to
minimize this objective with respect to $\theta$, we may identify a loss function comprising only those terms that depend on $\theta$.
The objective is a sum of two terms for matching the $\x|\y$ and $\y|\x$ conditionals:
$$\JCONDM(\theta;1,1) = \lambda_\x \JCONDM(\theta;1,0) + \lambda_\y \JCONDM(\theta;0,1).$$ 
We first identify the $\theta$-dependent terms in the contribution to this objective defined by the  $\x|\y$ conditional. The term defined by the $\y|\x$ conditional is handled similarly, by symmetry. From the definition in~\eqref{eq:MMD}, the expected MMD between the true and model $\x|\y$ conditionals is given by 
\begin{align*}
    \JCONDM(\theta;1,0) &= \frac{1}{2} \mathbb{E}_{\y \sim \mu_{\y}}\Bigl[\mathbb{E}_{(\x,\x') \sim \mu_{\x|\y}(\cdot|\y) \otimes \mu_{\x|\y}(\cdot|\y)} k_\x(\x,\x')\\&  \qquad\qquad\qquad- 2 \mathbb{E}_{(\x,\x') \sim \mu_{\x|\y}(\cdot|\y) \otimes \nu_{\x|\y}(\cdot|\y;\theta)} k_\x(\x,\x') \\
    &\qquad\qquad\qquad \qquad\qquad\qquad +  \mathbb{E}_{(\x,\x') \sim \nu_{\x|\y}(\cdot|\y;\theta) \otimes \nu_{\x|\y}(\cdot|\y;\theta)} k_\x(\x,\x')\Bigr] \\
    &= \frac{1}{2} \mathbb{E}_{\y \sim \mu_\y} \Bigl[\mathbb{E}_{(\x,\x') \sim \mu_{\x}\otimes \mu_\x} k_\x(\x,\x')r(\x|\y)r(\x'|\y)\\
    &\qquad\qquad\qquad - 2\mathbb{E}_{(\x,\x') \sim \mu_{\x} \otimes \mu_{\x}} k_\x(\x,\x')r(\x|\y)\rho(\x'|\y;\theta) \\
    &\qquad\qquad\qquad  \qquad\qquad\qquad+ \mathbb{E}_{(\x,\x') \sim \mu_{\x}\otimes \mu_\x} k_\x(\x,\x')\rho(\x|\y;\theta)\rho(\x'|\y;\theta)\Bigr] \\
    &= \frac{1}{2} \mathbb{E}_{(\y,\x,\x') \sim \mu_{\y} \otimes \mu_{\x}\otimes \mu_\x} k_\x(\x,\x')r(\x|\y)r(\x'|\y) + \LCONDM(\theta;1,0),
\end{align*}
where the loss is defined as
\begin{align*}\LCONDM(\theta;1,0)& = -\frac{1}{2}\mathbb{E}_{(\y,\x,\x') \sim \mu_{\y}\otimes \mu_{\x}\otimes \mu_\x} k_\x(\x,\x')\rho(\x|\y;\theta)\rho(\x'|\y;\theta)\\ &\qquad\qquad\qquad\qquad\qquad\qquad+\mathbb{E}_{((\y,\x),\x') \sim \mu \otimes \mu_{\x}} k_\x(\x,\x')\rho(\x'|\y;\theta).
\end{align*}
We then minimize
$$\LCONDM(\theta;1,1) = \lambda_\x \LCONDM(\theta;1,0) + \lambda_\y \LCONDM(\theta;0,1).$$ 
This may be evaluated using only the empirical measures $\mu^N, \mu_\x^N, \mu_\y^N$ in~\eqref{eq:empirical} by noting that
$$\LCONDM^N(\theta;1,0) = \frac{1}{2}\sum_{i} \sum_{j \neq k} k_\x(\x^j,\x^k)\rho(\x^j|\y^i;\theta)\rho(\x^k|\y^i;\theta)
+\sum_{i \neq k}  k_\x(\x^i,\x^k)\rho(\x^k|\y^i;\theta),$$
together with the analogous expression for the $\y|\x$ conditional.
\end{example}

\section{Retrieval and Classification at the Population Level} \label{app:RCP}

\subsection{Retrieval: Population Level} \label{sec:retrieval-population} 
For simplicity of exposition we assume that the the marginal of the data distribution over $\x$ is absolutely continuous with respect to the Lebesgue measure: there exists potential function $\phi_\x \colon \mathcal{\X} \rightarrow \R$ such that
$$\mu_\x(d\x) =  \exp(\phi_\x(\x))d\x.$$
Then, the conditional measure for $\x|\y$ in~\eqref{eq:pmeas1} also has Lebesgue density
$$\pmeas_{\x|\y}(d\x|\y;\theta) =\Big(\frac{\exp\bigl(\langle \normg_\x(\x;\theta_\x), \normg_\y(\y;\theta_\y) \rangle / \tau + \phi_\x(\x)\bigr)}{\int_{\cX} \exp(\langle \normg_\x(\x';\theta_\x), \normg_\y(\y;\theta_\y) \rangle / \tau + \phi_\x(\x')) d\x'}\Bigr)\, du.$$
We may view this formulation in the context of a Bayesian inverse problem for $\x$ given $\y.$ The preceding identity defines the posterior $\pmeas_{\x|\y}$ from the prior $\mu_\x$ and likelihood proportional to $\rho(\cdot|\y)$ defined in (\ref{eq:continuum_probability}a).
To identify the input $\x \in \mathcal{\X}$ that is most related to the given point $v \in \mathcal{\Y}$, it is natural to seek the point in $\mathcal{\X}$ that maximizes the posterior probability. That is,
\begin{equation}
\x^\ast(\y) \in \argmax_{\x \in \mathcal{\X}} \pmeas_{\x|\y}(\x|\y;\theta) = \argmax_{\x \in \mathcal{\X}} \, \Bigl(\langle \normg_\x(\x;\theta_\x), \normg_\y(\y;\theta_\y) \rangle/\tau + \phi_\x(\x)\Bigr).
\end{equation}
In the context of Bayesian inverse problems, $\x^\ast$ is commonly referred to as the \emph{mode} or \emph{maximum a-posteriori} (MAP) point. 

\subsection{Classification: Population Level}  \label{sec:classification-population}

The contrastive learning problem defines the learned joint distribution $\nu$, given by \eqref{eq:joint_probability_model} and 
\eqref{eq:rz}, as change of measure $\rho$ from reference measure $\mu_\x \otimes \mu_\y.$ In the Bayesian interpretation we may view 
the resulting conditional distribution $\nu_{\y|\x}$ from  \eqref{eq:pmeas2} as being defined by the
likelihood proportional to $\rho(\y|\x;\theta)$ and the prior $\mu_\y$. 
In classification we use the same likelihood, possibly with a modified parameter $\theta$ found using fine-tuning, 
but develop algorithms that generalize to different priors.

To this end, we introduce a measure $\munew$ on $\mathcal{\X} \times \mathcal{\Y}$ and modify \eqref{eq:joint_probability_model}, \eqref{eq:rz} to obtain $\nunew(\cdot;\theta)$, a probability measure for the joint random variable $(\x, \y) \in \mathcal{\X} \times \mathcal{\Y}$, defined by
\begin{subequations}
\label{eq:rzM}
    \begin{align}
    \nunew(d\x,d\y;\theta) &=  \pmodel(\x,\y;\theta)  \munew_\x(d\x)\munew_\y(d\y),\\
    \pmodel(\x,\y;\theta) &=  \frac{1}{Z} \exp\Bigl(\langle \normg_\x(\x;\theta_\x), \normg_\y(\y;\theta_\y) \rangle / \tau \Bigr),\\
        Z &= \int_{\mathcal{\X} \times \mathcal{\Y}} \exp(\langle \normg_\x(\x;\theta_\x),  \normg_\y(\y;\theta_\y) \rangle / \tau) \munew_\x(d\x)\munew_\y(d\y),
    \end{align}
\end{subequations}
where $\munew_\x(d\x), \munew_\y(d\y)$ are the marginals of $\munew.$ 

For expository purposes we assume that, for $\phinew_\y \colon \mathcal{\Y} \rightarrow \R$ and $d\y$ denoting Lebesgue measure, 
$$\munew_\y(d\y) =  \exp\bigl(\phinew_\y(\y)\bigr)d\y.$$ 
The conditional measure for $\y|\x$  then also has a Lebesgue density and is given by
\begin{equation}
    \label{eq:asin}
\nunew_{\y|\x}(d\y|\x;\theta) = \left( \frac{\exp\bigl(\langle \normg_\x(\x;\theta_\x), \normg_\y(\y;\theta_\y) \rangle / \tau + \phinew_\y(\y)\bigr)}{\int_{\cY} \exp\bigl(\langle \normg_\x(\x;\theta_\x), \normg_\y(\y';\theta_\y) \rangle / \tau + \phinew_\y(\y')\bigr) d\y'}\right) d\y.
\end{equation}
For now we fix the parameters $(\theta_\x, \theta_\y)$ at the solution found during contrastive learning based on measure $\mu.$
To assign one label $\y \in \mathcal{\Y}$ to an input $\x \in \mathcal{\X}$, it is natural to identify the mode that maximizes the conditional measure over labels. That is,
\begin{equation} \label{eq:mode_labels}
\y^\ast(\x) \in \argmax_{\y \in \mathcal{\Y}} \nunew_{\y|\x}(\y|\x;\theta) = \argmax_{\y \in \mathcal{\Y}} \Bigl( \langle \normg_\x(\x;\theta_\x), \normg_\y(\y;\theta_\y) \rangle/\tau + \phinew_\y(\y) \Bigr).
\end{equation}
In the Bayesian interpretation $\nunew_{\y|\x}$ differs from $\pmeas_{\y|\x}$ only in the change of prior; the likelihood has been fixed.
The point $\y^\ast(\x)$ is the MAP estimator for the posterior $\nunew_{\y|\x}$.

\begin{example} For a set $\mathcal{C} \subseteq \mathcal{\Y}$, one choice for the marginal distribution $\munew_\y$ is 
$$\munew_\y(d\y) \propto \mu_\y(d\y)1_{\{\y \in \mathcal{C}\}}.$$ Assuming that $\mu_\y$ is absolutely continuous with respect to the Lebesgue measure so that there exists a potential $\phi_\y$ such that $\mu_\y(d\y) = \exp(\phi_\y(\y))d\y$, the optimization problem in~\eqref{eq:mode_labels} to identify the mode corresponds to solving the constrained problem
$$\y^\ast(\x) \in  \argmax_{\y \in \mathcal{C}} \Bigl( \langle \normg_\x(\x;\theta_\x), \normg_\y(\y;\theta_\y) \rangle/\tau + \phi_\y(\y) \Bigr).$$ 
\end{example}

During pretraining, the parameters $(\theta_\x, \theta_\y)$ 
defining the encoders for both modalities are learned using data $\mu.$ To improve the classification accuracy of the model it may be advantageous, during a fine-tuning phase, to adjust the conditional distribution whose mode defines
the classifier so that it  better matches the dataset $\mufinetuning$. We will use fine-tuning to find
an improved prior and to modify the parameter $\theta_\y$ which defines the likelihood. 
To this end we first introduce a family of marginal measures $\munef_\y(\cdot;\theta_\phi)$ depending on parameters $\theta_\phi$ that are defined via their density with respect to $\munew_\y$:
$$\munef_\y(d\y;\theta_\phi) =  \exp\bigl(\phinef_\y(\y;\theta_\phi)\bigr)\munew_\y(d\y)=\exp\bigl(\phinef_\y(\y;\theta_\phi)+\phinew_\y(\y)\bigr)d\y.$$
Now define $\vartheta=(\theta_\y,\theta_\phi).$ Using the same form for the change of measure as in \eqref{eq:asin}, but using $\munef_\y$ as
reference measure rather than $\munew_\y$, we obtain 
\begin{equation}
    \label{eq:asinA}
\nunef_{\y|\x}(d\y|\x;\vartheta) = \left( \frac{\exp\bigl(\langle \normg_\x(\x;\theta_\x), \normg_\y(\y;\theta_\y) \rangle / \tau + \phinef_\y(\y;\theta_\phi)+\phinew_\y(\y)\bigr)}{\mathbb{E}_{\y' \sim \munew_\y} \exp\bigl(\langle \normg_\x(\x;\theta_\x), \normg_\y(\y';\theta_\y) \rangle / \tau + \phinef_\y(\y';\theta_\phi)+\phinew_\y(\y')\bigr)}\right) d\y.
\end{equation}
We consider the learning of $\vartheta$ by minimizing the following objective function for the one-sided conditional distribution:
\begin{equation}
    \JFINE(\vartheta) = \mathbb{E}_{\x \sim \mufinetuning_\x} \dkl(\mufinetuning_{\y|\x}(\cdot|\x)||\nunef_{\y|\x}).
\end{equation}
In solving this optimization problem, and adopting the Bayesian perspective, we choose the prior (via optimal choice of parameter
$\theta_\phi$) and the likelihood  (via optimal choice of parameter $\theta_\y)$ so that the resulting conditional distribution
$\nunef_{\y|\x}(\cdot|\x;\vartheta)$ is well-suited to classification tasks based on data from, or closely related to, $\munew.$
By Theorem~\ref{thm:clip_minimization} and Remark \ref{rem:J2L}, the relevant optimization problem for fine-tuning parameters $\vartheta$ is
\begin{definitiono} \label{op:finetuning}
\begin{align*}
    -\LFINE(\vartheta)
    &= \mathbb{E}_{(\x,\y) \sim \mufinetuning}[\langle \normg_\x(\x;\theta_\x), \normg_\y(\y;\theta_\y) \rangle / \tau + \phinef_\y(\y;\theta_\phi)] \\
    &\qquad\qquad - \mathbb{E}_{\x \sim \mufinetuning_\x} \log \mathbb{E}_{\y' \sim \mufinetuning_\y} \exp [\langle \normg_\x(\x;\theta_\x), \normg_\y (\y';\theta_\y) \rangle / \tau + \phinef_\y(\y';\theta_\phi)]. \\
    \tfine & \in \argmin_{\vartheta \in \R^{p'}} \, \LFINE(\vartheta).
\end{align*}
\end{definitiono}
The resulting model with the optimal parameters $\tfine$ is used to classify typical inputs $\x$ from the marginal distribution $\mufinetuning_\x$ by finding the mode, analogously to~\eqref{eq:mode_labels}:
\begin{equation} \label{eq:mode_labelsA}
\y^\ast(\x) \in \argmax_{\y \in \mathcal{\Y}} \nunef_{\y|\x}(\y|\x;\tfine). 
\end{equation}

\section{Gaussian Proofs and Remarks} \label{app:GaussianProofs}

\begin{proof}[Proof of Theorem~\ref{thm:Gaussian_solution_cosine}] 
The loss function in~\eqref{eq:objective_linear_encoder} can be written as 
\begin{align*}
\LCOND(A) = -\mathbb{E}_{(\x,\y) \sim \mu} \langle \x, A\y \rangle + \frac12 \mathbb{E}_{\x \sim \mu_\x} \log \mathbb{E}_{\y \sim \mu_\y} \exp \langle u, Av \rangle + \frac12 \mathbb{E}_{\y \sim \mu_\y} \log \mathbb{E}_{\x \sim \mu_\x} \exp \langle \x, A\y \rangle
\end{align*}
Using Lemma~\ref{lem:Gaussian_expintegral} with $B = 0$ and $c = A\y$ or $c = A^\top \x$ we have that
\begin{align*}
    \mathbb{E}_{\x \sim \mu_\x} \exp \langle \x, A\y \rangle &= \exp\left(\frac12 \y^\top A^\top \cov_{\x\x} A \y\right), \text{ or } \\
    \mathbb{E}_{\y \sim \mu_\y} \exp \langle \x, A\y \rangle &= \exp\left(\frac12\x^\top A \cov_{\y\y} A^\top \x\right),
\end{align*}
respectively. Substituting these forms in $\LCOND$ results in the objective 
\begin{align*}
\LCOND(A) &= -\mathbb{E}_{(\x,\y) \sim \mu} \Tr(A\y\x^\top) + \frac12 \mathbb{E}_{\x \sim \mu_\x}\left(\frac12\x^\top A \cov_{\y\y} A^\top \x\right) + \frac12 \mathbb{E}_{\y \sim \mu_\y}\left(\frac12 \y^\top A^\top \cov_{\x\x} A \y\right) \\
&= -\mathbb{E}_{(\x,\y) \sim \mu} \Tr(A\y\x^\top) + \frac12 \mathbb{E}_{\x \sim \mu_\x} \Tr(A^\top \cov_{\y\y}A \x\x^\top) + \frac12 \mathbb{E}_{\y \sim \mu_\y} \Tr(A^\top \cov_{\x\x}A \y\y^\top)  \\
&= -\Tr(A\cov_{\y\x}) + \Tr(A^\top \cov_{\x\x}A \cov_{\y\y}).
\end{align*}
By adding a constant that is not dependent on $A$ to the loss function we have
\begin{align}
\LCOND(A) + \Tr(\cov_{\x\x}^{-1}\cov_{\x\y}\cov_{\y\y}^{-1}\cov_{\y\x}) &= \Tr((A^\top\cov_{\x\x} - \cov_{\y\y}^{-1}\cov_{\y\x})(A\cov_{\y\y} - \cov_{\x\x}^{-1}\cov_{\x\y})) \notag \\
&= \Tr(\cov_{\y\y}(A - \cov_{\x\x}^{-1}\cov_{\x\y}\cov_{\y\y}^{-1})^\top\cov_{\x\x}(A - \cov_{\x\x}^{-1}\cov_{\x\y}\cov_{\y\y}^{-1})) \notag \\
&= \|\cov_{\y\y}^{1/2}(A - A^*)\cov_{\x\x}^{1/2}\|_F^2 \label{eq:Frobenius_norm},
\end{align}
where we define $A^* \coloneqq \cov_{\y\y}^{-1}\cov_{\y\x}\cov_{\x\x}^{-1}$. Given that minimizing $\Loss$ corresponds to minimizing~\eqref{eq:Frobenius_norm}, which is bounded from below, we have that the minimum over all matrices $A$ is given by $A^*$. 

With the rank constraint, from~\citep[Theorem 2.1]{friedland2007generalized} we have that
$$A_r^* = \argmin_{A \in \mathcal{A}_r} \|\cov_{\y\y}^{1/2}(A - A^*)\cov_{\x\x}^{1/2}\|_F^2,$$
has a unique solution given by computing the SVD of $\cov_{\y\y}^{1/2}A^*\cov_{\x\x}^{1/2} = \cov_{\y\y}^{-1/2}\cov_{\y\x}\cov_{\x\x}^{-1/2}$ and then pre-multiplying by the square roots of the marginal covariances to get $A_r^*$.
\end{proof}

\begin{proof}[Proof of Corollary~\ref{cor:matching_means}]
If the parameter in the learnable model $\nu$~\eqref{eq:joint_probability_modelI} is chosen to be $A = A^*(r)$ for any $0 < r \leq \min(\dx,\dy)$, the form of the approximate conditional distributions is given by:
\begin{subequations}
    \begin{align}
    \pmeas_{\x|\y}(\x|\y;A^\ast(r)) &= \mathcal{N}(\cov_{\x\x}^{1/2}(\cov_{\x\x}^{-1/2}\cov_{\x\y}\cov_{\y\y}^{-1/2})_r\cov_{\y\y}^{-1/2}\y, \cov_{\x\x}) \\
    \pmeas_{\y|\x}(\y|\x;A^\ast(r)) &= \mathcal{N}(\cov_{\y\y}^{1/2}(\cov_{\x\x}^{-1/2}\cov_{\x\y}\cov_{\y\y}^{-1/2})_r^\top\cov_{\x\x}^{-1/2}\x,  \cov_{\y\y}).
    \end{align}
\end{subequations} 
Without the rank constraint, i.e., $r = \min(\dx,\dy)$, we have $A^*(r) = A^*$ and hence the conditional measures $\pmeas_{\x|\y}$ and $\pmeas_{\y|\x}$ are given by~\eqref{eq:cosine_optconditional_yx_unconstrained} and~\eqref{eq:cosine_optconditional_xy_unconstrained}, respectively.
\end{proof}

\begin{proof}[Proof of Theorem~\ref{thm:Gaussian_quadraticform_onesided}] 
Using Lemma~\ref{lem:KLdivergence_Gaussians} for the KL Divergence between the multivariate Gaussians $\mu_{\x|\y}$ and $\pmeas_{\x|\y}$ in expectation over $\mu_\y$, the objective for $A,B$ is given by 
\begin{align}
    \JCOND(A,B;2,0) &= \mathbb{E}_{\y \sim \mu_\y} \left[\dkl(\mu_{\x|\y}(\cdot|\y)||\pmeas_{\x|\y}(\cdot|\y))\right] \nonumber \\
    &=\frac{1}{2}\left[\Tr((B + \cov_{\x\x}^{-1})\cov_{\x|\y}) - d + \log|(B + \cov_{\x\x}^{-1})\cov_{\x|\y}|\right] + \Delta(A,B), \label{eq:KLonesided_loss_proof}
\end{align}
where the last term is expressed as
\begin{align*}
\Delta(A,B) &\coloneqq \mathbb{E}_{\y \sim \mu_\y} \|(B + \cov_{\x\x}^{-1})^{-1}Av - \cov_{\x\y}\cov_{\y\y}^{-1}\y\|_{(B + \cov_{\x\x}^{-1})^{-1}}^2 \\
&= \Tr(((B + \cov_{\x\x}^{-1})^{-1}A - \cov_{\x\y}\cov_{\y\y}^{-1})^\top (B + \cov_{\x\x}^{-1})((B + \cov_{\x\x}^{-1})^{-1}A - \cov_{\x\y}\cov_{\y\y}^{-1})\cov_{\y\y}) \\
&= \left\|\left((B + \cov_{\x\x}^{-1})^{-1/2}A - (B + \cov_{\x\x}^{-1})^{1/2}\cov_{\x\y}\cov_{\y\y}^{-1}\right)\cov_{\y\y}^{1/2}\right\|_F^2. 
\end{align*}
Only the last term in~\eqref{eq:KLonesided_loss_proof} depends on $A$. %
Minimizing the last term over unconstrained matrices $A$ for each $B$ results in
$$A^\ast = (B + \cov_{\x\x}^{-1})\cov_{\x\y}\cov_{\y\y}^{-1},$$
where the minimum satisfies $\Delta(A^\ast, B) = 0$. Then, the minimizer of $\LCOND(A^\ast,B;2,0)$ over unconstrained matrices $B$ is given by
$$B^\ast = \cov_{\x|\y}^{-1} - \cov_{\x\x}^{-1} = \cov_{\x\x}^{-1}\cov_{\x\y}\cov_{\y|\x}^{-1}\cov_{\y\x}\cov_{\x\x}^{-1}.$$
Minimizing the last term in~\eqref{eq:KLonesided_loss_proof} over the constrained set $\mathcal{A}_r$ for each $B$ gives us
$$A^\ast(r) = (B + \cov_{\x\x}^{-1})^{1/2}((B + \cov_{\x\x}^{-1})^{1/2}\cov_{\x\y}\cov_{\y\y}^{-1/2})_r\cov_{\y\y}^{-1/2},$$
where the minimum of the third term is then given by
\begin{align*}
\Delta(A^\ast(r),B) &= \left\|\left((B + \cov_{\x\x}^{-1})^{1/2}\cov_{\x\y}\cov_{\y\y}^{-1/2}\right)_r - \left((B + \cov_{\x\x}^{-1})^{1/2}\cov_{\x\y}\cov_{\y\y}^{-1/2}\right)\right\|_{F}^2.
\end{align*}
Substituting the minimum value of $\Delta(A^\ast(r),B)$ into the loss yields the  objective in~\eqref{eq:opt_onesided_Bproblem} for finding the optimal $B^*(r)$.
\end{proof}

\begin{proof}[Proof of Corollary~\ref{cor:matching_meansandcov}]
If the parameters in the learnable model $\pmeas$ in~\eqref{eq:Gaussian_joint_quadratic} are chosen to be $A = A^*(r)$ and $B = B^*(r)$ from Theorem~\ref{thm:Gaussian_quadraticform_onesided}, the approximate conditional distribution $\pmeas_{\x|\y}$ in~\eqref{eq:Gaussian_conditionals_quadratic} is given by 
\begin{align*}
    \pmeas_{\x|\y}(\x|\y;A^\ast(r),B^*(r)) &= \mathcal{N}\Bigl((B^*(r) + \cov_{\x\x})^{-1/2}\bigl((B^*(r) + \cov_{\x\x})^{1/2}\cov_{\x\y}\cov_{\y\y}^{-1/2}\bigr)_r\cov_{\y\y}^{-1/2}\y, (B^*(r) + \cov_{\x\x}^{-1})^{-1}\Bigr),
\end{align*}
where $B^*(r)$ solves the optimization problem in~\eqref{eq:opt_onesided_Bproblem}. %

Without a rank constraint, i.e., $r = \dx$, %
we have $B^*(r) = B^*$ in~\eqref{eq:optB_onesided}. 
Then, using the Sherman-Woodbury matrix identity, the conditional covariance of $\x|\y$ for the learnable model is 
$$(B^* + \cov_{\x\x}^{-1})^{-1} = (\cov_{\x\x}^{-1}\cov_{\x\y}\cov_{\y|\x}^{-1}\cov_{\y\x}\cov_{\x\x}^{-1} + \cov_{\x\x}^{-1})^{-1} = \cov_{\x\x} - \cov_{\x\y}\cov_{\y\y}^{-1}\cov_{\y\x}.$$
Thus, the conditional distribution for $\x|\y$ is given by~\eqref{eq:cosine_optconditional_xy_unconstrained},  
whose conditional expectations match the conditional expectations for $\x|\y$ in~\eqref{eq:true_conditional_xy}. 

The one-sided objective $\LCOND(A,B;2,0)$ does not depend on the matrix $C=H^\top H$ appearing in~\eqref{eq:Gaussian_conditionals_quadratic}, which defines the approximate conditional distribution for $\y|\x$. For this reason, minimizing the one-sided objective only matches the conditional distribution for $\x|\y$.
\end{proof}

\begin{proof}[Proof of Theorem~\ref{thm:Gaussian_joint}] 
Using Lemma~\ref{lem:Gaussian_expintegral} with $\z = (\x,\y) \sim \mu_{\x} \otimes \mu_\y$ we have
$$\mathbb{E}_{(\x,\y) \sim \mu_{\x} \otimes \mu_\y}[\exp(\x^\top A \y)] = \mathbb{E}_{\z \sim \mathcal{N}(0,\Lambda)}\left[\exp\left(\frac{1}{2}(\x,\y)^\top B (\x,\y)\right)\right] = \frac{1}{\sqrt{|I_{\dx + \dy} - \Lambda B|}},$$
where
$$\Lambda = \begin{bmatrix} \cov_{\x\x} & 0 \\ 0 & \cov_{\y\y} \end{bmatrix}, \qquad B = \begin{bmatrix} 0 & A \\ A^\top & 0 \end{bmatrix}.$$
Thus, the objective function in~\eqref{eq:JointLossGaussian} is given by
\begin{align*}
\LJOINT(A) &= -\mathbb{E}_{(\x,\y) \sim \mu}[\Tr(A\y\x^\top)] + \log \mathbb{E}_{(u,v) \sim \mu_{\x} \otimes \mu_\y}[\exp(\x^\top A \y)] \\
&= -\Tr(A\cov_{\y\x}) - \frac{1}{2}\log|I_{\dy} - \cov_{\y\y}A^\top \cov_{\x\x}A| \\
&= -\Tr((\cov_{\x\x}^{1/2}A\cov_{\y\y}^{1/2})(\cov_{\y\y}^{-1/2}\cov_{\y\x}\cov_{\x\x}^{1/2})) - \frac{1}{2}\log|I_{\dy} - \cov_{\y\y}^{1/2}(\cov_{\y\y}^{1/2}A^\top\cov_{\x\x}^{1/2})(\cov_{\x\x}^{1/2}A\cov_{\y\y}^{1/2})\cov_{\y\y}^{-1/2}|.
\end{align*}
Let $\widehat{A} \coloneqq \cov_{\x\x}^{1/2}A\cov_{\y\y}^{1/2} \in \R^{\dx \times \dy}$ and $\Omega = \cov_{\y\y}^{-1/2}\cov_{\y\x}\cov_{\x\x}^{-1/2} \in \R^{\dy \times \dx}$. Then, an equivalent objective function that is maximized to identify $\widehat{A}$ is given by
$$\widehat{\mathsf{L}}_{\textsf{joint}}(\widehat{A}) \coloneqq \Tr(\widehat{A}\Omega) +  \frac{1}{2}\log|I_{\dy} - \widehat{A}^\top\widehat{A}|.$$
Let the matrix $\Omega$ have a singular value decomposition $U\Sigma V^\top$. Each matrix $\widehat{A}$ has a singular value decomposition $\widehat{V}D \widehat{U}^\top$. By the von Neumann Trace inequality (see~\cite{carlsson2021neumann}), we have 
$$\max_{\widehat{U},\widehat{V}}\, \Tr(\widehat{V}D\widehat{U}^\top U \Sigma V^T) \leq \sum_{i=1}^{\min(\dx,\dy)} D_{ii}\Sigma_{ii}$$ 
where equality is attained at $\widehat{U} = U$ and $\widehat{V} = V$. By the invariance of the log-determinant to the singular vectors of $\widehat{A}^\top\widehat{A}$, the objective satisfies
$$\widehat{\mathsf{L}}_{\textsf{joint}}(\widehat{A}) \leq \sum_{i=1}^{\min(\dx,\dy)} D_{ii}\Sigma_{ii} + \frac{1}{2}\log(1 - D_{ii}^2),$$
which is a sum of concave functions for each singular value $D_{ii}$. Thus, the objective is maximized at the solutions of the equation $\Sigma_{ii} - D_{ii}/(1 - D_{ii}^2) = 0$ for $i = 1,\dots,\min(\dx,\dy)$. Rearranging into a quadratic equation, the objective is maximized at the positive root of the equation
$$D_{ii}^2\Sigma_{ii} + D_{ii} - \Sigma_{ii} = 0,$$
whose solution is given by $D_{ii} = h(\Sigma_{ii})$ for the function $h$ in Definition~\ref{def:sv_shrinkage}.

In the rank-constrained setting, the same result follows with $D_{ii} = 0$ for $i > r$.
\end{proof}

\begin{proof}[Proof of Corollary~\ref{corr:marginals_joint_loss}]
For any parameter $A$, the marginal distribution of the parameterized model $\pmeas$ in~\eqref{eq:joint_probability_modelI} is given by
\begin{align*} 
\pmeas_{\x}(d\x;A) &= \mathcal{N}(0, \cov_{\x\x} + \cov_{\x\x} A(\cov_{\y\y}^{-1} - A^\top\cov_{\x\y}A)^{-1} A^\top \cov_{\x\x}),
\end{align*}
where the covariance follows from the Schur complement of the inverse covariance of $\pmeas$ for choices of $A$ so that $\cov_{\y\y}^{-1} - A^\top\cov_{\x\y}A$ is invertible. In particular, for any parameter of the form 
\begin{equation} \label{eq:Aparameter_factorization}
A = \cov_{\x\x}^{-1/2} U D V^\top \cov_{\y\y}^{-1/2}
\end{equation}
where $U$ and $V$ are unitary matrices and $D$ is a diagonal matrix, the marginal distribution of $\pmeas_\x$ is
\begin{align} \label{eq:marginal_dist_approx_factorization}
\pmeas_{\x}(d\x;A) = \mathcal{N}(0, \cov_{\x\x}^{1/2}U(I_{\dx} - D^2)^{-1}U^\top\cov_{\x\x}^{1/2}),
\end{align}
where the entries of $D$ must satisfy $|D_{ii}| < 1$ for the covariance to be well defined.

The optimal parameters $A^*_{\mathsf{cond}}$ in~\eqref{eq:Gaussian_cond_cosine_minimizer_unconstrained} and $A^*_{\mathsf{joint}}$ in~\eqref{eq:JointMin} that minimize the conditional and joint loss functions, respectively, both have the form in~\eqref{eq:Aparameter_factorization}. In particular, letting $U\Sigma V$ denote the SVD of $\cov_{\x\x}^{-1/2}\cov_{\x\y}\cov_{\y\y}^{-1/2}$, the optimal parameters are given by
\begin{align*}
A^*_{\mathsf{cond}} &= \cov_{\x\x}^{-1/2} U \Sigma V^\top \cov_{\y\y}^{-1/2} \\
A^*_{\mathsf{joint}} &= \cov_{\x\x}^{-1/2} U h(\Sigma) V^\top \cov_{\y\y}^{-1/2}.
\end{align*}
Substituting $D = \Sigma$ and $D = h(\Sigma)$ in~\eqref{eq:marginal_dist_approx_factorization} yields the  marginal distributions in~\eqref{eq:marginal_dist_approx_cond} and~\eqref{eq:marginal_dist_approx_joint} for the parameters minimizing the conditional and joint losses, respectively. Moreover, we note that $D = 0$ or equivalently $A = 0$ yields the marginal distribution of the data distribution $\mu_\x(d\x) = \mathcal{N}(0, \cov_{\x\x})$.

Lastly, using the property $0 \leq h(\sigma) < \sigma$ for the function $h$ in Definition~\ref{def:sv_shrinkage}, we have that $I \prec (I - h(\Sigma)^2)^{-1} \prec (I - \Sigma^2)^{-1}$. Thus, it follows that
$$\cov_{\x\x} = \cov_{\x\x}^{1/2}U U^\top\cov_{\x\x}^{1/2} \prec \cov_{\x\x}^{1/2}U(I_{\dx} - \h(\Sigma)^2)^{-1}U^\top\cov_{\x\x}^{1/2} \prec \cov_{\x\x}^{1/2}U(I_{\dx} - \Sigma^2)^{-1}U^\top\cov_{\x\x}^{1/2}.$$
That is, the marginal covariance corresponding to $A^*_{\mathsf{joint}}$ is strictly closer to the true marginal covariance of $\mu_{\x}$ than the marginal covariance corresponding to $A^*_{\mathsf{cond}}$ in the cone of positive definite matrices.
\end{proof}

\begin{remark} \label{rem:G1} The loss function~\eqref{eq:objective_linear_encoder} can also be derived by noting that KL divergence between two multivariate Gaussians with the same covariance, which we denote by $\JCOND(A)$, is given by squared and weighted $L^2$ norms of the errors in the mean for both the $\x|\y$ and $\y|\x$ conditionals; see Lemma~\ref{lem:KLdivergence_Gaussians}.  Then, we have
\begin{align*}
\JCOND(A) &= \frac{1}{4}\mathbb{E}_{\y \sim \mu_\y} \left|\cov_{\x\x} A\y - \cov_{\x\y}\cov_{\y\y}^{-1}\y\right|_{\cov_{\x\x}}^2 + \frac{1}{4}\mathbb{E}_{\x \sim \mu_\x}\left|\cov_{\y\y} A^\top \x - \cov_{\y\x}\cov_{\x\x}^{-1}\x\right|_{\cov_{\y\y}}^2 \\
&= \frac{1}{4}\Tr\bigl((\cov_{\x\x}A - \cov_{\x\y}\cov_{\y\y}^{-1})^\top \cov_{\x\x}^{-1}(\cov_{\x\x}A - \cov_{\x\y}\cov_{\y\y}^{-1})\cov_{\y\y}\bigr) \\
&\quad + \frac{1}{4}\Tr\bigl((\cov_{\y\y}A^\top - \cov_{\y\x}\cov_{\x\x}^{-1})^\top \cov_{\y\y}^{-1}(\cov_{\y\y}A^\top - \cov_{\y\x}\cov_{\x\x}^{-1})\cov_{\x\x}\bigr) \\
&= \frac{1}{2}\Tr(\cov_{\y\y}^{-1}\cov_{\y\x}\cov_{\x\x}^{-1}\cov_{\x\y}) + \frac{1}{2}\LCOND(A),
\end{align*}
where $\LCOND(A)$ 
corresponds to the loss function in~\eqref{eq:objective_linear_encoder}, which has the equivalent form 
\begin{equation*} %
\LCOND(A) \coloneqq  \Tr(A^\top\cov_{\x\x}A\cov_{\y\y}) -\Tr(A\cov_{\x\y}) - \Tr(A^\top\cov_{\y\x}).
\end{equation*}

Similarly, the generalized loss function in Optimization Problem~\ref{do:J1} for the KL divergence between the $\x|\y$ and $\y|\x$ conditionals given weighting parameters $\lambda_\x,\lambda_\y \in \R_+$ has the form
    $$\LCOND(A;\lambda_\x,\lambda_\y) = \frac{\lambda_\x + \lambda_\y}{2}\left[\Tr(A^\top\cov_{\x\x}A\cov_{\y\y}) - \Tr(A\cov_{\x\y}) - \Tr(A^\top\cov_{\y\x})\right].$$
    Using general weighting $\lambda_\x$ and $\lambda_\y$  only results in a scaling of the  loss arising in Theorem~\ref{thm:Gaussian_solution_cosine} when $\lambda_\x = \lambda_\y = 1$. Thus, for any values $\lambda_\x,\lambda_\y$, $\LCOND(A;\lambda_\x,\lambda_\y)$ has the minimizer $A^\ast$ in~\eqref{eq:Gaussian_cond_cosine_minimizer_unconstrained} over all matrices of size $\dx \times \dy$ and the minimizer $A_r^\ast$ in~\eqref{eq:Gaussian_cond_cosine_minimizer_constrained} over rank-r matrices.
\end{remark}

\begin{remark}  \label{rem:G2} In the empirical setting defined by \eqref{eq:empirical}, the loss for matrix $A$ is given by
\begin{align*}
\LCONDN(A) &= \Tr(A^\top \widehat{\cov}_{\x\x}A\widehat{\cov}_{\y\y}) - \Tr(A\widehat{\cov}_{\x\y}) - \Tr(A^\top \widehat{\cov}_{\y\x}), \text{ where} \\
\widehat{\cov} &\coloneqq \frac{1}{N} \sum_{i=1}^N (\x^i,\y^i) \otimes (\x^i,\y^i).
\end{align*}
This has the same form as the population-level loss with the covariances replaced by their empirical counterparts. 
Moreover, assuming that the product of empirical covariances $\widehat\cov_{\x\x}^{-1}\widehat\cov_{\x\y}\widehat\cov_{\y\y}^{-1}$ is invertible, 
then the solution that minimizes $\LCONDN$ is also given by Theorem~\ref{thm:Gaussian_solution_cosine} with the covariances replaced by their empirical counterparts. 
\end{remark}

\begin{remark} \label{rem:G3}
After identifying $A,B$, the original encoders $G,H$ are identifiable without the rank constraint assuming that $\dx \leq \dy$. That is, $G \in \R^{\dx \times \dx}$ can be computed from a square root of $G^\top G = B^\ast$ given that $B^\ast$ is strictly positive definite. Moreover, given that $G$ is invertible, we can solve the equation $G^\top H = A^\ast$ for $H$. We note that this procedure does not define a unique solution for $G,H$ unless we seek the positive-definite square root of $B^\ast$.
\end{remark}

\section{Useful Identities}
\label{app:B}

\begin{lemma}[\cite{pardo2018statistical}] \label{lem:KLdivergence_Gaussians} 
Let $C_1, C_2 \in \R^{d \times d}$ be symmetric positive definite matrices and $m_1,m_2 \in \R^d$ be mean vectors. The Kullback-Leibler divergence from $\mathcal{N}(m_2,C_2)$ to $\mathcal{N}(m_1,C_1)$ is 
\begin{align*}
    \dkl \bigl(\mathcal{N}(m_1,C_1)||\mathcal{N}(m_2,C_2) \bigr) &= \frac{1}{2}\left[\Tr(C_2^{-1}C_1) - d + \log\frac{|C_2|}{|C_1|} + \|m_2 - m_1\|_{C_2}^2 \right]. %
\end{align*} 
\end{lemma}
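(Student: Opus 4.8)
The plan is to compute the divergence directly from its definition as the expected log-density ratio under the first argument, using the explicit Gaussian densities together with the cyclic property of the trace. Write $p_i$ for the Lebesgue density of $\mathcal{N}(m_i,C_i)$, so that $\log p_i(x) = -\tfrac{d}{2}\log(2\pi) - \tfrac12\log|C_i| - \tfrac12(x-m_i)^\top C_i^{-1}(x-m_i)$, and recall $\dkl(\mathcal{N}(m_1,C_1)||\mathcal{N}(m_2,C_2)) = \mathbb{E}_{x\sim\mathcal{N}(m_1,C_1)}[\log p_1(x) - \log p_2(x)]$. The $-\tfrac{d}{2}\log(2\pi)$ terms cancel, leaving $\tfrac12\log\frac{|C_2|}{|C_1|}$ plus the expectation of $\tfrac12\big[(x-m_2)^\top C_2^{-1}(x-m_2) - (x-m_1)^\top C_1^{-1}(x-m_1)\big]$.

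First I would handle the term involving $C_1$: using $(x-m_1)^\top C_1^{-1}(x-m_1) = \Tr\!\big(C_1^{-1}(x-m_1)(x-m_1)^\top\big)$ and $\mathbb{E}[(x-m_1)(x-m_1)^\top] = C_1$ gives $\mathbb{E}[(x-m_1)^\top C_1^{-1}(x-m_1)] = \Tr(C_1^{-1}C_1) = d$. For the term involving $C_2$ I would decompose $x - m_2 = (x-m_1) + (m_1-m_2)$ and expand the quadratic form into three pieces: the cross term $2(m_1-m_2)^\top C_2^{-1}(x-m_1)$ has zero mean since $\mathbb{E}[x-m_1] = 0$; the pure $(x-m_1)$ piece contributes $\Tr(C_2^{-1}C_1)$ by the same trace argument; and the constant piece contributes $(m_2-m_1)^\top C_2^{-1}(m_2-m_1) = \|m_2-m_1\|_{C_2}^2$ in the weighted-norm notation of \Cref{ssec:notation}. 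Assembling everything yields $\dkl = \tfrac12\log\frac{|C_2|}{|C_1|} + \tfrac12\big(\Tr(C_2^{-1}C_1) + \|m_2-m_1\|_{C_2}^2 - d\big)$, which is the claimed identity.

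There is no genuine obstacle here; the computation is routine. The only points requiring care are the orientation convention for the KL divergence (the expectation is taken under the \emph{first} argument $\mathcal{N}(m_1,C_1)$, which is why $C_1$ sits inside the trace and the log-determinant ratio carries $|C_2|$ in the numerator) and the weighting convention $\|w\|_A^2 = w^\top A^{-1}w$ from \Cref{ssec:notation}, so that $\|m_2-m_1\|_{C_2}^2 = (m_2-m_1)^\top C_2^{-1}(m_2-m_1)$. Since the result is standard and quoted from \cite{pardo2018statistical}, it would also suffice to cite that reference, but the self-contained derivation above is short enough to include in full.
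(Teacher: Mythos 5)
Your derivation is correct: the trace trick, the decomposition $x-m_2=(x-m_1)+(m_1-m_2)$, and the sign/orientation conventions (expectation under the first argument, $\|w\|_{A}^2=w^\top A^{-1}w$ as in Subsection \ref{ssec:notation}) all check out and reproduce the stated identity. The paper itself gives no proof of this lemma, citing \cite{pardo2018statistical} instead, so your short self-contained computation is simply the standard argument that the cited reference contains; either including it or citing, as you note, would suffice.
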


\begin{lemma} \label{lem:Gaussian_expintegral} Let $\z \in \R^d$ follow a multivariate Gaussian distribution $\z \sim \pmeas = \mathcal{N}(m,\Lambda).$ Then, for any matrix $B \in \R^{d \times d}$ and vector $c \in \R^d$, we have
$$\mathbb{E}_{z \sim \nu}\left[\exp\left(\frac{1}{2} z^\top Bz + c^\top z \right)\right] = \frac{1}{\sqrt{|I - \Lambda B|}} \exp \left(\frac{1}{2}(c + \Lambda^{-1}m)^\top (\Lambda^{-1} - B)^{-1}(c + \Lambda^{-1}m) - m^\top \Lambda^{-1}m \right).$$
\end{lemma}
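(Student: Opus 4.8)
The plan is a direct Gaussian integral computation by completing the square. Throughout I assume $\Lambda \succ 0$, so that $\nu$ has a Lebesgue density, and — as the hypothesis making both sides finite — that $\Lambda^{-1} - B \succ 0$; the latter is exactly the regime in which the left-hand expectation converges and in which $|I - \Lambda B|^{-1/2}$ and $(\Lambda^{-1}-B)^{-1}$ are well defined. First I would write the expectation as an integral against the Gaussian density and merge the two exponents,
\begin{align*}
\mathbb{E}_{z\sim\nu}\Bigl[\exp\Bigl(\tfrac12 z^\top B z + c^\top z\Bigr)\Bigr] &= \frac{1}{(2\pi)^{d/2}|\Lambda|^{1/2}}\int_{\R^d}\exp\Bigl(\tfrac12 z^\top B z + c^\top z - \tfrac12(z-m)^\top\Lambda^{-1}(z-m)\Bigr)\,dz,
\end{align*}
and then expand $(z-m)^\top\Lambda^{-1}(z-m)$ and collect terms: the exponent becomes $-\tfrac12 z^\top M z + b^\top z - \tfrac12 m^\top\Lambda^{-1}m$ with $M := \Lambda^{-1}-B$ and $b := c + \Lambda^{-1}m$.

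The key step is the completion of the square $-\tfrac12 z^\top M z + b^\top z = -\tfrac12(z - M^{-1}b)^\top M(z-M^{-1}b) + \tfrac12 b^\top M^{-1}b$, valid because $M$ is invertible. The residual integral is the standard Gaussian normalization $\int_{\R^d}\exp\bigl(-\tfrac12(z-\mu)^\top M(z-\mu)\bigr)\,dz = (2\pi)^{d/2}|M|^{-1/2}$. Multiplying by the prefactor $(2\pi)^{-d/2}|\Lambda|^{-1/2}$ and using multiplicativity of the determinant gives $|\Lambda M|^{-1/2} = |\Lambda(\Lambda^{-1}-B)|^{-1/2} = |I-\Lambda B|^{-1/2}$, while the leftover exponential is $\exp\bigl(\tfrac12 b^\top M^{-1}b - \tfrac12 m^\top\Lambda^{-1}m\bigr)$, which upon substituting $b$ and $M$ is the asserted expression (with the exponent's constant term being $-\tfrac12 m^\top\Lambda^{-1}m$, consistent with the $m=0$ specializations of the lemma used elsewhere in the paper).

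There is no genuine obstacle here — the argument is routine linear algebra — and the only points requiring care are (i) the determinant bookkeeping $|\Lambda|^{-1/2}|\Lambda^{-1}-B|^{-1/2} = |I-\Lambda B|^{-1/2}$ and (ii) recording the convergence hypothesis $\Lambda^{-1}-B\succ 0$, without which neither side makes sense. An alternative route that avoids completing the square by hand is the whitening substitution $z = m + \Lambda^{1/2}w$ with $w\sim\mathcal N(0,I)$: this reduces the claim to the standard moment-generating-function identity $\mathbb{E}_w[\exp(\tfrac12 w^\top\widetilde B w + \widetilde c^\top w)] = |I-\widetilde B|^{-1/2}\exp(\tfrac12\widetilde c^\top(I-\widetilde B)^{-1}\widetilde c)$ for $\widetilde B = \Lambda^{1/2}B\Lambda^{1/2}$ and $\widetilde c = \Lambda^{1/2}(c+\Lambda^{-1}m)$, after which $|I-\widetilde B| = |I-\Lambda B|$ and $\Lambda^{1/2}(I-\widetilde B)^{-1}\Lambda^{1/2} = (\Lambda^{-1}-B)^{-1}$ translate everything back to the stated form.
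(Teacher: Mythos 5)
Your proof is correct (and sensibly records the convergence hypothesis $\Lambda^{-1}-B\succ 0$, which the paper leaves implicit) and follows essentially the same route as the paper's: merge the exponents, complete the square in $z$ with $M=\Lambda^{-1}-B$ and $b=c+\Lambda^{-1}m$, evaluate the Gaussian normalization, and use the determinant identity $|\Lambda|^{-1/2}\,|\Lambda^{-1}-B|^{-1/2}=|I-\Lambda B|^{-1/2}$. One remark: the constant you obtain, $-\tfrac12 m^\top\Lambda^{-1}m$, is the correct one, whereas the lemma as stated (and the paper's own completion-of-the-square step) carries $-m^\top\Lambda^{-1}m$; this factor-of-two slip is immaterial in the paper because the lemma is only ever invoked with $m=0$.
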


\begin{proof} Completing the square we have
\begin{align*}
\mathbb{E}_{\z \sim \nu}\left[\exp\left(\frac{1}{2} z^\top Bz + c^\top z \right)\right] &= \int \frac{1}{\sqrt{(2\pi)^d|\Lambda|}}\exp\left(-\frac{1}{2}(z - m)^\top \Lambda^{-1}(z - m)\right) \exp\left(\frac{1}{2} z^\top Bz + c^\top z \right) dz \\
&= \int \frac{1}{\sqrt{(2\pi)^d|\Lambda|}}\exp\left(-\frac{1}{2}(z - m^*)^\top (\Lambda^{-1} - B)(z - m^*)\right) \times \\
&\qquad \exp\left(\frac{1}{2} (m^*)^\top (\Lambda^{-1} - B)m^* - m^\top \Lambda^{-1} m \right) dz,
\end{align*}
where $m^*= (\Lambda^{-1} - B)^{-1}(c + \Lambda^{-1}m)$. Computing the normalizing constant in closed form and using the property for the matrix-determinant $|\Lambda| = 1/|\Lambda^{-1}|$ we have
\begin{align*}
\mathbb{E}_{\z \sim \nu}\left[\exp\left(\frac{1}{2} z^\top Bz + c^\top z \right)\right] &= \sqrt{\frac{|\Lambda^{-1}|}{|\Lambda^{-1} - B|}} \exp\left(\frac{1}{2} (m^*)^\top (\Lambda^{-1} - B)m^* - m^\top \Lambda^{-1} m \right). %
\end{align*}
Substituting the form for $m^*$ gives us the final result.
\end{proof}

\end{document}